\def\eqref#1{equation~\ref{#1}}
\def\1{\bm{1}}
\def\ber{\bm{\epsilon}}
\def\rvb{{\mathbf{b}}}
\def\rve{{\mathbf{e}}}
\def\rvf{{\mathbf{f}}}
\def\rvg{{\mathbf{g}}}
\def\rvh{{\mathbf{h}}}
\def\rvu{{\mathbf{i}}}
\def\rvq{{\mathbf{q}}}
\def\rvr{{\mathbf{r}}}
\def\rvu{{\mathbf{u}}}
\def\rvv{{\mathbf{v}}}
\def\rvx{{\mathbf{x}}}
\def\rvy{{\mathbf{y}}}
\def\rvz{{\mathbf{z}}}
\def\rmB{{\mathbf{B}}}
\def\rmD{{\mathbf{D}}}
\def\rmE{{\mathbf{E}}}
\def\rmF{{\mathbf{F}}}
\def\rmG{{\mathbf{G}}}
\def\rmH{{\mathbf{H}}}
\def\rmI{{\mathbf{I}}}
\def\rmJ{{\mathbf{J}}}
\def\rmQ{{\mathbf{Q}}}
\def\rmR{{\mathbf{R}}}
\def\rmU{{\mathbf{U}}}
\def\rmV{{\mathbf{V}}}
\def\rmW{{\mathbf{W}}}
\def\rmZ{{\mathbf{Z}}}
\def\vy{{\bm{y}}}
\def\mB{{\bm{B}}}
\def\mH{{\bm{H}}}
\def\mI{{\bm{I}}}
\def\mM{{\bm{M}}}
\def\mQ{{\bm{Q}}}
\def\mR{{\bm{R}}}
\def\mW{{\bm{W}}}
\def\mZ{{\bm{Z}}}
\DeclareMathAlphabet{\mathsfit}{\encodingdefault}{\sfdefault}{m}{sl}
\SetMathAlphabet{\mathsfit}{bold}{\encodingdefault}{\sfdefault}{bx}{n}
\definecolor{darkblue}{rgb}{0.0, 0.0, 0.5}
\definecolor{cobalt}{rgb}{0.0, 0.28, 0.67}
\definecolor{cardinal}{rgb}{0.77, 0.12, 0.23}
\definecolor{tropicalrainforest}{rgb}{0.0, 0.46, 0.37}
\definecolor{viridian}{rgb}{0.25, 0.51, 0.43}
\definecolor{sapgreen}{rgb}{0.31, 0.49, 0.16}
\definecolor{sacramentostategreen}{rgb}{0.0, 0.34, 0.25}
\definecolor{officegreen}{rgb}{0.0, 0.5, 0.0}
\newcommand{\diago}{\text{diag}}
\newcommand{\Rgen}{\mathbf{R}_{\rvg\ber}^{(k)}}
\newcommand{\hRgen}{\hat{\mathbf{R}}_{\rvg\ber}^{(k)}}
\theoremstyle{plain}
\newtheorem{theorem}{Theorem}[section]
\newtheorem{lemma}[theorem]{Lemma}
\theoremstyle{definition}
\theoremstyle{remark}
\title{Error Broadcast and Decorrelation as a Potential Artificial and Natural Learning Mechanism}
\newcommand\extralabel[2]{{\edef\@currentlabel{\@currentlabel#2}\label{#1}}}
\providecommand\phantomcaption{\caption@refstepcounter\@captype}
\author{%
  Mete Erdogan\textsuperscript{1,2}\quad
  Cengiz Pehlevan\textsuperscript{3,4,5}\quad
  Alper T. Erdogan\textsuperscript{1,2}\\[5pt]
  \textsuperscript{1}KUIS AI Center, Koc University, Turkey \quad \textsuperscript{2}EEE Department, Koc University, Turkey\\
    \textsuperscript{3}John A. Paulson School of Engineering \& Applied Sciences, Harvard University, USA\\
    \textsuperscript{4}Kempner Institute for the Study of Natural and Artificial Intelligence, Harvard University, USA\\
  \textsuperscript{5}Center for Brain Science, Harvard University, USA\\ [3pt]
  \texttt{\{merdogan18, alperdogan\}@ku.edu.tr},\quad
  \texttt{cpehlevan@seas.harvard.edu}
}
\begin{document}

\etocdepthtag.toc{mtchapter}
\etocsettagdepth{mtchapter}{subsection}
\etocsettagdepth{mtappendix}{none}

\newcommand{\fix}{\marginpar{FIX}}
\newcommand{\new}{\marginpar{NEW}}

\maketitle

\begin{abstract}

We introduce \textit{Error Broadcast and Decorrelation} (EBD), a novel learning framework for neural networks that addresses credit assignment by directly broadcasting output errors to individual layers, circumventing weight transport of backpropagation. EBD is rigorously grounded in the stochastic orthogonality property of Minimum Mean Square Error estimators. This fundamental principle states that the error of an optimal estimator is orthogonal to functions of the input. Guided by this insight, EBD defines layerwise loss functions that directly penalize correlations between layer activations and output errors, thereby establishing a principled foundation for error broadcasting. This theoretically sound mechanism  naturally leads to the experimentally observed three-factor learning rule and integrates with biologically plausible frameworks to enhance performance and plausibility. Numerical experiments demonstrate EBD’s competitive or better performance against other error-broadcast methods on benchmark datasets. Our findings establish EBD as an efficient, biologically plausible, and principled alternative for neural network training. The implementation is available at: \href{https://github.com/meterdogan07/error-broadcast-decorrelation}{\texttt{https://github.com/meterdogan07/error-broadcast-decorrelation}}. \looseness=-2 %

\end{abstract}

\section{Introduction}
\label{sec:introduction}
Neural networks are dominant mathematical models for biological and artificial intelligence. A major challenge in these networks is determining how to adjust individual synaptic weights to optimize a global learning objective, known as the \textit{credit assignment problem}.
In Artificial Neural Networks (ANNs), the most common solution is the \textit{backpropagation} (BP) algorithm \citep{Rumelhart:86}.

In contrast to ANNs, the mechanisms for credit assignment in biological neural networks remain poorly understood. While backpropagation is highly effective for training ANNs, it is not directly applicable to biological systems because it relies on biologically implausible assumptions. In its standard form, backpropagation propagates output errors backward through a separate pathway, reusing the same synaptic weights as in the forward pass (Figure~\ref{fig:backpropagation}). This requirement for weight symmetry is not supported by biological evidence \citep{crick1989recent}. Although many experimentally motivated models of local synaptic plasticity have been proposed \citep{magee2020synaptic}, a biologically feasible theory of credit assignment that integrates these mechanisms remains unresolved.

To address the credit assignment problem in biological networks, researchers have proposed methods known as \textit{error broadcasting} \citep{williams1992simple, werfel2003learning, Nokland:16, Baldi:18, whittington2019theories, Clark:21}. These methods  involve broadcasting the global output error directly to all layers, often through random projections or fixed pathways, without relying on precise backward paths or symmetric weights (as summarized in Section~\ref{sec:relatedwork}). This eliminates the weight symmetry issue inherent in backpropagation. Error broadcasting offers practical benefits for hardware implementation; recent work \citep{wang2024optical} demonstrates potential for efficient neural network execution. However, despite encouraging progress in both theory and application \citep{bordelon2022influence, launay2019principled}, error broadcasting still needs stronger theoretical foundations to fully validate and enhance its training effectiveness. \looseness=-1

In this context, we introduce a novel learning framework termed the \textit{Error Broadcast and Decorrelation} (EBD), which builds on basic error broadcasting by introducing layer-specific objectives grounded in estimation theory. The fundamental principle of EBD is to adjust network weights to minimize the correlation between broadcast output errors and the activations of each layer. This approach is rigorously grounded in Minimum Mean Square Error (MMSE) estimation, where an optimal estimator's error is orthogonal to any measurable function of its input. We leverage this orthogonality principle for EBD, defining layer-specific training losses to drive layer activations (functions of the network input) towards orthogonality with the broadcast error. This enables a more distributed mechanism for credit assignment, alternative to approaches relying solely on an output-defined loss and end-to-end error propagation.

\begin{figure}[ht]
    \centering
    \subfloat[BP\label{fig:backpropagation}]{%
        \includegraphics[trim={2.3cm 4cm 50.0cm 0.0cm}, clip, height=0.255\textheight]{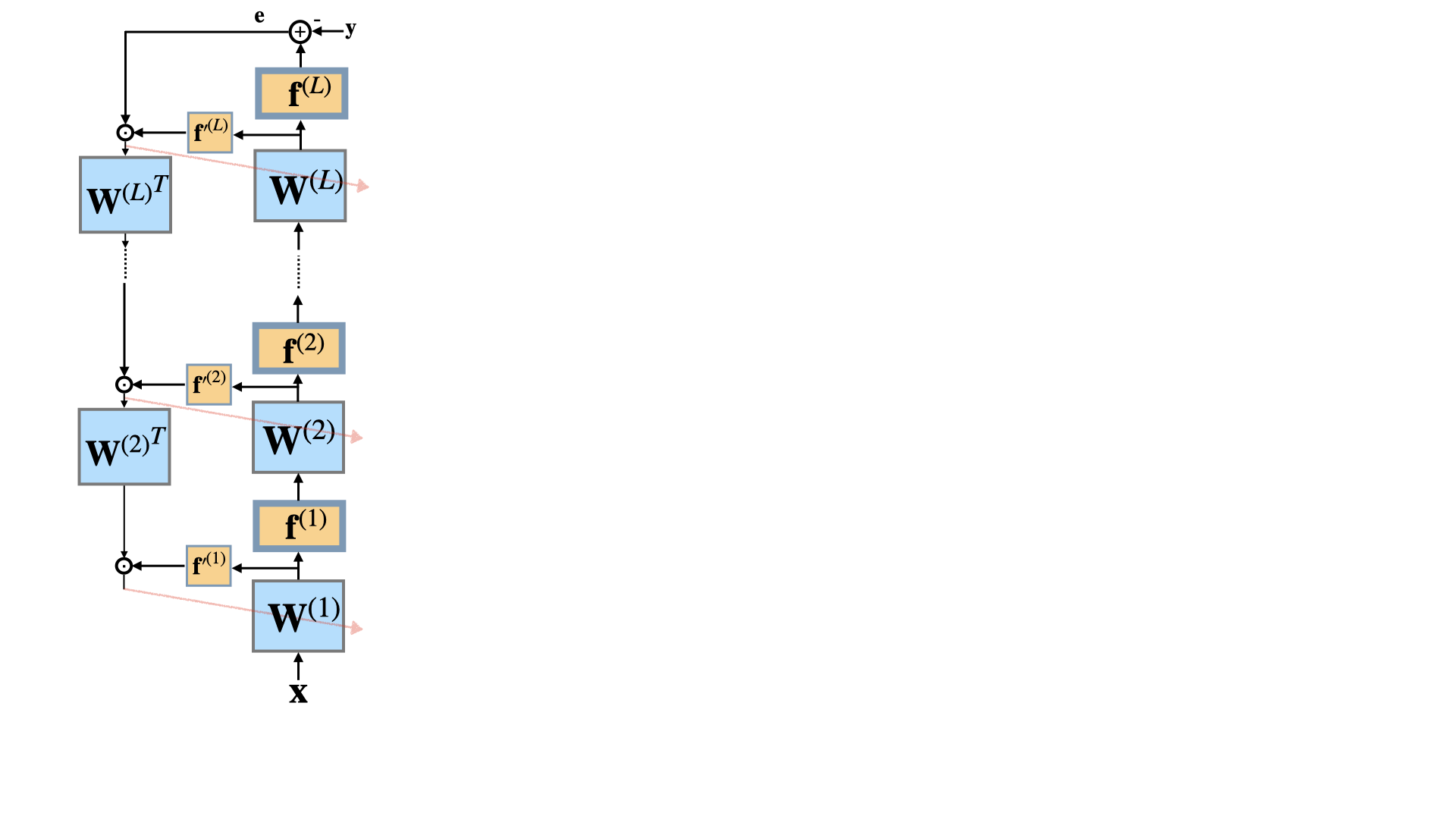}%
    }%
    \hspace{15pt}
    \subfloat[EBD\label{fig:errorbroadcast}]{%
        \includegraphics[trim={2.3cm 4cm 50.0cm 0.0cm}, clip, height=0.255\textheight]{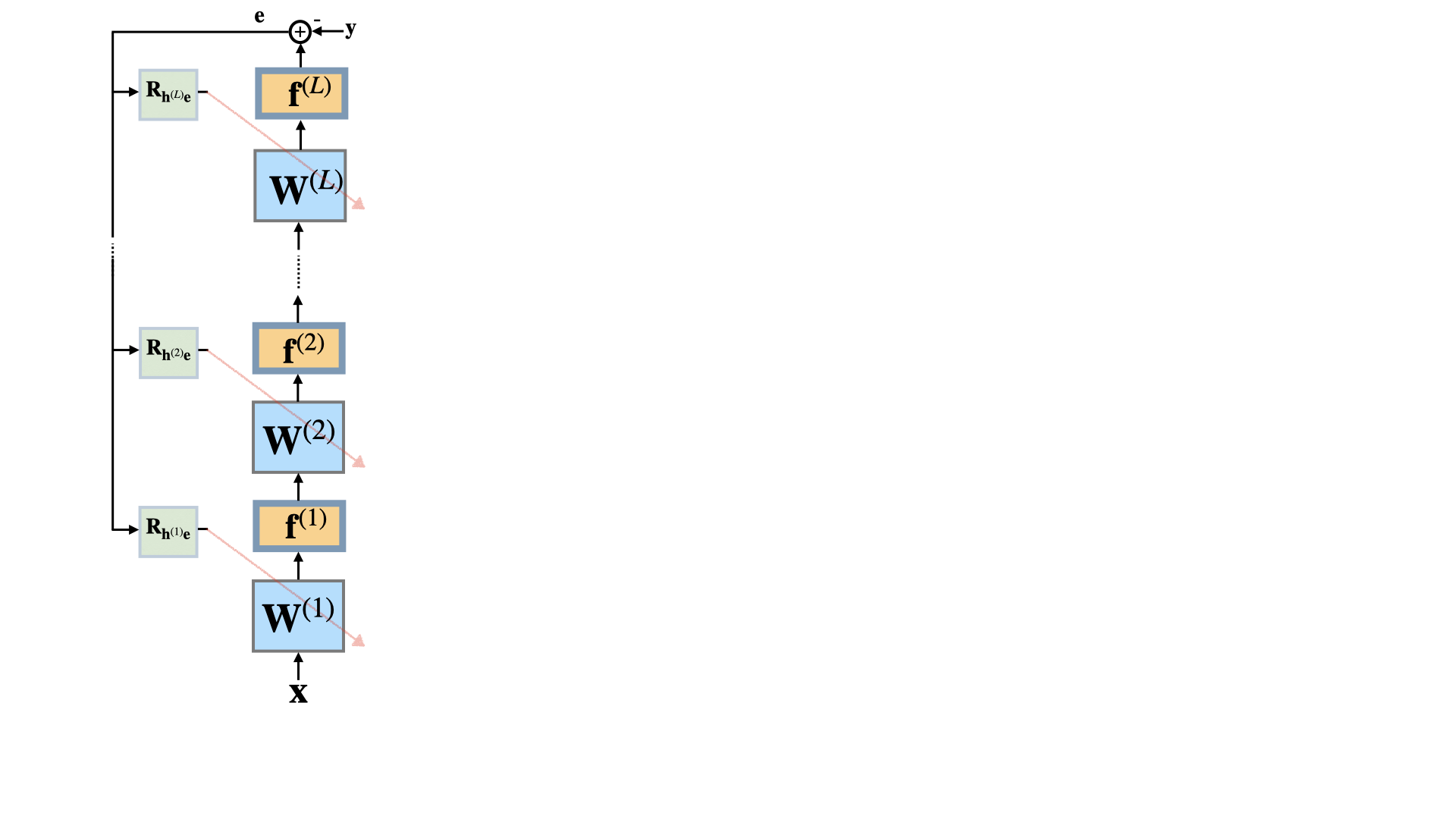}%
    }%
    \hspace{15pt}
    \subfloat[Correlation between layer activations and output error.\label{fig:corrplot}]{%
        \includegraphics[trim={0cm 0cm 0.0cm 0.0cm}, clip, height=0.255\textheight,width=0.255\textwidth]{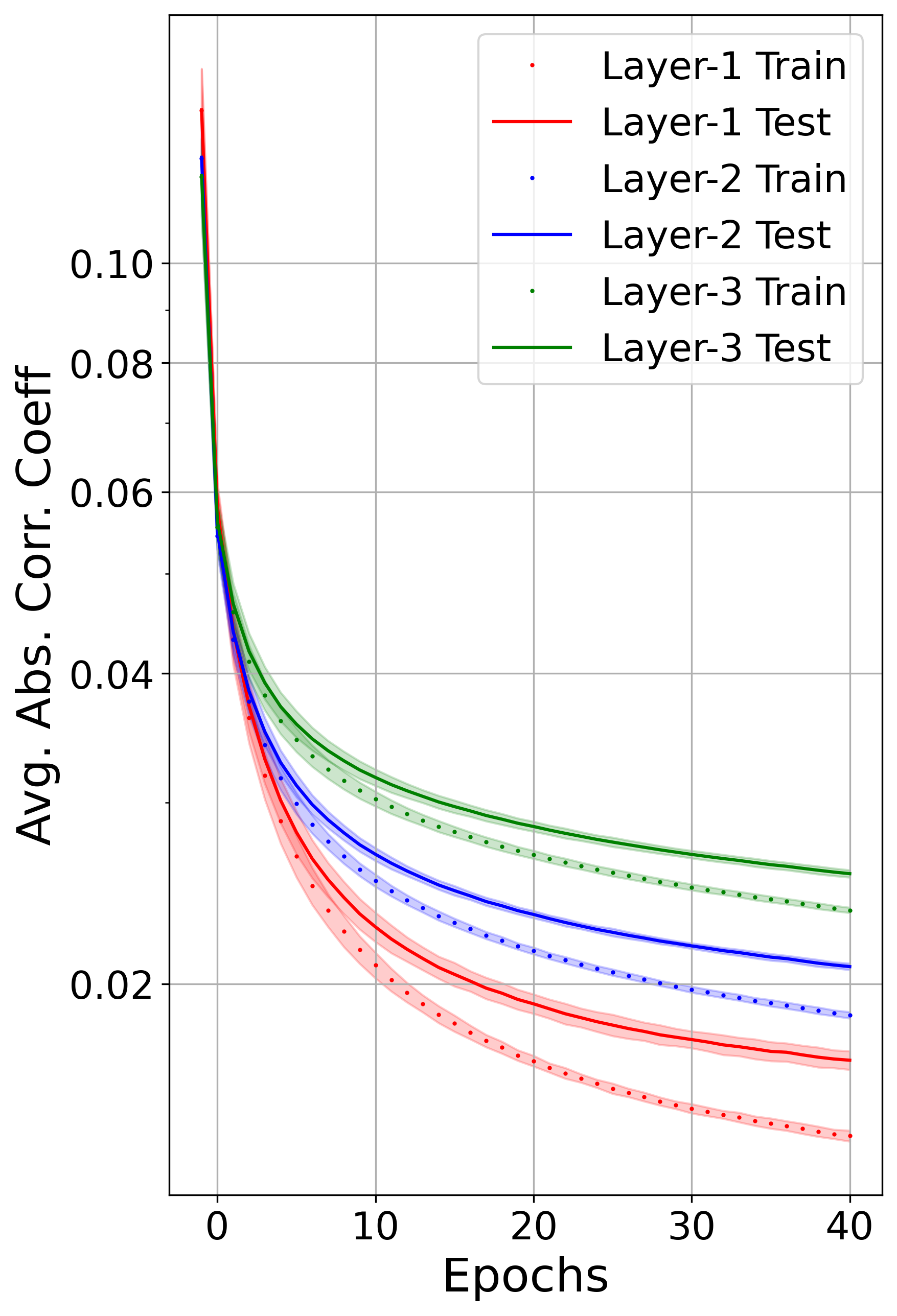}%
    }
    \caption{Comparison of error feedback mechanisms and correlation dynamics in multilayer perceptrons. (a) Backpropagation (BP) transmits errors sequentially through symmetric backward paths. (b) Error Broadcast and Decorrelation (EBD) broadcasts output errors to all layers using error--activation cross-correlations. (c) Average absolute correlation between layer activations and the output error during BP training on CIFAR-10 with MSE loss, illustrating its decline over epochs (see Appendix~\ref{sec:correlation_calculation}).}
    \label{fig:bothfigures}
\end{figure}

EBD directly broadcasts output errors to layers, simplifying credit assignment and enabling parallel synaptic updates. It offers two key advantages for biologically realistic networks. First, optimizing EBD's loss naturally leads to experimentally observed three-factor learning rules \citep{gerstner2018eligibility, kusmierz2017learning}, which extend Hebbian plasticity by incorporating a neuromodulatory signal (the third factor) modulating synaptic updates based on pre- and postsynaptic activity. Second, by broadcasting errors directly to layers as shown in Figure \ref{fig:errorbroadcast}, it overcomes the weight transport problem inherent in backpropagation and some more biologically plausible credit assignment approaches \citep{whittington2017approximation,qin2021contrastive}.

We demonstrate EBD's utility by applying it to both artificial and biologically realistic neural networks. Benchmark results show EBD matching/exceeding state-of-the-art error-broadcast techniques. Its successful application to a 10-layer biologically plausible network (CorInfoMax-EBD) provides initial evidence of depth scalability for more complex tasks.

\subsection{Related work and contributions}
\label{sec:relatedwork}

Several frameworks have been proposed as alternatives to the backpropagation algorithm for modeling credit assignment in biological networks \citep{whittington2019theories}. These include predictive coding \citep{rao1999predictive,whittington2017approximation,golkar2022constrained}, similarity matching \citep{qin2021contrastive,bahroun2023unlocking}, time-contrastive approaches \citep{ackley1985learning, o1996biologically,scellier2017equilibrium}, forward-only methods \citep{hinton2022forward,farinha2023efficient,dellaferrera2022error}, target propagation \citep{le1986learning, bengio2014auto, lee2015difference}, random feedback alignment \citep{lillicrap2016random}, and learned feedback weights \citep{kolen1994backpropagation,ji2024deep}. Alternative strategies also seek to establish local learning rules by optimizing statistical objectives, such as the Hilbert-Schmidt Independence Criterion (HSIC) bottleneck \citep{ma2020hsic}.

Another significant alternative is error-broadcast methods, where output errors are directly transmitted to network layers without relying on precise backward pathways or symmetric weights. Two important examples of this approach are weight and node perturbation algorithms \citep{williams1992simple,dembo1990model,cauwenberghs1992fast,fiete2006gradient}, in which global error signals are broadcast to all network units. These signals reflect the change in overall error caused by individual perturbations in the network's weights or units. A more recent and prominent example of error broadcast is Direct Feedback Alignment (DFA) \citep{Nokland:16}. In DFA, the output errors are projected onto the hidden layers through fixed random weights, effectively replacing the symmetric backward weights required in traditional backpropagation. The core challenge of weight transport has been tackled by several other methods, many of which also rely on fixed random signals or avoid feedback entirely \citep{akrout2019deep, frenkel2021learning}. Encouragingly, a number of these biologically-plausible frameworks have demonstrated the ability to scale effectively to large datasets, underscoring their potential as viable training mechanisms \citep{xiao2019biologically}. This approach first emerged as a modification to the feedback alignment approach (which replaced the symmetric weights of the backpropagation algorithm with random ones). DFA has been extended and analyzed in several studies \citep{bartunov2018assessing, han2019efficient, launay2019principled, launay2020direct,bordelon2022influence}, demonstrating its potential in training neural networks with less biologically implausible mechanisms. \citet{Clark:21} introduced another broadcast approach for a network with vector units and nonnegative weights for which three factor learning based update rule is applied.

Our framework for error broadcasting differentiates itself through \begin{itemize}
\item  a \textbf{principled method} based on the orthogonality property of nonlinear MMSE estimators, \item error projection weights determined by the \textbf{cross-correlation} between the output errors and the layer activations as opposed to random weights of DFA, \item dynamic Hebbian updating of projection weights as opposed to fixed weights of DFA, \item updates involving \textbf{arbitrary nonlinear functions} of layer activities, encompassing a family of three-factor learning rules, \item the option to project layer activities forward to the output layer. \end{itemize} In summary, our approach provides a theoretical grounding for the error broadcasting mechanism and suggests ways to enhance its effectiveness in training networks.

\section{Error Broadcast and Decorrelation method}
\label{sec:EBDApproach}
\subsection{Problem statement}
\label{sec:DataModel}
To illustrate our approach, we first assume a multi-layer perceptron (MLP) network with $L$ layers. We label the input $\rvx = \mathbf{h}^{(0)} \in \mathbb{R}^{N^{(0)}}$ and layer activations $\mathbf{h}^{(k)}\in \mathbb{R}^{N^{(k)}}$ for $k=1, \ldots, L$, where $N^{(k)}$ is layer size. The layer activations are:
\begin{eqnarray}
    \rvh^{(k)}=f^{(k)}(\rvu^{(k)}), \hspace{0.3in} \rvu^{(k)}=\mathbf{W}^{(k)}\rvh^{(k-1)}+\rvb^{(k)},
    \label{eq:layer_activations} %
\end{eqnarray}
where  $k\in\{1, \ldots L\}$ is the layer index,  $f^{(k)}$ are activation functions, $\mathbf{W}^{(k)}$ weights, $\rvu^{(k)}$ preactivations and $\mathbf{b}^{(k)}$ biases.
We consider input-output pairs $(\rvx, \rvy)$ sampled from a joint distribution $P(\rvx, \rvy)$. The performance criterion is the mean square of the output error $\ber=\rvh^{(L)}-\rvy$, i.e., $\mathbb{E}_{P(\rvx, \rvy)}[\|\ber\|^2_2]$.

\subsection{Error Broadcast and Decorrelation loss functions}
\label{sec:EBDLoss}
To guide the training of our neural network (which aims to minimize this MSE), we draw inspiration from the fundamental principles of Minimum Mean Square Error (MMSE) estimation theory \citep{edition2002probability}. This theory defines an ideal estimator, denoted $\hat{\rvy}_*(\rvx)$, which achieves the absolute minimum possible MSE for a given joint data distribution $P(\rvx, \rvy)$. A crucial characteristic of this optimal estimator is its stochastic orthogonality property, which forms the theoretical cornerstone of our EBD approach.

Formally, considering input-output pairs $(\rvx, \rvy)$ drawn from $P(\rvx, \rvy)$, this optimal nonlinear MMSE estimator is given by $\hat{\rvy}_*(\rvx) = \mathbb{E}[\rvy|\rvx]$ (its derivation and properties as the optimal MSE-minimizing function are detailed in Appendix~\ref{sec:prelim}, Lemma~\ref{lemma:bestmmse}). 
Its estimation error $\ber_* = \rvy - \hat{\rvy}_*(\rvx)$ satisfies:
{\it 
\begin{align}
\label{eq:orthproperty}
    \mathbb{E}[\rvg(\rvx)\ber_*^T]=\mathbf{0}, 
\end{align}
for any properly measurable function $\rvg(\rvx)$ of the input $\rvx$ (see Appendix~\ref{sec:prelim}, Lemma~\ref{lemma:orthogonal}).}
This means $\ber_*$ is orthogonal to $\rvg(\rvx)$ (i.e., their expected outer product is zero).
While this orthogonality property, stated in Eq.~(\ref{eq:orthproperty}), is foundational, its application in constructing estimators has predominantly been in \textit{linear} MMSE estimation. In that context, the estimator $\hat{\rvy}(\rvx)$ is constrained to be a linear function of $\rvx$, and under the linearity constraint on the estimator, Eq.~(\ref{eq:orthproperty}) is restricted to a form where $\rvg(\rvx)=\rvx$. This restricted orthogonality condition has long been used to derive parameters for linear estimators, such as Wiener-Kolmogorov and Kalman filters \citep{kailath2000linear}. \looseness=-2

A key aspect of our work is to leverage the {\it full generality} of the orthogonality condition Eq.~(\ref{eq:orthproperty}) for obtaining  \textit{nonlinear} estimators. For such estimators, this condition holds for \textit{any} measurable function $\rvg(\rvx)$ and, crucially, is not only necessary but also \emph{sufficient} for MMSE optimality (as established in Appendix~\ref{sec:stoc_orth_cond_appendix_b}, Theorem~\ref{thm:sufficiency_appendix_b}). EBD distinctively employs this sufficiency as a constructive principle to train nonlinear estimators, specifically the neural network parameters.

We model the neural network (Eq.~(\ref{eq:layer_activations})) as a parameterized nonlinear estimator $f_{\Theta}(\rvx) = \rvh^{(L)}(\rvx;\Theta)$ and aim to satisfy Eq.~(\ref{eq:orthproperty}) for its output error $\ber = f_{\Theta}(\rvx) - \rvy$. We choose $\rvg(\rvx)$ as the network's hidden layer activations $\rvh^{(k)}(\rvx;\Theta^{(k)})$, where $\Theta^{(k)}=(\mathbf{W}^{(k)}, \mathbf{b}^{(k)})$. This choice is motivated because:\looseness=-1
\begin{itemize} \item[(i).] Since each hidden-layer activation is a nonlinear function of input $\rvx$, the output error of an optimal estimator should be stochastically orthogonal to those activations. Figure~\ref{fig:corrplot} illustrates this phenomenon by showing the evolution of the average absolute correlation between layer activations and the error signal during backpropagation training of an MLP with three hidden layers on the CIFAR-10 dataset, based on the MSE criterion. Similar correlation declining trends are also observed across different datasets and architectures (see Appendix~J). The declining correlation during MSE training reflects the MMSE estimator’s stochastic orthogonality of layer activations and output errors., \item[(ii).] $\rvh^{(k)}$ depends on layer parameters $\Theta^{(k)}$, enabling their direct updates via differentiation, \item[(iii).]if hidden‐layer activations form a “rich enough” set of functions of
$\rvx$ (as elaborated below), then enforcing error orthogonality to them implies orthogonality to "every" function of $\rvx$.\end{itemize}

Indeed, in Theorem \ref{thm:ebd_to_mmse_appendix_b} of Appendix \ref{sec:stoc_orth_cond_appendix_b}, we show that when the hidden-layer activations—say, those in the first layer—form a sufficiently rich basis (for example, becoming dense in $L_2(P_\rvx)$ as network width tends to infinity), enforcing that the output error $\ber$ be orthogonal to these activations naturally drives the estimator toward the true MMSE solution.
Accordingly, we aim to enforce zero correlation between the output error  $\ber$ and hidden layer activations, or more generally their nonlinear functions:
\begin{eqnarray}
\label{eq:layercrosscorr}
    \Rgen=\mathbb{E}[\rvg^{(k)}(\rvh^{(k)})\ber^T]=\mathbf{0}, \hspace{0.1in} k=0, \ldots, L, \mbox{ with the typical choice  } \rvg^{(k)}(\rvh^{(k)})=\rvh^{(k)}.
\end{eqnarray}

Building on the orthogonality property and its established sufficiency for optimality (Appendix~\ref{sec:stoc_orth_cond_appendix_b}, Theorem~\ref{thm:sufficiency_appendix_b}), we define layer-specific surrogate loss functions that enforce orthogonality conditions with respect to the hidden layer activations. %
As demonstrated in Section~\ref{sec:EBDAlgo}, these losses yield an alternative to backpropagation by broadcasting output errors directly to network nodes (Figure~\ref{fig:errorbroadcast}).

Specifically, based on the stochastic orthogonality condition in Eq.~(\ref{eq:layercrosscorr}), we propose minimizing the Frobenius norm of the cross-correlation matrices $\mathbf{R}^{(k)}_{\rvg\ber}$ as a replacement for the standard MSE loss. To this end, we define the estimated cross-correlation matrix between a function $g^{(k)}$ of layer activations and the output error for batch $m$ and layer $k$ as
\begin{eqnarray*}
    \hRgen[m]=\lambda \hRgen[m-1]+ \frac{1-\lambda}{B} \mathbf{G}^{(k)}[m]\mathbf{E}[m]^T, 
\end{eqnarray*}
where $m$ is the batch index, $\lambda\in [0,1]$ is the forgetting factor used in the autoregressive estimation, $B$ is the batch size, $\hRgen[0]$ is the initial value hyperparameter for the correlation matrix, and
\begin{eqnarray}
\mathbf{G}^{(k)}[m]=\left[\begin{array}{ccc} \rvg^{(k)}(\rvh^{(k)}[mB+1]) & \ldots & \rvg^{(k)}(\rvh^{(k)}[mB+B]) \end{array}\right],  \label{eq:G}
\end{eqnarray}
is the matrix of nonlinearly transformed activations of layer $k$ for batch $m$. In the above equation, $mB+l$ refers to absolute (sequence) index for the $l^{th}$ member of batch-$m$. Furthermore,  
\begin{eqnarray}
\mathbf{E}[m]=\left[\begin{array}{ccc} \ber[mB+1]  & \ldots & \ber[mB+B] \end{array}\right], \label{eq:E}
\end{eqnarray}
is the output error matrix for batch $m$. We then define the layer-specific loss function based on the stochastic orthogonality condition for layer $k$ as
\begin{eqnarray}
\label{eq:decorrelationobj}
J^{(k)}(\rvh^{(k)},\ber)[m]=\frac{1}{2}\left\|\hRgen[m]\right\|_F^2,
\end{eqnarray}
where $\| \cdot \|_F$ denotes the Frobenius norm.
This loss function captures the sum of the squared magnitudes of all cross-correlations between the components of the output error and the (potentially transformed) activations of layer $k$. Thus, we refer to the minimization of this loss as \textit{decorrelation}.

\subsection{Error Broadcast and Decorrelation algorithm}
\label{sec:EBDAlgo}
The functions in Eq.~(\ref{eq:decorrelationobj}) defines individual loss functions for each hidden layer, which are used to adjust the layer parameters. These loss functions can be minimized using a gradient based algorithm.\looseness=-1

To minimize the loss for layer $k$, we compute the gradient of the loss function $J^{(k)}(\rvh^{(k)}, \ber)$ with respect to the weight $W^{(k)}_{ij}$. The derivative can be decomposed into two terms:
\begin{align*}
   \frac{\partial J^{(k)}(\rvh^{(k)},\ber)}{\partial W^{(k)}_{ij}}[m]&=
   \underbrace{\zeta  Tr\left(\hRgen[m] \mathbf{E}[m]\frac{\partial\mathbf{G}^{(k)}[m]^T}{\partial W^{(k)}_{ij}}\right)}_{[\Delta\rmW^{(k)}_1[m]]_{ij}} +\underbrace{\zeta Tr\left(\hRgen[m]\frac{\partial \mathbf{E}[m]}{\partial W^{(k)}_{ij}}\mathbf{G}^{(k)}[m]^T\right)}_{[\Delta\rmW^{(k)}_2[m]]_{ij}},
\end{align*}
where $\zeta:=(1-\lambda)/B$. Similarly, the derivative with respect to the bias $b^{(k)}_i$ is given by:
\begin{align*}
   \frac{\partial J^{(k)}(\rvh^{(k)},\ber)}{\partial b^{(k)}_{i}}[m]&=
   \underbrace{\zeta Tr\left(\hRgen[m] \mathbf{E}[m]\frac{\partial\mathbf{G}^{(k)}[m]^T}{\partial b^{(k)}_{i}}\right)}_{[\Delta\rvb^{(k)}_1[m]]_i} +\underbrace{\zeta Tr\left(\hRgen[m]\frac{\partial \mathbf{E}[m]}{\partial b^{(k)}_{i}}\mathbf{G}^{(k)}[m]^T\right)}_{[\Delta\rvb^{(k)}_2[m]]_i}.
\end{align*}
Here $\Delta\rmW^{(k)}_1, \Delta\rvb^{(k)}_1[m]$ ($\Delta\rmW^{(k)}_2, \Delta\rvb^{(k)}_2[m]$)  represent the components of the gradients containing derivatives of activations (output errors) with respect to the layer parameters.
As derived in Appendix~\ref{sec:delW1b1}, we obtain the closed-form expressions for $\Delta\rmW^{(k)}_1[m]$ and $\Delta\rvb^{(k)}_1[m]$:
\begin{align}
\label{eq:EBDupdate}
    [\Delta \mW^{(k)}_1[m]]_{ij} = \zeta  \sum_{n=mB+1}^{(m+1)B} \vartheta^{(k)}[n] h_j^{(k-1)}[n], \quad \quad
    [\Delta \rvb_1^{(k)}[m]]_i = \zeta  \sum_{n=mB+1}^{(m+1)B} \vartheta^{(k)}[n],
\end{align}
where $\vartheta^{(k)}[n]={g'}^{(k)}_i(h_i^{(k)}[n]) {f'}^{(k)}(u_i^{(k)}[n]) q_i^{(k)}[n]$, ${g'}^{(k)}_i$ and ${f'}^{(k)}$ denote the derivatives of the nonlinearity $g^{(k)}$ and the activation function $f^{(k)}$, respectively. The term $\rvq^{(k)}[m]$ is defined as:
\begin{align*}
    \rvq^{(k)}[m] = \hRgen[m]\, \ber[m],
\end{align*}
representing the projection of the output error onto the layer activations, with the cross-correlation matrix $\hRgen[m]$  as the transformation matrix. These projections are shown in Figure~\ref{fig:errorbroadcast}.

For the special case of batchsize, $B=1$, the weight update in (\ref{eq:EBDupdate}) simplifies to
\begin{align}
\label{eq:EBDupdate2}
    [\Delta \mW^{(k)}_1[m]]_{ij} = \zeta   {g'}^{(k)}_i(h_i^{(k)}[m]) {f'}^{(k)}(u_i^{(k)}[m]) q_i^{(k)}[m] h_j^{(k-1)}[m]. 
\end{align}

The update terms $\Delta \mW^{(k)}_1[m]$ and $\Delta \rvb^{(k)}_1[m]$ aim to adjust the activations to gradually become orthogonal to $\ber$, as they are based on the derivatives of activations with respect to layer parameters. In contrast, $\Delta \mW^{(k)}_2[m]$ and $\Delta \rvb^{(k)}_2[m]$, derived from the derivatives of the output error, work to push the output errors into a configuration more orthogonal to the activations. While both update types strive for decorrelation, a critical distinction exists: $\Delta \mW^{(k)}_1[m]$ and $\Delta \rvb^{(k)}_1[m]$ depend only on layer activations and broadcast error signals, whereas $\Delta \mW^{(k)}_2[m]$ and $\Delta \rvb^{(k)}_2[m]$ rely on signals propagated backward from the output layer, resembling backpropagation (see Appendix~\ref{sec:delW2b2}). \looseness=-2

By focusing solely on $\Delta \mW^{(k)}_1[m]$ and $\Delta \rvb^{(k)}_1[m]$, we eliminate the need for propagation terms, resulting in a completely localized update mechanism for training the neural network. This simplification to localized updates is supported by their positive alignment with backpropagation and full (untruncated) EBD gradient directions, as demonstrated in Appendix~\ref{app:alignment_bp} and \ref{app:alignment_ebd}, respectively. Therefore, we prescribe the Error Broadcast and Decorrelation (EBD) update expressions as:
\begin{align*}
\rmW^{(k)}[m+1]=\rmW^{(k)}[m]-\mu^{(k)}[m]\Delta\rmW_1^{(k)}[m], \quad 
\rvb^{(k)}[m+1]= \rvb^{(k)}[m]-\mu^{(k)}[m]\Delta\rvb_1^{(k)}[m],
\end{align*}
for $k=1, \ldots, L-1$, where $\mu^{(k)}[m]$ is the learning rate for layer $k$ at batch $m$. Although these updates resemble backpropagation, a key difference lies in the error signals: the backpropagated error is replaced by the broadcasted error. Furthermore, the algorithm introduces flexibility by allowing the choice of nonlinearity functions 
 $g^{(k)}$, which influence the gradient terms $\Delta\rmW_1^{(k)}[m]$ and $\Delta\rvb_1^{(k)}[m]$. \looseness=-2

For the final layer ($k = L$), we utilize the standard  MMSE gradient update:
\begin{align*}
\rmW^{(L)}[m+1] &= \rmW^{(L)}[m] - \frac{\mu^{(L)}[m]}{B} \sum_{n=mB+1}^{(m+1)B}  \left({f'}^{(k)}(\rvu^{(L)}[n])\odot\ber[n]\right) \rvh^{(L-1)}[n]^T, \\ 
\rvb^{(L)}[m+1] &= \rvb^{(L)}[m] - \frac{\mu^{(L)}[m]}{B} \sum_{n=mB+1}^{(m+1)B}  {f'}^{(k)}(\rvu^{(L)}[n])\odot\ber[n],
\end{align*}
where ${f'}^{(L)}$ is the derivative of the activation function of the output layer. 

\subsection{ Further EBD algorithm extensions}
\label{sec:EBDExtensions}
We propose further extensions to the EBD framework to address potential activation collapse, which can arise when minimizing correlations is the sole objective. To prevent unit-level collapse, we introduce power regularization, while entropy regularization is employed to prevent dimensional collapse. Both regularizations can be implemented in ANNs as well as biologically plausible networks. The biological plausibility of employing these regularizers in MLP-based EBD is discussed in Appendix~\ref{sec:MLPwithEntropyPower}. Although CorInfoMax-EBD inherently includes entropy regularization, it can also benefit from the addition of power regularization for enhanced stability.
Additionally, we introduce forward layer activation projections to improve the algorithm's versatility. We also extend the EBD formulations to more complex architectures, including Convolutional Neural Networks (CNNs) and Locally Connected (LC) networks. For further details on these extensions, please refer to Appendix~\ref{app:extensionEBD}.

\subsubsection{Avoiding collapse}
\label{sec:collapse}
A critical challenge with EBD is potential activation collapse, where decorrelation losses (Eq.~(\ref{eq:decorrelationobj})) are minimized by driving activations $\rvh^{(k)} \to 0$, even with non-zero output errors, undermining learning. To counteract this, we introduce two complementary regularizers:

\textbf{Power normalization:}
\label{sec:pownormalization} %
To prevent total activation collapse, power normalization (Eq.~(\ref{eq:power_normaleq})) regulates layer activation power around a target level $P^{(k)}$.
\begin{align}
    J_P^{(k)}(\rvh^{(k)})[m]=\sum_{l=1}^{N^{(k)}}(B^{-1}\sum_{n=mB+1}^{(m+1)B}h^{(k)}_l[n]^2-P^{(k)})^2, \label{eq:power_normaleq}
\end{align}
which simplifies to
\begin{align}
    J_P^{(k)}(\rvh^{(k)})[m]=\sum_{l=1}^{N^{(k)}}(h^{(k)}_l[n]^2-P^{(k)})^2, \label{eq:power_normaleq2}
\end{align}
for $B=1$.

\textbf{Layer entropy:}
\label{sec:layerentropy} %
To mitigate collapse into low-dimensional subspaces, which restricts expressiveness, we incorporate layer entropy (Eq.~(\ref{eq:layer_etnropyeq})), building on prior work \citep{ozsoy2022selfsupervised, bozkurt2024correlative}.
\begin{align}
J^{(k)}_E(\rvh^{(k)})[m]=\frac{1}{2}\log\det(\rmR^{(k)}_{\mathbf{h}}[m]+\varepsilon^{(k)}\rmI). \label{eq:layer_etnropyeq}
\end{align}
Here, $\rmR^{(k)}_{\mathbf{h}}[m]$ is the layer autocorrelation matrix, updated autoregressively with forgetting factor $\lambda_E$:
\begin{align}  
\rmR^{(k)}_{\mathbf{h}}[m]&=\lambda_E \rmR^{(k)}_{\mathbf{h}}[m-1] + (1-\lambda_E)\frac{1}{B}{\rmH^{(k)}[m]}{\rmH^{(k)}[m]^T}, \label{eq:R_update_block}\\ 
\text{where }\hspace{0.3in} \rmH^{(k)}[m]&=[\begin{array}{ccc} \rvh^{(k)}[mB+1]  & \dots & \rvh^{(k)}[(m+1)B] \end{array}], \label{eq:Hk} %
\end{align}
is the activation matrix. Gradient derivations for these objectives are in Appendices~\ref{sec:entr_grad} and~\ref{sec:pn_grad}.

\subsubsection{Forward broadcast}

In the EBD algorithm (Section \ref{sec:EBDAlgo}), output errors are broadcast to layers to adjust weights and reduce correlations with activations. To complement this, we introduce forward broadcasting, projecting hidden layer activations onto the output layer to optimize the decorrelation loss by adjusting the final layer's parameters. Details are provided in Appendix~\ref{sec:EBDForwardDer}. \looseness=-2

\subsubsection{Extensions to other network architectures}

The EBD approach is independent of network topology. We extend EBD to convolutional neural networks (CNNs) in Appendix~\ref{sec:CNNEBD} and to locally connected (LC) networks in Appendix~\ref{sec:LCEBD}.\looseness=-1

\section{EBD for biologically realistic networks}

\begin{wrapfigure}{r}{0.535\textwidth}
  \centering
  \vspace{-0.725cm}
    \includegraphics[width=0.99\linewidth]{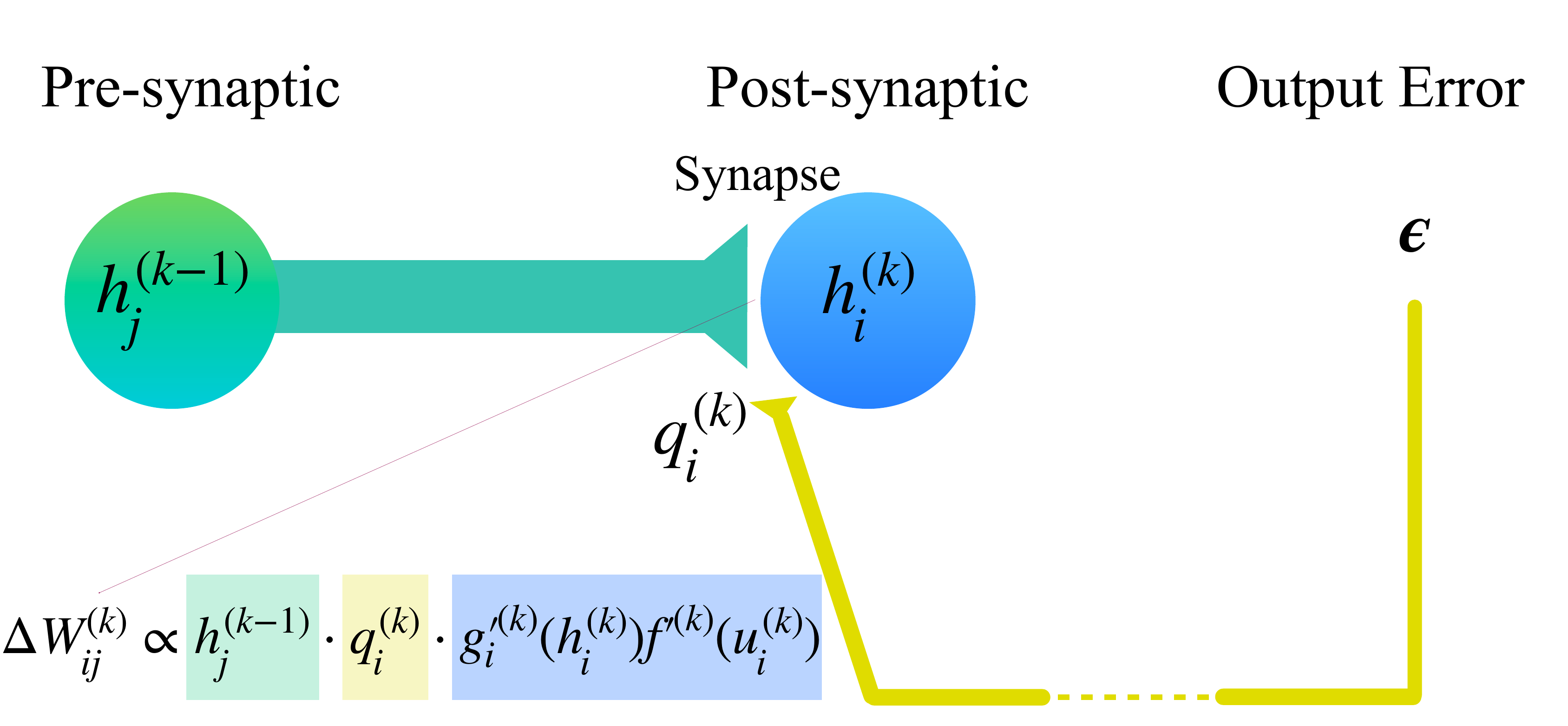}
    \caption{\textbf{Error–broadcast learning as a three‑factor synaptic update.}
The presynaptic firing rate $h^{(k-1)}_j$ (green, left) projects onto the
postsynaptic neuron $h^{(k)}_i$ (blue, centre) through the synapse
$W^{(k)}_{ij}$.  A layer‑specific broadcast of the output error
$e\!\to\!q^{(k)}_i$ (yellow, right) provides the modulatory third factor that
gates plasticity.  Together, presynaptic activity, postsynaptic
non‑linear derivatives $g^{\prime(k)}_i f^{\prime(k)}$ (blue rectangle), and the
modulatory signal form the product that drives the EBD weight change
$\Delta W^{(k)}_{ij}$ displayed underneath the circuit. \looseness=-2}
\label{fig:threefactor}
\vspace{-0.5cm}
\end{wrapfigure}

In the previous section, we introduced the EBD algorithm within the context of MLP networks. While MLPs can resemble biologically plausible networks depending on the credit assignment mechanism, in this section, we extend the application of the EBD approach to neural networks that exhibit more biologically realistic dynamics and architectures, motivated by its inherent solutions to key neuroscientific challenges: 1) EBD’s direct error broadcast naturally resolves the problematic weight symmetry requirement of BP, and 2) its update rules intrinsically manifest as modulated, extended Hebbian mechanisms (three-factor learning), aligning with current understanding of synaptic plasticity. In the following subsections, we explore how EBD relates to the biologically plausible three-factor learning rule and demonstrate its integration with the biologically more realistic CorInfoMax networks \citep{bozkurt2024correlative}.\looseness=-2

\vspace{5pt}
\subsection{Three factor learning rule and EBD}
\label{sec:three_factor_EBD}
The three-factor learning rule for biological neural networks extends the traditional two-factor Hebbian rule by incorporating a modulatory signal into synaptic updates based on presynaptic and postsynaptic activity \citep{fremaux2016neuromodulated,gerstner2018eligibility}. While backpropagation can be expressed similarly, it is not typically considered a three-factor rule in neuroscience, as its 'third factor' is a locally tailored signal specific to each neuron requiring a biologically implausible dual network with symmetric weights, unlike global neuromodulatory signals \citep{gerstner2018eligibility}.  In contrast, EBD update for batchsize $B=1$ in (\ref{eq:EBDupdate2}) naturally matches the three-factor structure:
\[
\Delta W_{ij}^{(k)} \propto 
\underbrace{{g'}^{(k)}_i (h_i^{(k)}) {f'}^{(k)} (u_i^{(k)})}_{\text{Postsynaptic}}
\hspace{0.15em} \underbrace{q_i^{(k)}}_{\text{Modulatory}} 
\hspace{0.15em} \underbrace{h_j^{(k-1)}}_{\text{Presynaptic}},
\]
where  $q_i^{(k)}$ is the projected global error. Thus, EBD supports various three-factor rules depending on nonlinearity $g^{(k)}$. For example, $g_i^{(k)}(h_i^{(k)}) = {h_i^{(k)}}^2$ yields the error-modulated Hebbian update \citep{loewenstein2006operant,bordelon2022influence}:
$\Delta W_{ij}^{(k)} \propto
{ h_i^{(k)} {f'}^{(k)} ( u_i^{(k)} )}%
\hspace{0.15em} {q_i^{(k)}}%
\hspace{0.15em} {h_j^{(k-1)}}$.
By enabling diverse three-factor updates via different nonlinear functions, EBD holds potential for modeling biologically consistent neural learning processes. \looseness=-2

Figure~\ref{fig:threefactor} breaks the EBD weight update (\ref{eq:EBDupdate2}) into its three interacting factors.  The presynaptic activity $h^{(k-1)}_j$ from the sending unit; the postsynaptic term is $g_i^{\prime(k)}f^{\prime(k)}$ computed from the receiving unit’s own activation; and the modulatory broadcast error $q_i^{(k)}$, derived from the network’s output error $\ber$.  Multiplying these three quantities produces the weight change $\Delta W^{(k)}_{ij}$ shown beneath the diagram, revealing that EBD naturally realises the classical three‑factor learning rule in neural networks.

\subsection{CorInfoMax-EBD: CorInfoMax with three factor learning rule}
\label{sec:corinfomax-ebd}
One of the significant advantages of the EBD framework is its flexibility to broadcast output errors into network nodes, which can be leveraged to transform time-contrastive, biologically plausible approaches into non-contrastive forms. To illustrate this property, we propose a modification of the recently introduced CorInfoMax framework \citep{bozkurt2024correlative} (see Appendix \ref{sec:CorInfoMaxEPApp} for a summary).
The CorInfoMax approach uses correlative information flow between layers as its objective function: \looseness=-2
\vspace{-5pt}
\begin{align*}
   & \mathcal{J}_{CI}[m]=\sum_{k=1}^{L-1}({\hat{I}_r^{(\varepsilon_k)}}(\rvh^{(k-1)}, \rvh^{(k)})[m]
  +{\hat{I}_l^{(\varepsilon_k)}}(\rvh^{(k)}, \rvh^{(k+1)})[m]), \hspace{0.2in} \text{where,}\\
    &{\hat{I}_r^{(\varepsilon_k)}}(\rvh^{(k)}, \rvh^{(k+1)})[m] = 0.5 \log \det (\hat{\rmR}_{\rvh^{(k+1)}}[m] + \varepsilon_k \mI) - 0.5 \log \det ({\hat{\mR}}_{\overset{\rightarrow}{\rve}^{(k+1)}}[m] + \varepsilon_k \mI),\\
        &{\hat{I}_l^{(\varepsilon_k)}}(\rvh^{(k)}, \rvh^{(k+1)})[m] = 0.5 \log \det (\hat{\rmR}_{\rvh^{(k)}}[m] + \varepsilon_k \mI) - 0.5 \log \det ({\hat{\mR}}_{\overset{\leftarrow}{\rve}^{(k)}}[m] + \varepsilon_k \mI),
\end{align*}
are alternative forms of correlative mutual information between nodes, defined in terms of the correlation matrices of layer activations, i.e., $\hat{\rmR}_{\rvh^{(k)}}$ and forward and backward prediction errors (${\hat{\mR}}_{\overset{\rightarrow}{\rve}^{(k+1)}}$ and ${\hat{\mR}}_{\overset{\leftarrow}{\rve}^{(k)}}$). Here, forward/backward prediction errors are defined by \looseness=-2
\begin{eqnarray*}
            \overset{\rightarrow}{\rve}^{(k+1)}[n]=\rvh^{(k+1)}[n]-\rmW^{(f,k)}[m]\rvh^{(k)}[n], \quad \quad
        \overset{\leftarrow}{\rve}^{(k)}[n]=\rvh^{(k)}[n]-\rmW^{(b,k)}[m]\rvh^{(k+1)}[n],
\end{eqnarray*}
respectively. Here, $\rmW^{(f,k)}[m]$ ($\rmW^{(b,k)}[m]$) is the forward (backward) prediction matrix for layer $k$.

This objective leads to network dynamics corresponding to a structure with feedforward and feedback prediction weights, and lateral connections $\rmB^{(k)}$ that maximize layer entropy. In the original work \citep{bozkurt2024correlative}, the two-phase EP approach \citep{scellier2017equilibrium} is proposed to train the network weights. As an alternative, we propose employing the EBD update rule to replace the two-phase EP adaptation. The proposed CorInfoMax-EBD algorithm is described by the following update equations defined in Algorithm \ref{alg:corinfomaxebdalgo}.\looseness=-2

\begin{algorithm}[ht!]
{\footnotesize
\textbf{Input:} Batch size $B$, layer index $k$, iteration step $m$, learning rates $\mu^{(f,k)}$, $\mu^{(b,k)}$, $\mu^{(d_f,k)}$, $\mu^{(d_b,k)}$, $\mu^{(d_l,k)}$, factors $\lambda_d$, $\lambda_E$, $\gamma_E$, activations  $\rmH^{(k)}$ in Eq.~(\ref{eq:Hk}), the nonlinear function of layer activations $\rmG^{(k)}$ in Eq.~(\ref{eq:G}) and their derivatives $\rmG_d^{(k)}$ in Eq.~(\ref{eq:Gd}), the derivative of activations $\mathbf{F}_d^{(k)}$ in Eq.~(\ref{eq:Fd}), output error $\rmE$ in Eq.~(\ref{eq:E}), prediction errors $\overset{\leftarrow}{\rmE}^{(k)}$ and $\overset{\rightarrow}{\rmE}^{(k)}$ in Eq.~(\ref{eq:EpL})-(\ref{eq:EpR}), lateral  outputs $\rmZ^{(k)}$ in Eq.~(\ref{eq:Z}). \\
\textbf{Output:} Updated weights $\rmW^{(f,k)}$, $\rmW^{(b,k)}$, $\rmB^{(k)}$. \\
\textbf{    Step 1: Update error projection weights: }
$\hRgen[m] = \lambda_d \hRgen[m-1] + \frac{1 - \lambda_d}{B} \rmG^{(k)}[m] \rmE[m]^T $

\hspace*{-0.03in}\textbf{   Step 2: Project errors to layer $k$: }
$ \rmQ^{(k)}[m] = \hRgen[m] \rmE[m]$

\hspace*{-0.03in}\textbf{    Step 3: Find the gradient of the nonlinear function of activations for layer $k$:}
\begin{center}
$\bm{\Phi}^{(k)}[m] = \mathbf{F}_d^{(k)}[m] \odot \rmQ^{(k)}[m] \odot \rmG_d^{(k)}[m]$
\end{center}

\hspace*{-0.03in}\textbf{    Step 4: Update forward, backward and lateral weights for layer $k$:}
\begin{flalign*}
&\rmW^{(f,k)}[m] = \rmW^{(f,k)}[m-1] +\left( B^{-1}{\mu^{(f,k)}[m]} \overset{\rightarrow}{\rmE}^{(k)}[m] - B^{-1}{\mu^{(d_f,k)}[m]} \bm{\Phi}^{(k)}[m] \right) {\rmH^{(k-1)}}[m]^T \\
&\rmW^{(b,k)}[m] = \rmW^{(b,k)}[m-1] +\left( B^{-1}{\mu^{(b,k)}[m]} \overset{\leftarrow}{\rmE}^{(k)}[m] - B^{-1}{\mu^{(d_b,k)}[m]} \bm{\Phi}^{(k)}[m] \right) \rmH^{(k+1)}[m]^T \\
& \rmB^{(k)}[m] = \lambda_E^{-1} \rmB^{(k-1)}[m] - B^{-1}\gamma_E \rmZ^{(k)}[m] {\rmZ^{(k)}}[m]^T - B^{-1}\mu^{(d_l,k)}[m] \bm{\Phi}^{(k)}[m] {\rmH^{(k)}}[m]^T
\end{flalign*}
}
\vspace{-0.5cm}
\caption{CorInfoMax-EBD Algorithm for Updating Weights in Layer $k$}
\label{alg:corinfomaxebdalgo}
\end{algorithm}

Here, we assume layer activations $\rmH^{(k)}$, forward (backward) prediction errors $\overset{\rightarrow}{\rmE}$ ($\overset{\leftarrow}{\rmE}$), output error $\rmE$, and lateral weight outputs $\rmZ$ are computed via CorInfoMax network dynamics in \cite{bozkurt2024correlative} (see Appendix \ref{sec:CorInfoMaxEPApp}). Integrating EBD enables single-phase updates per input, eliminating the less biologically plausible two-phase mechanism required by CorInfoMax-EP. EP's two-phase approach—separate label-free and label-connected phases—is implausible, as biological neurons unlikely alternate between distinct global phases for learning. Our method simplifies the process, aligns more closely with biological learning, and achieves comparable or superior performance to CorInfoMax-EP (see Section~\ref{sec:numericalexperiments}).\looseness=-1

The CorInfoMax-EBD scheme introduced in this section is more biologically plausible than the earlier MLP-based EBD formulation, as its learning rules can be implemented through local mechanisms such as lateral and three-factor Hebbian/anti-Hebbian updates, realistic neuron models with apical and basal dendrites, and feedback via backward predictors. Additionally, both the entropy and power normalization terms in CorInfoMax are realizable using biologically plausible operations, particularly in the online setting with single-sample updates. See Appendix~\ref{appx:corinfomax_plausibility} for further discussion. \looseness=-2

\section{Numerical experiments}
\label{sec:numericalexperiments}

In this section, we evaluate the performance of the proposed Error Broadcast and Decorrelation (EBD) approach on benchmark datasets: MNIST \citep{deng2012mnist} and CIFAR-10/100 \citep{cifar10}. For experiments with MNIST and CIFAR-10 involving MLP, CNN and LC, we use the same architectures used in \citep{Clark:21}; while for CIFAR-100 we adopt a CNN architecture closely following that of~\citep{launay2020direct}. We also tested the proposed CorInfoMax-EBD model against the CorInfoMax-EP model of \citep{bozkurt2024correlative}. More details about architectures, implementations, hyperparameter selections, and experimental outputs are provided in the Appendix~\ref{sec:numexp_appendix}.\looseness=-1

EBD test accuracy results compared to BP (with MSE criterion) and three error-broadcast methods: DFA without and with entropy regularization (DFA-E) \citep{Nokland:16}, global error vector broadcasting (nonnegative-(NN-GEVB) and mixed-sign-(MS-GEVB))\citep{Clark:21} are in Table~\ref{tab:mnist-cifar10-results} for both MNIST and CIFAR-10. Under our training setup, BP yielded comparable test accuracies for these datasets with both MSE and Cross-Entropy losses, though we report only the MSE results. In addition, CIFAR-100 results of EBD compared to BP (with Cross-Entropy criterion) and DFA is given in Table \ref{tab:mnist-cifar100-results}. Lastly, the test accuracies for biological CorInfoMax networks trained with EP and EBD  methods are  in Table~\ref{tab:corinfomax-results}.\looseness=-2

These results show that EBD-trained networks achieve equivalent performance on the MNIST dataset and significantly better performance on the CIFAR-10 and CIFAR-100 datasets compared to other error broadcasting methods. These improvements of EBD in Table \ref{tab:mnist-cifar10-results} over DFA can be attributed to the adaptability of error projection weights in EBD. The improvement of CorInfoMax-EBD over CorInfoMax-EP in Table \ref{tab:corinfomax-results} can be attributed to  CorInfoMax-EBD  incorporating error decorrelation in updating lateral weights, whereas CorInfoMax-EP relies only on (anti-)Hebbian updates. Particularly noteworthy is the performance of CorInfoMax-EBD, which not only substantially improves upon the original CorInfoMax-EP on CIFAR-10 (e.g., $55.79\%$ vs. $50.97\%$ for $3$-layers with batch size $20$) but also demonstrates encouraging scalability with depth, with a 10-layer CorInfoMax-EBD achieving $96.38\%$ on MNIST and $54.89\%$ on CIFAR-10 using online learning (batch size 1). This highlights EBD's potential in deeper, more complex biological networks. \looseness=-1

\begin{table}[t!]
\caption{Accuracy (\%) results for MLP, CNN, and LC networks on MNIST and CIFAR-10. Best and second-best results are bold and underlined. GEVB results are from \citet{Clark:21}.}
\label{tab:mnist-cifar10-results}
\centering
\resizebox{0.95\textwidth}{!}{\begin{tabular}{llcccccc}
\toprule
Dataset & Model & DFA & DFA+E & NN- & MS- & BP & EBD \\
& & & (ours) & GEVB & GEVB & (MSE) & (ours) \\
\midrule
\multirow{3}{*}{MNIST} 
& MLP & $98.1{\pm0.21}$ & $\underline{98.2{\pm0.03}}$ & $98.1$ & $97.7$ & $\mathbf{98.7{\pm0.05}}$ & $\underline{98.2{\pm0.08}}$ \\
& CNN & $\underline{99.1{\pm0.05}}$ & $\underline{99.1{\pm0.07}}$ & $97.7$ & $98.2$ & $\mathbf{99.5{\pm0.04}}$ & $\underline{99.1{\pm0.07}}$ \\
& LC  & $\underline{98.9{\pm0.03}}$ & $\underline{98.9{\pm0.04}}$ & $98.2$ & $98.2$ & $\mathbf{99.1{\pm0.04}}$ & $\underline{98.9{\pm0.04}}$ \\
\midrule
\multirow{3}{*}{CIFAR-10} 
& MLP & $52.1{\pm0.33}$ & $52.2{\pm0.49}$ & $52.4$ & $51.1$ & $\mathbf{56.4{\pm0.33}}$ & $\underline{55.5{\pm0.19}}$ \\
& CNN & $58.4{\pm1.59}$ & $58.6{\pm0.66}$ & $66.3$ & $61.6$ & $\mathbf{75.2{\pm0.28}}$ & $\underline{66.4{\pm0.43}}$ \\
& LC  & $62.2{\pm0.21}$ & $62.1{\pm0.19}$ & $58.9$ & $59.9$ & $\mathbf{67.8{\pm0.27}}$ & $\underline{64.3{\pm0.26}}$ \\
\bottomrule
\end{tabular}}
\vspace{-10pt}
\end{table}
\vspace{-4pt}
\begin{table}[t!]
\caption{Accuracy (\%) results for the CNN on CIFAR-100.}
\label{tab:mnist-cifar100-results}
\centering
\resizebox{0.64\textwidth}{!}{\begin{tabular}{llcccccc}
\toprule
 Dataset & Model & DFA & BP (CE) & EBD (ours) \\
\midrule
\multirow{1}{*}{CIFAR-100} 
& CNN & $41.9{\pm0.32}$ & $\mathbf{60.5{\pm0.17}}$ & $\underline{45.9{\pm0.69}}$ \\
\bottomrule
\end{tabular}}
\vspace{-11pt}
\end{table}

\begin{table}[t!]
\caption{Accuracy (\%) results  for EP and EBD CorInfoMax (CIM) algorithms on MNIST and CIFAR-10. Best and second-best are bold and underlined. Column marked with [*] is from \citet{bozkurt2024correlative}. \looseness=-2}
\label{tab:corinfomax-results}
\centering
\resizebox{0.95\textwidth}{!}{\begin{tabular}{lcccc}
\toprule
& CIM-EP [*] & CIM-EBD (Ours) & CIM-EBD (Ours) & CIM-EBD (Ours) \\
Dataset & 3-Layers & 3-Layers         & 3-Layers      &  10-Layers \\
& (batch size : 20) & (batch size : 20) & (batch size : 1) & (batch size : 1) \\
\midrule
MNIST & $\mathbf{97.6}$ & $\underline{97.5{\pm0.12}}$ &  $94.9{\pm0.16}$ & $96.4{\pm0.11}$ \\
CIFAR-10 & $51.0$ & $\mathbf{55.7{\pm0.17}}$  &  $53.4{\pm0.33}$ &  $\underline{54.9{\pm0.58}}$ \\
\bottomrule
\end{tabular}}
\vspace{-5pt}
\end{table}

\section{Conclusions, extensions and limitations}

\paragraph{Conclusions.} We introduced the Error Broadcast and Decorrelation framework, a biologically plausible alternative to backpropagation. EBD addresses the credit assignment problem by minimizing correlations between layer activations and output errors, offering fresh insights into biologically realistic learning. This approach provides a theoretical foundation for existing error broadcast mechanisms and three-factor learning rules in biological neural networks and facilitates flexible implementations in neuromorphic and artificial neural systems. EBD's error-broadcasting mechanism aligns with biological processes using local updates, and notably, has proven effective for training deep recurrent biologically-plausible networks (e.g., the 10-layer CorInfoMax-EBD), thereby addressing a key challenge in effectively scaling deep, biologically plausible learning with local rules. Moreover, EBD's simplicity and parallelism suit efficient hardware, like neuromorphic systems. \looseness=-1

\paragraph{Extensions.} The MMSE orthogonality property underlying EBD offers significant promise for new algorithms, deeper theoretical understanding, and neural network analysis in both artificial and biological contexts. Further theoretical extensions, drawing from the groundwork laid in Appendix~\ref{subsec:stoc_orth_nonlinear_mmse_appendix_b}, could focus on deriving tighter convergence guarantees for EBD in practical (finite-width) settings and on investigating the impact of more adaptive choices for the decorrelation functions $g^{(k)}$. In addition, EBD provides theoretical underpinnings for error-broadcast mechanisms with three-factor learning rules, enabling the conversion of two-phase contrastive methods into a single-phase approach. We are currently unaware of similar theoretical properties for alternative loss functions. Finally, our numerical experiments in Appendix~\ref{app:corr_ce} reveal that similar decorrelation behavior occurs for networks trained with backpropagation and categorical cross entropy loss, suggesting that decorrelation may be a general feature of the learning process and an intriguing avenue for further investigation.

\paragraph{Impact and limitations.} This paper seeks to advance the fields of Machine Learning and Computational Neuroscience by proposing a novel learning mechanism. As a foundational learning algorithm, we do not identify specific negative societal impacts arising directly from the EBD mechanisms beyond general considerations common to advancements in machine learning. While EBD offers a theoretically-grounded framework for error broadcast based and three factor learning that has yielded competitive (and in some cases, superior) performance against other error-broadcast methods on the presented benchmarks, several aspects warrant future investigation:

\underline{Scalability:} The current work evaluates EBD on MLP, CNN, LC and recurrent biological networks for image classification tasks like MNIST and CIFAR-10. Results on the 10-layer CorInfoMax-EBD demonstrate the potential to scale EBD to deeper, biologically realistic recurrent architectures using online learning. However, assessing EBD's performance on significantly larger datasets, or its applicability to diverse large-scale architectures in other domains, remains an important open direction. While related methods like DFA have been explored in such contexts~\cite{launay2020direct}, comprehensive empirical validation of EBD itself under those conditions is needed.

\underline{Computational Cost and Hyperparameters:} The dynamic updating of error projection matrices $\hat{\mathbf{R}}_{g\epsilon}^{(k)}$ and the optional inclusion of regularization terms like layer entropy (discussed in Appendix~\ref{sec:Imp_Complexity_EBD} and Appendix~\ref{sec:empirical_runtime}) contribute to computational and memory overhead compared to simpler schemes like DFA with fixed projectors, or standard backpropagation. EBD also introduces several hyperparameters (e.g., learning rates for decorrelation and regularization, forgetting factors) that require careful tuning, although this offers flexibility. Future work could explore more efficient update mechanisms or automated tuning strategies.

\newpage

\section*{Acknowledgements}

This work was supported by KUIS AI Center Research Award. C.P. was supported by an NSF CAREER Award (IIS-2239780) and a Sloan Research Fellowship. This work has been made possible in part by a gift from the Chan Zuckerberg Initiative Foundation to establish the Kempner Institute for the Study of Natural and Artificial Intelligence.

\bibliography{references}   
\bibliographystyle{unsrtnat}
\etocdepthtag.toc{mtchapter}
\etocsettagdepth{mtchapter}{subsection}
\etocsettagdepth{mtappendix}{none}

\newpage
\appendix
\begin{center}
\vspace{7pt}
{\Large \fontseries{bx}\selectfont Appendix}
\end{center}

\renewcommand{\contentsname}{Table of contents}
\etocdepthtag.toc{mtappendix}
\etocsettagdepth{mtchapter}{none}
\etocsettagdepth{mtappendix}{subsection}
\tableofcontents

\newpage

\section{Preliminaries on nonlinear Minimum Mean Square Error 
(MMSE) estimation}
\label{sec:prelim}
Let $\rvy \in \mathbb{R}^p$ and $\rvx \in \mathbb{R}^n$ represent two non-degenerate random vectors with a joint probability density function $f_{\rvy\rvx}(\rvy, \rvx)$ and conditional density $f_{\rvy|\rvx}(\rvy|\rvx)$. The goal of nonlinear minimum mean square error (MMSE) estimation is to find an estimator function $\rvb: \mathbb{R}^n \rightarrow \mathbb{R}^p$ that minimizes the mean squared error (MSE), which is defined as:
\begin{eqnarray*}
  MSE(\rvb)=  \mathbb{E}[\|\rvy-\rvb(\rvx)\|_2^2].
\end{eqnarray*}

\begin{lemma} \label{lemma:bestmmse} The best nonlinear MMSE estimate of $\rvy$ given $\rvx$ is:
\begin{eqnarray*}
   \rvb_{*}(\rvx)=\mathbb{E}_{\rvy|\rvx}(\rvy|\rvx).
\end{eqnarray*}
\end{lemma}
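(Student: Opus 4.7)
The plan is to reduce the optimization over estimator functions $\rvb$ to a pointwise optimization over vectors in $\mathbb{R}^p$ by invoking the tower property of expectation. First, I would rewrite the objective as
\[
\mathrm{MSE}(\rvb) = \mathbb{E}_{\rvx}\bigl[\mathbb{E}[\|\rvy-\rvb(\rvx)\|_2^2 \mid \rvx]\bigr],
\]
and observe that since the inner conditional expectation is non-negative and $\rvb(\rvx)$ depends only on $\rvx$, minimizing $\mathrm{MSE}(\rvb)$ over all measurable $\rvb$ is equivalent to minimizing the inner conditional expectation separately for each realization $\rvx = x$.

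Next, for each fixed $x$, I would treat $c := \rvb(x)$ as a free vector in $\mathbb{R}^p$ and minimize $J(c) := \mathbb{E}[\|\rvy - c\|_2^2 \mid \rvx = x]$. The cleanest route is a bias--variance style decomposition: adding and subtracting $\mathbb{E}[\rvy \mid \rvx = x]$ inside the squared norm and using the defining property of the conditional mean to kill the cross term yields
\[
\mathbb{E}[\|\rvy - c\|_2^2 \mid \rvx = x] = \mathbb{E}[\|\rvy - \mathbb{E}[\rvy \mid \rvx = x]\|_2^2 \mid \rvx = x] + \|\mathbb{E}[\rvy \mid \rvx = x] - c\|_2^2.
\]
The first term is independent of $c$, and the second is non-negative and uniquely minimized (at zero) by the choice $c^\star = \mathbb{E}[\rvy \mid \rvx = x]$. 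Equivalently, one can differentiate the expanded quadratic $J(c) = \mathbb{E}[\|\rvy\|_2^2 \mid \rvx = x] - 2 c^T \mathbb{E}[\rvy \mid \rvx = x] + \|c\|_2^2$ and set the gradient to zero.

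Defining $\rvb_*(x) := \mathbb{E}[\rvy \mid \rvx = x]$ therefore attains the pointwise minimum for almost every $x$, and hence the global minimum of $\mathrm{MSE}(\rvb)$ over all measurable estimators. The only technical nuance — not really an obstacle given the joint density assumption stated in the excerpt — is to verify that $\rvb_*$ is itself a measurable function of $x$; this is guaranteed by the standard existence of a measurable version of the conditional expectation, and uniqueness holds up to modification on sets of $f_{\rvx}$-measure zero. I expect no substantive difficulties, since this is a classical identity; the argument above is short, and the single genuine step is the pointwise quadratic minimization in the second paragraph.
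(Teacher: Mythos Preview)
Your proof is correct. Both you and the paper use the same add-and-subtract decomposition around $\mathbb{E}[\rvy\mid\rvx]$ to produce a bias--variance style identity whose cross term vanishes and whose residual is a non-negative squared norm.

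The organizational difference is worth noting: you condition on $\rvx$ first via the tower property and then do the decomposition \emph{inside} the conditional expectation, so the cross term dies immediately from the definition of the conditional mean. The paper instead performs the decomposition at the global level and kills the cross term by invoking the orthogonality property (Lemma~\ref{lemma:orthogonal}) applied to the function $\rvg(\rvx)=\rvb_*(\rvx)-\rvb(\rvx)$. Your route is the standard self-contained textbook argument; the paper's route deliberately foregrounds the stochastic orthogonality principle, which is the conceptual linchpin of the EBD framework developed later. Mathematically the two are equivalent, but the paper's ordering makes the dependence on orthogonality explicit.
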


The proof of Lemma \ref{lemma:bestmmse} relies on the following fundamental result (see, for example, \citet{edition2002probability}), which is central to the development of the entire EBD framework in the current article:

\begin{lemma} \label{lemma:orthogonal} The estimation error for $\rvb_{*}(\rvx) = \mathbb{E}_{\rvy|\rvx}[\rvy|\rvx]$, denoted as $\mathbf{e}_{*} = \rvy - \rvb_{*}(\rvx)$, is orthogonal to any vector-valued function $\mathbf{g}: \mathbb{R}^n \rightarrow \mathbb{R}^k$ of $\rvx$. Formally, we have:
\begin{eqnarray*}
\mathbb{E}[\mathbf{e}_{*}\mathbf{g}(\rvx)^T]=\mathbf{0}.
\end{eqnarray*}
\end{lemma}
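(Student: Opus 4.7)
The plan is to prove the orthogonality relation by a direct application of the tower property (law of iterated expectations) of conditional expectation, reducing everything to the defining property of $\rvb_*(\rvx) = \mathbb{E}[\rvy \mid \rvx]$. The key observation is that conditional on $\rvx$, the function $\mathbf{g}(\rvx)$ behaves as a constant and can be factored out of the inner expectation, so all the action is in showing that the conditional mean of $\mathbf{e}_*$ given $\rvx$ vanishes.

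Concretely, I would proceed in three short steps. First, write
\begin{align*}
\mathbb{E}[\mathbf{e}_*\,\mathbf{g}(\rvx)^T] \;=\; \mathbb{E}\!\left[\,\mathbb{E}[\mathbf{e}_*\,\mathbf{g}(\rvx)^T \mid \rvx]\,\right],
\end{align*}
using the tower property. Second, since $\mathbf{g}(\rvx)^T$ is $\sigma(\rvx)$-measurable, pull it out of the inner conditional expectation to obtain $\mathbb{E}[\mathbf{e}_* \mid \rvx]\,\mathbf{g}(\rvx)^T$. Third, compute the conditional mean of the error directly from the definition of $\rvb_*$:
\begin{align*}
\mathbb{E}[\mathbf{e}_* \mid \rvx] \;=\; \mathbb{E}[\rvy \mid \rvx] - \mathbb{E}[\rvb_*(\rvx) \mid \rvx] \;=\; \rvb_*(\rvx) - \rvb_*(\rvx) \;=\; \mathbf{0},
\end{align*}
because $\rvb_*(\rvx)$ is itself a function of $\rvx$ and hence equal to its own conditional expectation given $\rvx$. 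Substituting back yields $\mathbb{E}[\mathbf{e}_*\,\mathbf{g}(\rvx)^T] = \mathbf{0}$, which is the claim.

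There is essentially no hard step, so the main thing to be careful about is regularity: one needs $\mathbf{g}(\rvx)$ to be measurable and integrable enough that the expectation $\mathbb{E}[\mathbf{e}_*\,\mathbf{g}(\rvx)^T]$ exists (for instance, $\rvy$ and $\mathbf{g}(\rvx)$ both in $L^2$, which is the natural setting for MMSE estimation). Under such standard integrability, Fubini/tower applies without subtlety, and the pullout of $\mathbf{g}(\rvx)$ from the inner conditional expectation is justified. I would state this integrability assumption at the beginning of the proof so that the three-line derivation above is fully rigorous and directly usable as the backbone of Lemma~\ref{lemma:bestmmse}, where one expands $\|\rvy - \rvb(\rvx)\|_2^2 = \|\mathbf{e}_* + (\rvb_*(\rvx) - \rvb(\rvx))\|_2^2$ and uses orthogonality with $\mathbf{g}(\rvx) = \rvb_*(\rvx) - \rvb(\rvx)$ to kill the cross term.
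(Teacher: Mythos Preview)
Your proof is correct and follows essentially the same approach as the paper: both apply the tower property, pull $\mathbf{g}(\rvx)$ out of the inner conditional expectation, and observe that $\mathbb{E}[\rvy - \mathbb{E}[\rvy|\rvx] \mid \rvx] = \mathbf{0}$. Your version is slightly more explicit about the measurability and integrability justifications, which is a welcome addition.
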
 

\begin{proof}{(Lemma \ref{lemma:orthogonal})} The proof follows simple steps:
\begin{eqnarray*}
\mathbb{E}[\mathbf{e}_{*}\mathbf{g}(\rvx)^T]&=&\mathbb{E}_\rvx\left[\mathbb{E}_{\rvy|\rvx} \left[(\rvy-\mathbb{E}_{\rvy|\rvx}[\rvy|\rvx])\mathbf{g}(\rvx)^T|\rvx\right]\right]\\
&=& \mathbb{E}_\rvx \left[\left(\mathbb{E}_{\rvy|\rvx}[\rvy|\rvx]-\mathbb{E}_{\rvy|\rvx}[\rvy|\rvx]\right)\mathbf{g}(\rvx)^T\right]=\mathbf{0}.
\end{eqnarray*}
\end{proof}

Using Lemma \ref{lemma:orthogonal}, we can now prove  Lemma \ref{lemma:bestmmse}:

\begin{proof}{(Lemma \ref{lemma:bestmmse})} 
Let $\rvb:\mathbb{R}^n\rightarrow \mathbb{R}^p$ be any arbitrary function. The corresponding MSE can be written as:
\begin{align*}
    MSE(\rvb)&= \mathbb{E}[\|\rvy-\rvb(\rvx)\|_2^2].
\end{align*}
By adding and subtracting $\mathbb{E}_{\rvy|\rvx}[\rvy|\rvx]$, we can decompose the error as:
\begin{align*}
 MSE(\rvb)         &= \mathbb{E}\left[\|\rvy-\mathbb{E}_{\rvy|\rvx}[\rvy|\rvx]+\mathbb{E}_{\rvy|\rvx}[\rvy|\rvx]-\rvb(\rvx)\|_2^2\right] \\ 
          &= \mathbb{E}[\|\rvy-\mathbb{E}_{\rvy|\rvx}[\rvy|\rvx]\|_2^2)+\mathbb{E}(\|\mathbb{E}_{\rvy|\rvx}[\rvy|\rvx]-\rvb(\rvx)\|_2^2)\\
          &\hspace{0.1in} +2\mathbb{E}[(\rvy-\mathbb{E}_{\rvy|\rvx}[\rvy|\rvx])^T(\mathbb{E}_{\rvy|\rvx}[\rvy|\rvx]-\rvb(\rvx)))\\
          &=\mathbb{E}[\|\rvy-\mathbb{E}_{\rvy|\rvx}[\rvy|\rvx]\|_2^2]+\mathbb{E}[\|\mathbb{E}_{\rvy|\rvx}(\rvy|\rvx)-\rvb(\rvx)\|_2^2]\\
          &\hspace{0.1in}+2\mathbb{E}\left[Tr\left((\rvy-\mathbb{E}_{\rvy|\rvx}[\rvy|\rvx])(\mathbb{E}_{\rvy|\rvx}[\rvy|\rvx]-\rvb(\rvx))^T\right)\right]\\
          &= \mathbb{E}[\|\rvy-\mathbb{E}_{\rvy|\rvx}[\rvy|\rvx]\|_2^2]+\mathbb{E}[\|\mathbb{E}_{\rvy|\rvx}[\rvy|\rvx]-\rvb(\rvx)\|_2^2]\\
          &\hspace{0.1in}+2Tr(\mathbb{E}[\mathbf{e}_{*}(\mathbb{E}_{\rvy|\rvx}[\rvy|\rvx]-\rvb(\rvx))^T]).
\end{align*}
The third term, representing the cross product, vanishes by Lemma \ref{lemma:orthogonal}, leaving us with:
\begin{eqnarray*}
MSE(\rvb)=\mathbb{E}[\|\rvy-\rvb_{*}(\rvx)\|_2^2]+\mathbb{E}[\|\rvb_{*}(\rvx)-\rvb(\rvx)\|_2^2].
\end{eqnarray*}
Since the second term is always non-negative, the MSE is minimized when $\rvb(\rvx) = \rvb_{*}(\rvx)$.
\end{proof}

\newpage
\section{On the stochastic orthogonality condition based network training}
\label{sec:stoc_orth_cond_appendix_b}

This appendix provides further theoretical grounding for the Error Broadcast and Decorrelation (EBD) framework, addressing the use of the nonlinear Minimum Mean Square Error (MMSE) orthogonality condition for deriving estimators and the implications of orthogonality in high-dimensional spaces. We begin by re-establishing the geometric interpretation of stochastic orthogonality in the linear MMSE setting. We then demonstrate the sufficiency of the nonlinear MMSE orthogonality condition for an estimator to be optimal. Crucially, we address, under reasonable assumptions,  how the EBD algorithm, by enforcing orthogonality to hidden unit activations, can approximate orthogonality to arbitrary measurable functions of the input, particularly in the infinite width limit. Finally, we briefly touch upon the scaling of these orthogonality conditions.

\subsection{The stochastic orthogonality condition and linear MMSE estimation}
\label{subsec:stoc_orth_linear_mmse_appendix_b}

Within the context of this article, \textit{stochastic orthogonality} refers to statistical uncorrelatedness. The term \textit{orthogonality} is defined within a Hilbert space of random variables, where the inner product between two random variables \( a \) and \( b \) is given by:
\begin{align}
\label{eq:inner_product_appendix_b}
\langle a, b \rangle \triangleq \mathbb{E}[ab].
\end{align}
Here, the inner product is defined as the expected value of their product, which corresponds to their correlation. Two random variables \( a \) and \( b \) are said to be orthogonal if their inner product is zero:
\begin{align}
\label{eq:orthogonality_appendix_b}
\langle a, b \rangle = \mathbb{E}[ab] = 0.
\end{align}
Thus, two random variables are orthogonal if and only if their correlation is zero (see, for example, \citet{kailath2000linear}, Chapter 3).

In the context of Minimum Mean Square Error (MMSE) estimation, where the goal is to estimate a vector \( \mathbf{y} \in \mathbb{R}^p \) from observations \( \mathbf{x} \in \mathbb{R}^n \), it is known (e.g., Lemma~A.2 in Appendix~A) that the error \( \mathbf{e}_* \) of the optimal MMSE estimator \( \hat{\mathbf{y}}_*(\mathbf{x}) \) satisfies:
\begin{align}
\label{eq:explicit_orth1_appendix_b}
\mathbb{E}\left[\mathbf{e}_* \mathbf{g}(\mathbf{x})^\top\right] = \mathbf{0}_{p \times k},
\end{align}
where \( \mathbf{g}: \mathbb{R}^n \rightarrow \mathbb{R}^k \) is an arbitrary vector-valued measurable function of \( \mathbf{x} \). This equation states that the cross-correlation matrix between the nonlinear function of the input \( \mathbf{g}(\mathbf{x}) \) and the output error \( \mathbf{e}_* \) is a \( p \times k \) zero matrix.

The matrix equation in Eq.~(\ref{eq:explicit_orth1_appendix_b}) can be expressed more explicitly as \( p \cdot k \) zero-correlation conditions:
\begin{align}
\label{eq:explicit_orth2_appendix_b}
\mathbb{E}\left[{e}_{*,i} \cdot {g}_j(\mathbf{x})\right] = 0, \quad i = 1, \ldots, p, \quad j = 1, \ldots, k.
\end{align}
Using the inner product definition in Eq.~(\ref{eq:inner_product_appendix_b}), we can rewrite Eq.~(\ref{eq:explicit_orth2_appendix_b}) as \( p \cdot k \) stochastic orthogonality conditions:
\begin{align}
\label{eq:explicit_orth3_appendix_b}
\langle {e}_{*,i}, {g}_j(\mathbf{x}) \rangle = 0, \quad i = 1, \ldots, p, \quad j = 1, \ldots, k.
\end{align}
This geometric interpretation, where correlation is viewed as an inner product, is foundational. In the linear MMSE estimation setting, the orthogonality condition in (\ref{eq:explicit_orth3_appendix_b}) is restricted by choosing \( \mathbf{g}(\mathbf{x}) \) to be linear functions of \( \mathbf{x} \), most commonly the identity mapping, i.e., \( \mathbf{g}(\mathbf{x}) = \mathbf{x} \). This leads to:
\begin{align}
\label{eq:explicit_orth4_appendix_b}
\langle {e}_{*,i}, x_j \rangle = 0, \quad i = 1, \ldots, p, \quad j = 1, \ldots, n.
\end{align}
These linear orthogonality conditions are fundamental and are used in reverse to derive the parameters of linear MMSE estimators, such as the Wiener and Kalman filters \citep{kailath2000linear}. The estimator's structure is assumed (linear), and its parameters are found by enforcing these orthogonality conditions.

\subsection{The use of stochastic orthogonality conditions for nonlinear MMSE estimation}
\label{subsec:stoc_orth_nonlinear_mmse_appendix_b}
The principle of using orthogonality conditions to define estimators extends to the nonlinear domain. A key question is whether the nonlinear MMSE orthogonality condition can be similarly used "in reverse" to identify optimal nonlinear estimators.

\subsubsection{The sufficiency of the nonlinear MMSE orthogonality condition}
The nonlinear orthogonality principle states that for an optimal nonlinear MMSE estimator, the estimation error is orthogonal to any measurable function of the input. The following theorem establishes that this is not only a necessary condition but also a sufficient one, providing a strong theoretical basis for using these conditions to define and seek optimal estimators \citep{Makur_Purdue_RecitationNotes}.

\begin{theorem}[ Nonlinear MMSE  Estimation and Orthogonality Condition]\label{thm:sufficiency_appendix_b}
Let $y \in \mathbb{R}^p$ and $x \in \mathbb{R}^n$ be random vectors with a joint probability distribution. An estimator $\hat{y}(x)$ is the optimal nonlinear MMSE estimator, $\hat{y}_{MMSE}(x)$, if and only if the error $e = y - \hat{y}(x)$ satisfies:
\begin{align*}
\mathbb{E}[e \cdot g(x)^\top] = \mathbf{0}_{p \times k}
\end{align*}
for all measurable functions $g: \mathbb{R}^n \rightarrow \mathbb{R}^k$ for any $k \ge 1$.
\end{theorem}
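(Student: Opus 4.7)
The plan is to prove both directions of the biconditional, and the necessity direction is essentially already in hand. For the only-if direction, I would invoke Lemma~\ref{lemma:orthogonal} together with Lemma~\ref{lemma:bestmmse}: the former shows that the error $e_{*} = y - \mathbb{E}[y|x]$ is orthogonal to every measurable function of $x$, and the latter identifies $\hat{y}_{MMSE}(x) = \mathbb{E}[y|x]$, so the MMSE estimator's error automatically satisfies the stated orthogonality condition.

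For the if-direction, my strategy is to exploit the universality of the hypothesis by plugging in a particularly informative choice of $g$. Suppose $\hat{y}(x)$ is an $L^2$ estimator whose error $e = y - \hat{y}(x)$ satisfies $\mathbb{E}[e\, g(x)^\top] = \mathbf{0}$ for every measurable $g$. I would define $\Delta(x) := \hat{y}_{*}(x) - \hat{y}(x)$, where $\hat{y}_{*}(x) = \mathbb{E}[y|x]$; since both estimators are measurable functions of $x$, so is $\Delta$, and each of its components is a valid candidate for $g$. Decomposing $e = e_{*} + \Delta(x)$ with $e_{*} = y - \hat{y}_{*}(x)$, the hypothesis applied with $g = \Delta$ yields $\mathbb{E}[(e_{*} + \Delta(x))\,\Delta(x)^\top] = \mathbf{0}$.

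By Lemma~\ref{lemma:orthogonal}, the cross term $\mathbb{E}[e_{*}\Delta(x)^\top]$ vanishes, leaving $\mathbb{E}[\Delta(x)\Delta(x)^\top] = \mathbf{0}$. Taking the trace then gives $\mathbb{E}[\|\Delta(x)\|_2^2] = 0$, which forces $\Delta(x) = \mathbf{0}$ almost surely. Hence $\hat{y}(x) = \hat{y}_{*}(x)$ a.s., i.e., $\hat{y}$ coincides with $\hat{y}_{MMSE}$, completing the sufficiency argument.

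The main obstacle here is not conceptual but a mild regularity issue: admissibility of $\Delta$ as a test function requires $\Delta \in L^2(P_x)$ so that all expectations involved are finite. This is immediate provided both $\hat{y}$ and $\hat{y}_{*}$ lie in $L^2$, which is implicit whenever one considers candidate MMSE estimators (otherwise the MSE criterion itself would be ill-posed). Beyond this bookkeeping, the argument is essentially the one-line ``project the error onto the estimator gap'' step from the Hilbert-space (projection-theorem) perspective on MMSE estimation, and I expect no other subtleties.
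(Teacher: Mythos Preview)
Your proposal is correct and follows essentially the same route as the paper: both argue sufficiency by decomposing $e = e_* + \Delta(x)$ with $\Delta(x) = \hat{y}_{MMSE}(x) - \hat{y}(x)$, using the orthogonality of $e_*$ to kill the cross term, specializing the hypothesis to $g = \Delta$, and taking the trace to force $\mathbb{E}[\|\Delta(x)\|_2^2] = 0$. The only cosmetic difference is that the paper first derives $\mathbb{E}[\Delta(x)\,g(x)^\top] = \mathbf{0}$ for arbitrary $g$ before specializing, whereas you plug in $g = \Delta$ directly; the substance is identical.
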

\begin{proof}
The necessity part ($\Rightarrow$), i.e., if $\hat{y}(x) = \hat{y}_{MMSE}(x)$, then the orthogonality condition holds, is a standard result (established, for instance, in Lemma A.2  Appendix A).

Here we prove the sufficiency part ($\Leftarrow$). Let $f_\Theta(x)$ be an estimator such that its error $e_\Theta = y - f_\Theta(x)$ satisfies the orthogonality condition:

\begin{align}
\label{eq:f_theta_orth_appendix_b_revised}
\mathbb{E}[(y - f_\Theta(x)) \cdot g(x)^\top] = \mathbf{0}_{p \times k_g}
\end{align}
for all measurable functions $g: \mathbb{R}^n \rightarrow \mathbb{R}^{k_g}$ (where $k_g$ is the dimension of $g(x)$).
Consider the difference between $f_\Theta(x)$ and the true MMSE estimator $\hat{y}_{MMSE}(x)$. We can write:
\begin{align*}
y - f_\Theta(x) = (y - \hat{y}_{MMSE}(x)) + (\hat{y}_{MMSE}(x) - f_\Theta(x)).
\end{align*}
Substituting this into the assumed orthogonality condition for $f_\Theta(x)$:
\begin{align*}
\mathbb{E}[((y - \hat{y}_{MMSE}(x)) + (\hat{y}_{MMSE}(x) - f_\Theta(x))) \cdot g(x)^\top] = \mathbf{0}_{p \times k_g}.
\end{align*}
By linearity of expectation:
\begin{align*}
\mathbb{E}[(y - \hat{y}_{MMSE}(x)) \cdot g(x)^\top] + \mathbb{E}[(\hat{y}_{MMSE}(x) - f_\Theta(x)) \cdot g(x)^\top] = \mathbf{0}.
\end{align*}
The first term is zero due to the (necessary) orthogonality property of the MMSE estimator $\hat{y}_{MMSE}(x)$. Therefore, we are left with:
\begin{align}
\label{eq:diff_orth_appendix_b_revised}
\mathbb{E}[(\hat{y}_{MMSE}(x) - f_\Theta(x)) \cdot g(x)^\top] = \mathbf{0},
\end{align}
for any measurable function $g(x)$. Choosing $g(x)=\hat{y}_{MMSE}(x) - f_\Theta(x)$, and taking the trace of Eq.~(\ref{eq:diff_orth_appendix_b_revised}), and applying the cyclic property of the trace operator,  we obtain
\begin{align*}
\mathbb{E}[\|\hat{y}_{MMSE}(x) - f_\Theta(x)\|_2^2] = 0.
\end{align*}
Since $\|\hat{y}_{MMSE}(x) - f_\Theta(x)\|_2^2$ is a non-negative random variable, its expectation being zero implies that $\|\hat{y}_{MMSE}(x) - f_\Theta(x)\|_2^2 = 0$ almost surely.
This means $\hat{y}_{MMSE}(x) - f_\Theta(x) = \mathbf{0}$ almost surely, or $f_\Theta(x) = \hat{y}_{MMSE}(x)$ almost surely.
Thus, $f_\Theta(x)$ is the optimal MMSE estimator.
\end{proof}

This theorem provides a strong justification: if we can find an estimator $f_\Theta(x)$ whose error $y - f_\Theta(x)$ is orthogonal to {\it all} measurable functions of $x$, then $f_\Theta(x)$ is indeed the optimal MMSE estimator. This underpins the EBD framework's objective.

\subsubsection{From orthogonality to hidden units to arbitrary functions: an infinite width perspective}
\label{subsubsec:infinite_width_appendix_b}
A critical point is that Theorem \ref{thm:sufficiency_appendix_b} requires the estimation error to be orthogonal to {\it any} measurable function $g(x)$ of the input. However, the EBD algorithm, as practically implemented, enforces orthogonality of the output error $e_\Theta(x) = y - f_\Theta(x)$ to the activations of the network's hidden units, $h_j^{(k)}(x)$. The question is whether satisfying these more limited orthogonality conditions is sufficient to approach the true MMSE estimator. We argue that in the limit of infinite network width, this can indeed be the case.

The argument relies on the universal approximation capabilities of wide neural networks:
\begin{itemize}
    \item Rahimi and Recht  showed that a hidden layer with random i.i.d. Gaussian weights and appropriately chosen biases (e.g., uniform for Fourier features) can linearly approximate functions within a corresponding Reproducing Kernel Hilbert Space (RKHS) $\mathcal{H}_k$ associated with a shift-invariant kernel $k$ \cite{rahimi2008weighted,rahimi2008uniform}. Specifically, for a function in $\mathcal{H}_k$, $N$ such hidden units (random features) can achieve an expected $L_2(P_X)$ approximation error of order $1/\sqrt{N}$, assuming the input distribution $P_X$ has compact support.
    \item Building on this, Sun et.al.  analyzed random ReLU networks, where hidden units are of the form $\text{ReLU}(w^\top x)$ with $w \sim \mathcal{N}(0, I)$ \cite{sun2018rkhs}. They demonstrated that the RKHS induced by the corresponding random feature kernel is dense in $L_2(P_X)$ under mild conditions on $P_X$. This establishes the universality of random ReLU features for approximating any square-integrable function with the linear combination of that hidden layer units.
\end{itemize}
Our initial setting for the EBD framework—specifically, a first hidden layer with ReLU activations initialized with random i.i.d. Gaussian weights—is essentially the same as the random ReLU feature setting analyzed in \cite{sun2018rkhs}. Although weights change during training, one might hypothesize that in the infinite width limit and with sufficiently controlled learning rates (to limit the deviation of weights from their initial distribution), the set of first hidden layer activations $\{h_i^{(1)}(x)\}_{i=1}^{N^{(1)}}$ could retain its universal approximation capability. This is an area for future rigorous analysis.

Under this crucial assumption that the (potentially trained) first hidden layer activations $\{h_i^{(1)}(x)\}_{i=1}^{N^{(1)}}$ can form a basis that is dense in $L_2(P_X)$ as $N^{(1)} \rightarrow \infty$, we can state the following result concerning an estimator that achieves perfect orthogonality with these activations.

\begin{theorem}[Convergence to MMSE for Estimators Orthogonal to a Dense Basis of First-Layer Activations]\label{thm:ebd_to_mmse_appendix_b} %
Let $f_\Theta(x)$ be an estimator for $y$ given $x$. Assume its error $e_\Theta(x) = y - f_\Theta(x)$ satisfies the orthogonality condition with respect to a set of $N^{(1)}$ first-layer hidden unit activations $\{h_i^{(1)}(x)\}_{i=1}^{N^{(1)}}$:
\begin{align}
\label{eq:ebd_orth_h1_thm_appendix_b} %
\langle y - f_\Theta(x), h_i^{(1)}(x) \rangle = 0, \quad \text{for } i = 1, \ldots, N^{(1)}.
\end{align}
Further assume that the linear span of these activations $\{h_i^{(1)}(x)\}_{i=1}^{N^{(1)}}$ is dense in $L_2(P_X)$ as $N^{(1)} \rightarrow \infty$. That is, for any $g(x) \in L_2(P_X)$ and any $\varepsilon > 0$, there exists a sufficiently large $N^{(1)}$ and a linear combination $\hat{g}^{(1)}(x) = \sum_{i=1}^{N^{(1)}} c_i h_i^{(1)}(x)$ such that:
\begin{align}
\label{eq:g_approx_thm_appendix_b_alt} %
\left\|g(x) - \hat{g}^{(1)}(x)\right\|_{L_2(P_X)} \le \varepsilon.
\end{align}
Then, as $\varepsilon \rightarrow 0$ (corresponding to the infinite width limit where $N^{(1)} \rightarrow \infty$), $f_\Theta(x)$ converges to the optimal MMSE estimator $f_{MMSE}(x)$ in the $L_2(P_X)$ sense:
\begin{align*}
\|f_{MMSE}(x) - f_\Theta(x)\|_{L_2(P_X)} \rightarrow 0.
\end{align*}
\end{theorem}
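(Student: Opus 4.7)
The strategy is to reduce the problem to a standard Hilbert-space projection argument in $L_2(P_X)$, using the MMSE orthogonality property established in Lemma~\ref{lemma:orthogonal} together with the density hypothesis on the first-layer activations. Write the componentwise difference $\Delta_k(x) := f_{MMSE,k}(x) - f_{\Theta,k}(x)$ for each output index $k = 1,\ldots,p$. The goal is to show that $\|\Delta_k\|_{L_2(P_X)}$ can be made arbitrarily small, and then assemble the components into the vector-valued conclusion.

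\textbf{Step 1: Transfer orthogonality from the error to the difference.} The MMSE estimator satisfies $\mathbb{E}[(y_k - f_{MMSE,k}(x)) h_i^{(1)}(x)] = 0$ by Lemma~\ref{lemma:orthogonal} applied with $\mathbf{g}(x) = h_i^{(1)}(x) \mathbf{e}_k$. Subtracting this from the assumed EBD orthogonality condition in Eq.~(\ref{eq:ebd_orth_h1_thm_appendix_b}) gives $\mathbb{E}[\Delta_k(x) h_i^{(1)}(x)] = 0$ for every $k$ and every $i = 1,\ldots,N^{(1)}$. Hence $\Delta_k$ is orthogonal, in the $L_2(P_X)$ inner product, to every first-layer activation and therefore to every finite linear combination $\hat{g}^{(1)}(x) = \sum_{i=1}^{N^{(1)}} c_i h_i^{(1)}(x)$.

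\textbf{Step 2: Invoke density and Cauchy--Schwarz.} Assuming $y \in L_2(P_{XY})$ and $f_\Theta \in L_2(P_X)$ (so that each $\Delta_k \in L_2(P_X)$), the density hypothesis in Eq.~(\ref{eq:g_approx_thm_appendix_b_alt}) applied to $g = \Delta_k$ yields, for any $\varepsilon > 0$, a sufficiently large $N^{(1)}$ and coefficients producing $\hat{\Delta}_k^{(1)}$ with $\|\Delta_k - \hat{\Delta}_k^{(1)}\|_{L_2(P_X)} \le \varepsilon$. Decomposing
\begin{align*}
\|\Delta_k\|_{L_2(P_X)}^2 = \langle \Delta_k, \Delta_k - \hat{\Delta}_k^{(1)} \rangle + \langle \Delta_k, \hat{\Delta}_k^{(1)} \rangle,
\end{align*}
the second term vanishes by Step 1, and Cauchy--Schwarz on the first term gives $\|\Delta_k\|_{L_2(P_X)}^2 \le \|\Delta_k\|_{L_2(P_X)} \cdot \varepsilon$, hence $\|\Delta_k\|_{L_2(P_X)} \le \varepsilon$. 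Summing the squared component norms yields $\|f_{MMSE} - f_\Theta\|_{L_2(P_X)} \le \sqrt{p}\,\varepsilon$, and letting $\varepsilon \to 0$ (equivalently $N^{(1)} \to \infty$) completes the argument.

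\textbf{Anticipated obstacles.} The argument above is essentially a textbook Hilbert-space projection once the orthogonality is inherited by $\Delta_k$, so the mechanics are routine. The substantive obstacle is entirely bundled into the hypothesis: justifying that the (possibly trained) first-layer activations actually form a dense subset of $L_2(P_X)$ as $N^{(1)} \to \infty$, as informally motivated via the random-feature/RKHS results in Section~\ref{subsubsec:infinite_width_appendix_b}. A secondary, milder issue is confirming $f_\Theta \in L_2(P_X)$ so that the difference $\Delta_k$ lies in the Hilbert space where the projection argument operates; this should follow under mild integrability or boundedness assumptions on the network output and the input distribution, but it is worth stating explicitly as a hypothesis in the theorem or as a standing assumption.
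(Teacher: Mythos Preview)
Your proof is correct and follows the same core strategy as the paper: combine MMSE orthogonality (Lemma~\ref{lemma:orthogonal}) with the assumed orthogonality to the $h_i^{(1)}$, then invoke density and Cauchy--Schwarz. The paper organizes the steps in a slightly different order: it first bounds $|\langle y - f_\Theta, g \rangle| \le \|e_\Theta\|_{L_2}\cdot\varepsilon$ for \emph{arbitrary} $g$ (by splitting $g = (g-\hat g^{(1)}) + \hat g^{(1)}$), then rewrites $\langle y - f_\Theta, g\rangle = \langle f_{MMSE} - f_\Theta, g\rangle$ via MMSE orthogonality, and finally specializes $g = f_{MMSE} - f_\Theta$. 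This yields $\|f_{MMSE} - f_\Theta\|^2 \le \|e_\Theta\|_{L_2}\cdot\varepsilon$, which forces the paper to add an explicit assumption that $\|e_\Theta\|_{L_2}$ remains bounded. Your route---transferring orthogonality to $\Delta_k$ first and then approximating $\Delta_k$ itself---gives $\|\Delta_k\| \le \varepsilon$ directly, so you sidestep that extra boundedness hypothesis; the only integrability you need is $\Delta_k \in L_2(P_X)$, which you correctly flag. Both arguments are the same Hilbert-space projection at heart, but yours is marginally cleaner in the constants and assumptions.
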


\begin{proof}
Consider the inner product between the error of the estimator $f_\Theta(x)$, $e_\Theta(x) = y - f_\Theta(x)$, and an arbitrary function $g(x) \in L_2(P_X)$. We can write:
\begin{align*}
\langle y - f_\Theta(x), g(x) \rangle &= \langle y - f_\Theta(x), g(x) - \hat{g}^{(1)}(x) + \hat{g}^{(1)}(x) \rangle \\
&= \langle y - f_\Theta(x), g(x) - \hat{g}^{(1)}(x) \rangle + \langle y - f_\Theta(x), \hat{g}^{(1)}(x) \rangle.
\end{align*}
The second term, $\langle y - f_\Theta(x), \hat{g}^{(1)}(x) \rangle = \left\langle y - f_\Theta(x), \sum_{j=1}^{N^{(1)}} c_j h_j^{(1)}(x) \right\rangle = \sum_{j=1}^{N^{(1)}} c_j \langle y - f_\Theta(x), h_j^{(1)}(x) \rangle$, is zero due to the assumed orthogonality condition (\ref{eq:ebd_orth_h1_thm_appendix_b}).
Thus, we are left with the first term. Applying the Cauchy-Schwarz inequality and using the denseness assumption Eq.~(\ref{eq:g_approx_thm_appendix_b_alt}):
\begin{align}
\label{eq:ebd_to_g_bound_proof_appendix_b_alt} %
|\langle y - f_\Theta(x), g(x) \rangle| &= |\langle y - f_\Theta(x), g(x) - \hat{g}^{(1)}(x) \rangle| \nonumber \\
&\le \|y - f_\Theta(x)\|_{L_2(P_X)} \|g(x) - \hat{g}^{(1)}(x)\|_{L_2(P_X)} \nonumber \\
&\le \|e_\Theta(x)\|_{L_2(P_X)} \cdot \varepsilon.
\end{align}
This shows that as $\varepsilon \rightarrow 0$, the error $e_\Theta(x)$ becomes orthogonal to any $g(x) \in L_2(P_X)$, i.e., $\langle y - f_\Theta(x), g(x) \rangle \rightarrow 0$.

Now, let $f_{MMSE}(x)$ be the true MMSE estimator. By definition, its error $e_{MMSE}(x) = y - f_{MMSE}(x)$ is orthogonal to any $g(x) \in L_2(P_X)$, so $\langle y - f_{MMSE}(x), g(x) \rangle = 0$.
We can rewrite the left side of Eq.~(\ref{eq:ebd_to_g_bound_proof_appendix_b_alt}) as:
\begin{align*}
\langle y - f_\Theta(x), g(x) \rangle &= \langle (y - f_{MMSE}(x)) + (f_{MMSE}(x) - f_\Theta(x)), g(x) \rangle \\
&= \langle y - f_{MMSE}(x), g(x) \rangle + \langle f_{MMSE}(x) - f_\Theta(x), g(x) \rangle \\
&= 0 + \langle f_{MMSE}(x) - f_\Theta(x), g(x) \rangle \\
&= \langle f_{MMSE}(x) - f_\Theta(x), g(x) \rangle.
\end{align*}
Substituting this back into the inequality Eq.~(\ref{eq:ebd_to_g_bound_proof_appendix_b_alt}):
\begin{align}
\label{eq:mmse_ebd_diff_g_proof_appendix_b_alt} %
|\langle f_{MMSE}(x) - f_\Theta(x), g(x) \rangle| \le \|e_\Theta(x)\|_{L_2(P_X)} \cdot \varepsilon.
\end{align}
This inequality holds for any $g(x) \in L_2(P_X)$. We can choose $g(x) = f_{MMSE}(x) - f_\Theta(x)$ (assuming this difference is in $L_2(P_X)$). This yields:
\begin{align*}
\|f_{MMSE}(x) - f_\Theta(x)\|_{L_2(P_X)}^2 \le \|e_\Theta(x)\|_{L_2(P_X)} \cdot \varepsilon.
\end{align*}
As $\varepsilon \rightarrow 0$ (corresponding to $N^{(1)} \rightarrow \infty$ under the denseness assumption), and assuming the error of the estimator $f_\Theta(x)$, $\|e_\Theta(x)\|_{L_2(P_X)}$, remains bounded, the right-hand side approaches zero. Therefore:
\begin{align*}
\|f_{MMSE}(x) - f_\Theta(x)\|_{L_2(P_X)}^2 \rightarrow 0,
\end{align*}
which implies $f_\Theta(x) \rightarrow f_{MMSE}(x)$ in the $L_2(P_X)$ sense.
\end{proof}

\paragraph{Discussion of Theorem \ref{thm:ebd_to_mmse_appendix_b}:}
This theorem is significant because it elucidates the properties of an estimator that achieves the specific orthogonality targeted by the EBD algorithm with respect to its first-layer hidden units. Theorem \ref{thm:sufficiency_appendix_b} established that orthogonality to {\it all} measurable functions is sufficient for MMSE optimality. Theorem \ref{thm:ebd_to_mmse_appendix_b} demonstrates that, {\it if} an estimator's error is perfectly orthogonal to its first-layer hidden unit activations, and {\it if} these activations form a dense basis (a condition motivated by infinite-width random feature networks), then such an estimator indeed converges to the true MMSE solution. This provides a theoretical rationale for the EBD algorithm's objective: by striving to decorrelate output errors with hidden unit activations, EBD aims to satisfy the premise of this theorem. The result offers a theoretical justification for how achieving orthogonality with respect to a practical, finite set of internal network features can, under idealized conditions of network width and feature richness, lead to overall MMSE optimality. The "meaningfulness" of these orthogonality constraints, even in high-dimensional spaces, is thus linked to the expressive power of the network's learned features. The theorem hinges on the denseness of the feature space generated by the first hidden layer and the perfect satisfaction of the orthogonality conditions. Formalizing the conditions under which EBD effectively approximates these conditions and the extent to which universality of features is preserved during training remain important directions for future research.

\subsubsection{EBD framework objective} %
The EBD framework designs loss functions that aim to satisfy the orthogonality conditions discussed. Specifically, for a network $f_\Theta(x)$, the target is to achieve:
\begin{align}
\label{eq:explicit_orth3b_appendix_b_alt} %
\langle y - f_\Theta(x), h_j^{(k)}(x) \rangle = 0, \quad \text{for all output components, layers } k, \text{ and units } j.
\end{align}
By minimizing decorrelation losses based on Eq.~(\ref{eq:explicit_orth3b_appendix_b_alt}), particularly for a universal first layer as analyzed in Theorem \ref{thm:ebd_to_mmse_appendix_b}, the network is guided towards the MMSE optimal solution. The practical EBD algorithm uses empirical estimates of these correlations and updates network weights to minimize their magnitudes.

\subsubsection{Scaling of orthogonality conditions for nonlinear MMSE}
\label{subsec:scaling_orth_nonlinear_mmse_appendix_b}

The generality of the stochastic orthogonality condition in Eq.~(\ref{eq:explicit_orth3_appendix_b}) for nonlinear estimators allows, in principle, for an even greater expansion of the number of constraints. If the error $e_*(x)$ of the optimal MMSE estimator is orthogonal to any $g(x)$, it is also orthogonal to any function of its own hidden unit activations, $g_m(h_j^{(k)}(x))$, since $h_j^{(k)}(x)$ is itself a function of $x$. Thus, one could enforce:
\begin{align}
\label{eq:orthogonality_condition_g_appendix_b}
\langle e_{EBD,i}(x), g_m(h_j^{(k)}(x)) \rangle = \mathbb{E}\left[e_{EBD,i}(x) \cdot g_m(h_j^{(k)}(x))\right] = 0,
\end{align}
for $i = 1, \ldots, p$, $j = 1, \ldots, N^{(k)}$, and for a set of $M$ different nonlinear functions $g_m(\cdot)$.
This extension could potentially introduce a greater diversity of updates and more strongly enforce the conditions for MMSE optimality. While this approach theoretically increases the number of constraints and might offer benefits, it also increases computational complexity and has not been pursued in our current numerical experiments. The practical benefit versus the added cost of enforcing orthogonality against more complex functions of hidden units remains an open question. The primary EBD algorithm focuses on $g_m(h_j^{(k)}) = h_j^{(k)}$.

\section{The derivation of update terms}
\label{app:derivationofupdateterms}
In this section, we present the detailed derivations for the EBD algorithm and its variations, as introduced in Section \ref{sec:EBDAlgo}.
\subsection{\texorpdfstring{$\Delta\rmW_1$ and $\Delta \rvb_1$ calculation}{Delta W1 and Delta b1 calculation}}
\label{sec:delW1b1}

In Section \ref{sec:EBDAlgo}, we defined  the weight update elemet $[\Delta\rmW_1]_{ij}$ as follows:
\begin{align*}
\zeta Tr\left(\hRgen[m] \mathbf{E}[m]\frac{\partial\mathbf{G}^{(k)}[m]^T}{\partial W^{(k)}_{ij}}\right).
\end{align*}
The derivative term in this expression can be expanded as
\begin{eqnarray*}
\frac{\partial\mathbf{G}^{(k)}[m]}{\partial W^{(k)}_{ij}}=\rve_i\left[\begin{array}{c} {g'}^{(k)}_i(h_i^{(k)}[mB+1]){f'}^{(k)}(u_i^{(k)}[mB+1])h_j^{(k-1)}[mB+1] \\
{g'}^{(k)}_i(h_i^{(k)}[mB+2]){f'}^{(k)}(u_i^{(k)}[mB+2])h_j^{(k-1)}[mB+2] \\  \vdots \\
{g'}^{(k)}_i(h_i^{(k)}[(m+1)B]){f'}^{(k)}(u_i^{(k)}[(m+1)B])h_j^{(k-1)}[(m+1)B] \end{array}\right]^T,
\end{eqnarray*}
where $\rve_i$ represents the standard basis vector with all elements set to zero, except for the element at index $i$, which is equal to 1.

By defining the matrix 
\[\mQ^{(k)}[m]=\hRgen[m] \mathbf{E}[m]=\left[\begin{array}{ccc} \mathbf{q}^{(k)}[mB+1] & \ldots & \mathbf{q}^{(k)}[(m+1)B]\end{array}\right],\]
which represents the projection of the output error onto layer $k$, we can express the weight update as:
\begin{eqnarray*}
[\Delta\mW^{(k)}_1[m]]_{ij}=\zeta \sum_{n=mB+1}^{(m+1)B}{g'}^{(k)}_i(h_i^{(k)}[n]){f'}^{(k)}(u_i^{(k)}[n])q_i^{(k)}[n] h_j^{(k-1)}[n].
\end{eqnarray*}
To further simplify this expression, we introduce the matrices:
\begin{eqnarray}
\mathbf{G}_d^{(k)}[m]=\left[\begin{array}{cccc} {\rvg^{(k)}}'(\rvh^{(k)}[mB+1]) & {\rvg^{(k)}}'(\rvh^{(k)}[mB+2]) & \ldots & {\rvg^{(k)}}'(\rvh^{(k)}[(m+1)B]) \end{array}\right] \label{eq:Gd}, 
\end{eqnarray}
\begin{eqnarray}
\mathbf{F}_d^{(k)}[m]=\left[\begin{array}{cccc} {\rvf^{(k)}}'(\rvu^{(k)}[mB+1]) & {\rvf^{(k)}}'(\rvu^{(k)}[mB+2]) & \ldots & {\rvf^{(k)}}'(\rvu^{(k)}[(m+1)B]) \end{array}\right], \label{eq:Fd}
\end{eqnarray}
and $\mZ^{(k)}[m]=\mathbf{G}^{(k)}_d[m]\odot\mathbf{F}^{(k)}_d[m]\odot\mathbf{Q}^{(k)}[m]$, which allows us to express the weight update in a more compact form:
\begin{eqnarray*}
    \Delta\mW^{(k)}_1[m]=\zeta \mZ^{(k)}[m]\mH^{(k-1)}[m]^T.
\end{eqnarray*}
Following a similar procedure, the bias update is given by:
\begin{eqnarray*}
    {\Delta\rvb_1^{(k)}}[m]=\zeta \mZ^{(k)}[m]\mathbf{1}_{B\times 1}.
\end{eqnarray*}

\subsection{\texorpdfstring{$\Delta\rmW_2$ and $\Delta \rvb_2$ calculation}{Delta W2 and Delta b2 calculation}}
\label{sec:delW2b2}
In Section \ref{sec:EBDAlgo}, we defined the weight update element $[\Delta\rmW_2]_{ij}$ involving the derivative of the output error as 
\begin{align*}
\zeta Tr\left(\hRgen[m]\frac{\partial \mathbf{E}[m]}{\partial W^{(k)}_{ij}}\mathbf{G}^{(k)}[m]^T\right).
\end{align*}
To begin, we consider the derivative term:
\begin{eqnarray*}
\frac{\partial \ber}{\partial W^{(k)}_{ij}},
\end{eqnarray*}
which can be expanded as
\begin{eqnarray*}
 \frac{\partial \ber}{\partial W^{(k)}_{ij}} &=&\underbrace{\frac{\partial \ber}{\partial \rvh^{(L)}}}_{\mathbf{I}} \underbrace{\frac{\partial \rvh^{(L)}}{\partial \rvu^{(L)}}}_{\diago({f'}^{(k)}(\rvu^{(L)}))} \underbrace{\frac{\partial \rvu^{(L)}}{\partial \rvh^{(L-1)}}}_{\mathbf{W}^{(L)}} \underbrace{\frac{\partial \rvh^{(L-1)}}{\partial \rvu^{(L-1)}}}_{\diago({f'}^{(k)}(\rvu^{(L-1)}))} \ldots   \\
 && \hspace{1in} \ldots \underbrace{\frac{\partial \rvh^{(k+1)}}{\partial \rvu^{(k+1)}}}_{\diago({f'}^{(k)}(\rvu^{(k+1)}))} \underbrace{\frac{\partial \rvu^{(k+1)}}{\partial \rvh^{(k)}}}_{\mathbf{W}^{(k+1)}}  \underbrace{\frac{\partial \rvh^{(k)}}{\partial \rvu^{(k)}}}_{\diago({f'}^{(k)}(\rvu^{(k)}))}  \underbrace{\frac{\partial \rvu^{(k)}}{\partial W^{(k)}_{ij}}}_{\mathbf{e}_i h_j^{(k-1)}}\\
\end{eqnarray*}
This expression reflects propagation terms, from the output back to the layer $k$. 
Defining  $\Phi^{(L)}[n]=\diago({f^{(L)}}'(\rvu^{(L)}[n]))$, and
\begin{eqnarray*}
    \Phi^{(k)}[n]=\Phi^{(k+1)}[n]\rmW^{(k+1)}[m]\diago({f'}^{(k)}(\rvu^{(k)}[n])),
\end{eqnarray*}
we obtain
\begin{eqnarray*}
    \frac{\partial \ber[n]}{\partial W^{(k)}_{ij}}=\Phi^{(k)}[n]h_j^{(k-1)}[n]\mathbf{e}_i.
\end{eqnarray*}
Thus, the derivative of the error at time step $n$ with respect to $W^{(k)}_{ij}$ can be written as:
\begin{eqnarray*}
    &&\zeta Tr\left(\Rgen[m]\frac{\partial \mathbf{E}[m]}{\partial W^{(k)}_{ij}}\mathbf{G}^{(k)}[m]^T\right)=\\
    &&\hspace{1.2in}\zeta Tr\left(\Rgen[m]\sum_{n=mB+1}^{(m+1)B}\frac{\partial \ber[n]}{\partial W^{(k)}_{ij}}\rvg^{(k)}(h^{(k)}[n])^T\right).
\end{eqnarray*}
Substituting the definition $\tilde{\rvg}^{(k)}[n]=\Rgen[m]^T\rvg^{(k)}(h^{(k)}[n])$, we obtain:
\begin{eqnarray*}
    &&\zeta Tr\left(\mathbf{R}_{\rvg(\rvh^{(k)})\ber}[m]\frac{\partial \mathbf{E}[m]}{\partial W^{(k)}_{ij}}\mathbf{G}^{(k)}[m]^T\right),\\
    &&\hspace{1.5in}=\zeta Tr\left(\sum_{n=mB+1}^{(m+1)B}h_j^{(k-1)}[n]\Phi^{(k)}[n]\mathbf{e}_i\tilde{\rvg}^{(k)}[n]^T\right),\\
    &&\hspace{1.5in}=\zeta \sum_{n=mB+1}^{(m+1)B}\mathbf{e}_j^T\rvh^{(k-1)}[n]\tilde{\rvg}^{(k)}[n]^T\Phi^{(k)}[n]\mathbf{e}_i,\\
     &&\hspace{1.5in}=\mathbf{e}_i^T\left(\zeta \sum_{n=mB+1}^{(m+1)B}\Phi^{(k)}[n]^T\tilde{\rvg}^{(k)}[n]{\rvh^{(k-1)}}^T\right)\mathbf{e}_j.
\end{eqnarray*}
Now, defining:
\begin{eqnarray*}
\mathbf{\psi}^{(k)}[n]=\Phi^{(k)}[n]^T\tilde{\rvg}^{(k)}[n],
\end{eqnarray*} and assembling these into the matrix:
\begin{eqnarray*}
\mathbf{\Psi}^{(k)}[m]=\left[\begin{array}{cccc} \mathbf{\psi}^{(k)}[mB+1] & \mathbf{\psi}^{(k)}[mB+2] & \ldots & \mathbf{\psi}^{(k)}[(m+1)B] \end{array}\right],
\end{eqnarray*}
we can compactly express the weight and bias updates as:
\begin{eqnarray*}
    \Delta\rmW_2^{(k)}[m]&=&\zeta \mathbf{\Psi}^{(k)}[m]\mathbf{H}^{(k-1)}[m]^T, \\
    \Delta\rvb_2^{(k)}[m]&=& \zeta \mathbf{\Psi}^{(k)}[m]\mathbf{1}_{B\times 1}.
\end{eqnarray*}

\subsection{Update corresponding to the layer entropy regularization}
\label{sec:entr_grad}
In Section \ref{sec:layerentropy}, we introduced the layer entropy objective as
\begin{align}
J^{(k)}_E(\rvh^{(k)})[m]=\frac{1}{2}\log\det(\rmR^{(k)}_{\mathbf{h}}[m]+\varepsilon^{(k)}\rmI) \label{eq:layer_etnropyeqapp}
\end{align}
where,
\begin{align}  
\rmR^{(k)}_{\mathbf{h}}[m]&=\lambda_E \rmR^{(k)}_{\mathbf{h}}[m-1]+(1-\lambda_E)\frac{1}{B}{\rmH^{(k)}[m]}{\rmH^{(k)}[m]^T}, \hspace{0.2in} \text{and,}\nonumber\\
\rmH^{(k)}[m]&=[\begin{array}{ccc} \rvh^{(k)}[mB+1]  & .. & \rvh^{(k)}[(m+1)B] \end{array}]. 
\end{align}
The derivative of the entropy objective in Eq.~(\ref{eq:layer_etnropyeqapp}) with respect to $W_{ij}$ is given by
\begin{align*}
    \frac{\partial J^{(k)}_E[m]}{\partial W^{(k)}_{ij}}&=Tr \left(\nabla_{\rmR^{(k)}_{\mathbf{h}}[m]+\varepsilon^{(k)}\rmI}J_E[m] \cdot \frac{\partial(\rmR^{(k)}_{\mathbf{h}}[m]+\varepsilon^{(k)}\rmI)}{\partial W^{(k)}_{ij}}\right) \\
    &=\frac{2(1-\lambda_E)}{B}Tr\left((\rmR^{(k)}_{\mathbf{h}}[m]+\varepsilon^{(k)}\rmI)^{-1}\rmH^{(k)}[m]{\frac{\partial\rmH^{(k)}[m]}{\partial W^{(k)}_{ij}}}^T\right)
\end{align*}

In this expression, the derivative term can be explicitly written as
\begin{align*}
\frac{\partial\rmH^{(k)}[m]}{\partial W^{(k)}_{ij}}&=\rmF_d^{(k)}[m]\odot\frac{\partial (\rmW^{(k)}\rmH^{(k-1)}[m]+\rvb^{(k)})}{\partial W^{(k)}_{ij}} \\
&= \rmF_d^{(k)}[m] \odot (\rve_i\rve_j^T \rmH^{(k-1)}[m])\\
&= \rve_i({\rmF_d}^{(k)}_{i,:}[m] \odot  \rmH^{(k-1)}_{j,:}[m])
\end{align*}

\begin{align*}
\frac{\partial J^{(k)}_E[m]}{\partial W^{(k)}_{ij}}    &=\frac{2(1-\lambda_E)}{B}Tr\left((\rmR^{(k)}_{\mathbf{h}}[m]+\varepsilon^{(k)}\rmI)^{-1}\rmH^{(k)}[m]({\rmF_d}^{(k)}_{i,:}[m] \odot  \rmH^{(k-1)}_{j,:}[m])^T\rve_{i}^T\right) \\
&= \frac{2(1-\lambda_E)}{B}Tr\left((((\rmR^{(k)}_{\mathbf{h}}[m]+\varepsilon^{(k)}\rmI)^{-1}\rmH^{(k)}[m])\odot {\rmF_d}^{(k)}[m]) \rmH^{(k-1)}[m]^T\rve_j\rve_{i}^T\right)\\
&= \frac{2(1-\lambda_E)}{B}\rve_{i}^T(((\rmR^{(k)}_{\mathbf{h}}[m]+\varepsilon^{(k)}\rmI)^{-1}\rmH^{(k)}[m])\odot {\rmF_d}^{(k)}[m]) \rmH^{(k-1)}[m]^T\rve_j\\
&=\frac{2(1-\lambda_E)}{B}\left[(((\rmR^{(k)}_{\mathbf{h}}[m]+\varepsilon^{(k)}\rmI)^{-1}\rmH^{(k)}[m])\odot {\rmF_d}^{(k)}[m]) \rmH^{(k-1)}[m]^T\right]_{ij}
\end{align*}

Consequently, we obtain
\begin{align}
    \label{eq:Entropy_gradW}
    \nabla_{\rmW^{(k)}}J^{(k)}_E[m]=\frac{2(1-\lambda_E)}{B}(((\rmR^{(k)}_{\mathbf{h}}[m]+\varepsilon^{(k)}\rmI)^{-1}\rmH^{(k)}[m])\odot {\rmF_d}^{(k)}[m]) \rmH^{(k-1)}[m]^T.
\end{align}
Through a similar derivation, we also obtain
\begin{align}
  \label{eq:Entropy_gradb}
    \nabla_{\rvb^{(k)}}J^{(k)}_E[m]=\frac{2(1-\lambda_E)}{B}(((\rmR^{(k)}_{\mathbf{h}}[m]+\varepsilon^{(k)}\rmI)^{-1}\rmH^{(k)}[m])\odot {\rmF_d}^{(k)}[m])\mathbf{1}_{B \times 1}.  
\end{align}

\subsection{Update corresponding to the power normalization regularization}
\label{sec:pn_grad}
In Section \ref{sec:pownormalization}, we introduced power normalization objective as,
\begin{align*}
    J_P^{(k)}(\rvh^{(k)})[m]=\sum_{l=1}^{N^{(k)}}\left(\frac{1}{B}\sum_{n=mB+1}^{(m+1)B}h^{(k)}_l[n]^2-P^{(k)}\right)^2. %
\end{align*}
The derivative of this objective with respect to $W_{ij}^{(k)}$ can be written as
\begin{align*}
    \frac{\partial J_P^{(k)}[m]}{W^{(k)}_{ij}}&=4\underbrace{\left(\frac{1}{B}\sum_{n=mB+1}^{(m+1)B}h^{(k)}_i[n]^2-P^{(k)}\right)}_{d_i[m]}\left(\frac{1}{B}\sum_{n=mB+1}^{(m+1)B}h^{(k)}_i[n]{f^{(k)}}'(u^{(k)}[n])\frac{\partial h^{(k)}_i[n]}{\partial W^{(k)}_{ij}}\right)\\
    &=4d_i[m]\left(\frac{1}{B}\sum_{n=mB+1}^{(m+1)B}h^{(k)}_i[n]{f^{(k)}}'(u^{(k)}[n])h^{(k-1)}_j[n]\right).  
\end{align*}
Therefore, the gradient of the power-normalization objective with respect to $\rmW^{(k)}$ can be written as
\begin{align}
\label{eq:pn_gradW}
\nabla_{\rmW^{(k)}}J_P^{(k)}[m]=\frac{4}{B}\mathbf{D}[m](\rmH^{(k)}[m]\odot\rmF_d^{(k)})\rmH^{(k-1)}[m]^T,
\end{align}
where $\rmD[m]=\text{diag}(d_1[m], d_2[m], \ldots, d_{N^{(k)}}[m])$.
The gradient with respect to $\rvb^{(k)}$ can be obtained in a similar way as
\begin{align}
\label{eq:pn_gradb}
\nabla_{\rvb^{(k)}}J_P^{(k)}[m]=\frac{4}{B}\mathbf{D}[m](\rmH^{(k)}[m]\odot \rmF_d^{(k)})\mathbf{1}_{B \times 1}.
\end{align}

\subsection{On the EBD with forward projections}
\label{sec:EBDForwardDer}

In the EBD algorithm introduced in Section \ref{sec:EBDAlgo}
, output errors are broadcast to individual layers to modify their weights, thereby reducing the correlation between hidden layer activations and output errors. To enhance this mechanism, we introduce forward broadcasting, where hidden layer activations are projected onto the output layer. This projection facilitates the optimization of the decorrelation loss by adjusting the parameters of the final layer more effectively.

The purpose of forward broadcasting is to enhance the network's ability to minimize the decorrelation loss by directly influencing the final layer's weights using the activations from the hidden layers. By projecting the hidden layer activations forward onto the output layer, we establish a direct pathway for these activations to impact the adjustments of the final layer's weights. This mechanism allows the final layer to update its parameters in a way that reduces the correlation between the output errors and the hidden layer activations. Consequently, the errors at the output layer are steered toward being orthogonal to the hidden layer activations.

This mechanism could potentially be effective because the final layer is responsible for mapping the network's internal representations to the output space. By incorporating information from earlier layers, we enable the final layer to align its parameters more closely with the features that are most relevant for reducing the overall error.

While the proposed forward broadcasting mechanism is primarily motivated by performance optimization, it can conceptually be related to the  long-range \citep{leong2016long} and bottom-up \citep{ibrahim2021bottom} synaptic connections in the brain, which allow certain neurons to influence distant targets. These long-range bottom-up connections are actively being researched, and incorporating similar mechanisms into computational models could enhance their alignment with biological neural processes. By integrating mechanisms that mirror these neural pathways, forward broadcasting may be useful for modeling how information is transmitted across different neural circuits.

\subsubsection{Gradient derivation for the EBD with forward projections}
We derive the gradients of the layer decorrelation losses with respect to the parameters of the final layer.   The partial derivative of the objective function $J^{(k)}(\rvh^{(k)}, \ber)$ with respect to the final layer weights can be written as:
\begin{eqnarray*}
   \frac{\partial J^{(k)}(\rvh^{(k)},\ber)}{\partial W^{(L)}_{ij}}[m]&=&\zeta Tr\left(\hat{\mathbf{R}}_{\rvg(\rvh^{(k)})\ber}[m]\frac{\partial (\mathbf{E}[m]\mathbf{G}^{(k)}[m]^T)}{\partial W^{(L)}_{ij}}\right)\\
   &=&\underbrace{\zeta Tr\left(\hat{\mathbf{R}}_{\rvg(\rvh^{(k)})\ber}[m]\frac{\partial \mathbf{E}[m]}{\partial W^{(L)}_{ij}}\mathbf{G}^{(k)}[m]^T\right)}_{[\Delta\rmW^{(L,k),f}[m]]_{ij}}, \\
   &=& \zeta Tr\left(\mathbf{R}_{\rvg(\rvh^{(k)})\ber}[m]\sum_{n=mB+1}^{(m+1)B}\frac{\partial \ber[n]}{\partial W^{(L)}_{ij}}\mathbf{g}(h^{(k)}[n])^T\right).
\end{eqnarray*}
Substituting the definition $\tilde{\rvg}^{(k)}[n]=\mathbf{R}_{\rvg(\rvh^{(k)})\ber}[m]^T\mathbf{g}(h^{(k)}[n])$,  we can further express the partial derivative as:
\begin{eqnarray*}
   \frac{\partial J^{(k)}(\rvh^{(k)},\ber)}{\partial W^{(L)}_{ij}}[m]&=&\zeta Tr\left(\sum_{n=mB+1}^{(m+1)B}h_j^{(L-1)}[n]\Phi^{(L)}[n]\mathbf{e}_i\tilde{\rvg}^{(k)}[n]^T\right),\\
    &=&\zeta \sum_{n=mB+1}^{(m+1)B}\mathbf{e}_j^T\rvh^{(L-1)}[n]\tilde{\rvg}^{(k)}[n]^T\Phi^{(L)}[n]\mathbf{e}_i,\\
     &=&\mathbf{e}_i^T\left(\zeta \sum_{n=mB+1}^{(m+1)B}\Phi^{(L)}[n]^T\tilde{\rvg}^{(k)}[n]{\rvh^{(L-1)}}^T\right)\mathbf{e}_j,\\
     &=&\mathbf{e}_i^T\left(\zeta \sum_{n=mB+1}^{(m+1)B}(f'(\rvu^{(L)}[n])\odot \tilde{\rvg}^{(k)}[n]){\rvh^{(L-1)}}^T\right)\mathbf{e}_j.
\end{eqnarray*}

Next, defining the following terms:
\begin{eqnarray*}
\mathbf{\psi}^{(k,L)}[n]=f'(\rvu^{(L)}[n])\odot \tilde{\rvg}^{(k)}[n],
\end{eqnarray*} and assembling them into the matrix:
\begin{eqnarray*}
\mathbf{\Psi}^{(k,L)}[m]=\left[\begin{array}{cccc} \mathbf{\psi}^{(k,L)}[mB+1] & \mathbf{\psi}^{(k,L)}[mB+2] & \ldots & \mathbf{\psi}[(m+1)B] \end{array}\right],
\end{eqnarray*}
we can write the weight update as:
\begin{eqnarray*}
    \Delta\rmW^{(L,k),f}[m]&=&\zeta \mathbf{\Psi}^{(k,L)}[m]\mathbf{H}^{(k-1)}[m]^T.
\end{eqnarray*}
Following a similar procedure, the bias update can be written as:
\begin{eqnarray*}
    \Delta\rvb^{(L,k),f}[m]&=& \zeta \mathbf{\Psi}^{(k,L)}[m]\mathbf{1}_{B\times 1}.
\end{eqnarray*}
Based on these expressions, we can write
\begin{align*}
    [\Delta\rmW^{(L,k),f}[m]]_{ij}&=\zeta \sum_{n=mB+1}^{(m+1)B}{f^{(L)}}'(u_i^{(L)}[n]) \tilde{g}_i^{(k)}[n]\rvh_j^{(L-1)} \\
    [\Delta\rvb^{(L,k),f}[m]]_{i}&=\zeta \sum_{n=mB+1}^{(m+1)B}{f^{(L)}}'(u_i^{(L)}[n]) \tilde{g}_i^{(k)}[n].
\end{align*}

\newpage
\section{Additional extensions of EBD}
\label{app:extensionEBD}

\subsection{Extensions to Convolutional Neural Networks (CNNs)}
\label{sec:CNNEBD}
\noindent Let $\mathbf{H}^{(k)}\in \mathbb{R}^{P^{(k)} \times M^{(k)} \times N^{(k)}}$ represent the output of the $k^{th}$ layer of a Convolutional Neural Network (CNN), where  $P^{(k)}$ is the number of channels and the layer's output is $M^{(k)}\times N^{(k)}$ dimensional. Furthermore, we use $\mathbf{W}^{(k,p)} \in \mathbb{R}^{P^{(k-1)} \times \Omega^{(k)} \times \Omega^{(k)}}$ and $\mathbf{b}^{(k,p)} \in \mathbb{R}$ to represent the filter tensor weights and bias coefficient respectively for the  channel-$p$ of the $k^{th}$ layer, and $\Omega^{(k)}$ is the symmetric convolution kernel size. Then a convolutional layer can be defined as
\begin{equation}
\label{eq:hmatrix}
\mathbf{H}^{(k,p)} = f(\mathcal{U}^{(k,p)}),
\end{equation}
\begin{equation}
\label{eq:umatrix}
\mathcal{U}^{(k,p)} = (\mathbf{H}^{(k-1)} \ast \mathbf{W}^{(k,p)}) + \mathbf{b}^{(k,p)},
\end{equation}
where the symbol "$\ast$" represents the convolution \footnote{Although we call it as convolution, in CNNs, the actual operation used is the correlation operation where the kernel is unflipped. } operation that acts upon both the spatial and channel dimensions to generate the $p^{th}$ channel of $k^{th}$ layer output $\mathbf{H}^{(k,p)}$, and $f$ is the nonlinearity acted on the convolution output. 

\subsubsection{Error Broadcast and Decorrelation formulation}
Similar to Eq.~(\ref{eq:layercrosscorr}), we have the cross-correlation between output errors $\mathbf{\epsilon}$ and the arbitrary function of the $k^{th}$ layer activation of the $p^{th}$ channel denoted as $\mathbf{g}^{(k)}(\mathbf{H}^{(k,p)})$, for each layer and spatial indexes $r \in \mathbb{Z} : 1 \leq r \leq M^{(k)}$ and $s \in \mathbb{Z} : 1 \leq s \leq N^{(k)}$ as
\begin{equation}
\label{eq:crosscorrelation}
\mathbf{R}_{\mathbf{g}^{(k)}(\mathbf{H}^{(k,p)})\ber}[q,r,s]=\mathbb{E}[\mathbf{g}^{(k)}(\mathbf{H}^{(k,p)}[r,s])\mathbf{\epsilon}_q ] =\mathbf{0}.
\end{equation}
Then this cross-correlation must ideally be zero due to the stochastic orthogonality condition. We can then write the loss for layer-$k$ at batch-$m$ as:
\begin{equation}
\label{eq:costJ}
J^{(k)}(\mathbf{H}^{(k,p)},\ber)[m]= \frac{1}{2}\sum_{q=1}^{n_c}\left\|\hat{\mathbf{R}}_{\mathbf{g}^{(k)}(\mathbf{H}^{(k,p)})\ber}[m,q,:,:]\right\|_F^2,
\end{equation}
where $\hat{\mathbf{R}}_{\mathbf{g}(\mathbf{H}^{(k),p})\ber}$ is the recurrently estimated cross-correlation using the training batches. Then we can optimize the network by taking the derivative of the loss function with respect to the weight $\mathbf{W}_{hij}^{(k,p)}$ corresponding to input channel $h$ and weight spatial indexes $i,j \in \mathbb{Z} : 1 \leq i,j \leq \Omega^{(k)}$ as
\begin{equation}
\begin{aligned}
&\frac{\partial J^{(k)}(\mathbf{H}^{(k,p)},\ber)[m]}{\partial \mathbf{W}_{hij}^{(k,p)}} \\
&= \zeta \sum_{q=1}^{n_c}\sum_{n=mB+1}^{(m+1)B} \epsilon_q[n] \cdot Tr\left((\hat{\mathbf{R}}_{\mathbf{g}^{(k)}(\mathbf{H}^{(k,p)})\ber}[m,q,:,:]^T \frac{\partial \mathbf{g}^{(k)}(\mathbf{H}^{(k,p)}[n,:,:])}{\partial \mathbf{W}_{hij}^{(k,p)}}\right) \\
&= \zeta \sum_{q=1}^{n_c}\sum_{n=mB+1}^{(m+1)B} \sum_{r,s}\epsilon_q[n] \left[(\hat{\mathbf{R}}_{\mathbf{g}^{(k)}(\mathbf{H}^{(k,p)})\ber}[m,q,:,:] \odot \frac{\partial \mathbf{g}^{(k)}(\mathbf{H}^{(k,p)}[n,:,:])}{\partial \mathbf{W}_{hij}^{(k,p)}}\right]_{[r,s]},
\end{aligned}
\label{eq:derivativeconv}
\end{equation}
in which $n_c$ is the error dimension, $N^{(k)}$ and $M^{(k)}$ are the width and height of the $k^{th}$ layer, and the derivative with respect to the $\mathbf{\epsilon}$ term is neglected. The inner partial derivative term can be written as
\begin{equation}
\label{eq:derivativeg}
\frac{\partial \mathbf{g}^{(k)}(\mathbf{H}^{(k,p)}[n,:,:])}{\partial \mathbf{W}_{hij}^{(k,p)}} = \mathbf{g'}^{(k)}(\mathbf{{H}}^{(k,p)}[n,:,:]) \odot \frac{\partial \mathbf{{H}}^{(k,p)}[n,:,:]} {\partial \mathbf{W}_{hij}^{(k,p)}},
\end{equation}
and using the Eq.~(\ref{eq:hmatrix}) and Eq.~(\ref{eq:umatrix}),
\begin{equation}
\label{eq:derivativeh}
\frac{\partial \mathbf{{H}}^{(k,p)}[n,:,:]} {\partial \mathbf{W}_{hij}^{(k,p)}} = f'(\mathcal{U}^{(k,p)}[n,:,:]) \odot (\mathcal{E}^{(k)}_{hij} \ast \mathbf{{H}}^{(k-1)}[n,:,:]).
\end{equation}
where $\mathcal{E}^{(k)}_{hij} \in \mathbb{R}^{P^{(k-1)} \times \Omega^{(k)} \times \Omega^{(k)}}$ is a Kronecker delta tensor that occurs as the gradient of $\mathbf{W}^{(k,p)}$ with respect to $\mathbf{W}_{hij}^{(k,p)}$. Combining the expressions, we have
\begin{equation}
\label{eq:phi}
\phi[n,p,:,:] = \sum_{q=1}^{n_c} \epsilon_q[n] \cdot \left(\hat{\mathbf{R}}_{\mathbf{g}^{(k)}(\mathbf{H}^{(k,p)})\ber}[n,q,:,:] \odot \mathbf{g}^{(k)}(\mathbf{H}^{(k,p)}[n,:,:]) \odot f'(\mathcal{U}^{(k,p)}[n,:,:])\right).
\end{equation}
Then, combining the Equations (\ref{eq:derivativeconv}), (\ref{eq:derivativeg}), (\ref{eq:derivativeh}), and then writing the convolution explicitly, we have
\begin{align*}
&\frac{\partial J^{(k)}(\mathbf{H}^{(k,p)},\ber)[m]}{\partial \mathbf{W}_{hij}^{(k,p)}} = \zeta \sum_{n=mB+1}^{(m+1)B} \sum_{r,s} \left[\phi[n,p,:,:] \odot (\mathcal{E}^{(k)}_{hij} \ast \mathbf{{H}}^{(k-1)}[n,:,:])\right]_{[r,s]} \\
&= \zeta  \sum_{n=mB+1}^{(m+1)B} \sum_{r,s} \phi[n,p,r,s] \cdot \left( \sum_{h',i',j'} \mathcal{E}^{(k)}_{hij}[h',i',j'] \cdot \mathbf{H}^{(k-1,h')}[n, r+i', s+j'] \right).
\end{align*}
By the definition of the delta function $\mathcal{E}^{(k)}_{hij}$ and writing the resulting expression as a 2D convolution between $\mathbf{{H}}^{(k-1)}$ and $\phi$ respectively, we have
\begin{align*}
= & \zeta  \sum_{n=mB+1}^{(m+1)B} \sum_{r,s} \phi[n,p,r,s] \cdot \mathbf{H}^{(k-1,h)}[n, r+i, s+j]\\
= & \zeta \sum_{n=mB+1}^{(m+1)B} \left[\phi[n,p,:,:] \ast \mathbf{{H}}^{(k-1,h)}[n,:,:]\right]_{[i,j]}.
\end{align*}

The resulting expression for the weight update is:
\begin{equation}
\label{eq:derivativecnnweight}
\frac{\partial J^{(k)}(\mathbf{H}^{(k,p)},\ber)[m]}{\partial \mathbf{W}_{h}^{(k,p)}} = \zeta \sum_{n=mB+1}^{(m+1)B} (\phi[n,p,:,:] \ast \mathbf{{H}}^{(k-1,h)}[n,:,:]).
\end{equation}

Similarly, it can be shown that the bias update:
\begin{equation*}
\label{eq:derivativecnnbias}
\frac{\partial J^{(k)}(\mathbf{H}^{(k,p)},\ber)[m]}{\partial \mathbf{b}^{(k,p)}} = \zeta \sum_{n=mB+1}^{(m+1)B} \sum_{r=1}^{N^{(k)}} \sum_{s=1}^{M^{(k)}} \phi[n,p,r,s].
\end{equation*}

The convolutional layer parameters can be trained using these gradient formulas for each layer separately, and can be calculated by utilizing the batched convolution operation.

\subsubsection{Weight entropy objective}
The layer entropy objective is computationally cumbersome for a convolutional layer that has multiple dimensions. Therefore, we propose the weight-entropy objective to avoid dimensional collapse
\begin{eqnarray*}
J^{(k)}_E(\mathbf{W}^{(k)})=\frac{1}{2}\log\det(\mathbf{R}_{\mathbf{\overline{W}}^{(k)}}+\varepsilon^{(k)}\rmI),
\end{eqnarray*}
where we define $\mathbf{\overline{W}}^{(k)} \in \mathbb{R}^{P^{(k)} \times P^{(k-1)} . \Omega^{(k)} . \Omega^{(k)}}$ as the unraveled version of the full size weight tensor ${\mathbf{{W}}^{(k)}}$, and the covariance matrix $\mathbf{R}_{\mathbf{\overline{W}}^{(k)}}$ is conditionally defined as:
\begin{eqnarray*}
\mathbf{R}_{\mathbf{\overline{W}}^{(k)}}=\begin{cases}
{\mathbf{\overline{W}}^{(k)T}}{\mathbf{\overline{W}}^{(k)}}, & \text{ if } P^{(k)} \geq P^{(k-1)} . \Omega^{(k)} . \Omega^{(k)},\\
{\mathbf{\overline{W}}^{(k)}}{\mathbf{\overline{W}}^{(k)T}}, & \text{ otherwise,}
\end{cases}
\end{eqnarray*}
to decrease its dimensions and reduce the computational costs for further steps. Then, the derivative of this objective can be written as:
\begin{equation*}
\label{eq:derivativeoflog}
\Delta J^{(k)}_E(\mathbf{W}^{(k)}) = {
\begin{cases}
{\mathbf{\overline{W}}^{(k)}}\mathbf{R}_{\mathbf{\overline{W}}^{(k)}}^{-1}, & \text{ if } P^{(k)} \geq P^{(k-1)} . \Omega^{(k)} . \Omega^{(k)},\\
\mathbf{R}_{\mathbf{\overline{W}}^{(k)}}^{-1}{\mathbf{\overline{W}}^{(k)}}, & \text{ otherwise.}
\end{cases}}
\end{equation*}

Therefore, $\frac{\partial J_E(\mathbf{W}^{(k)})}{\partial W_{hij}^{(k,p)}}$ can be obtained by reshaping $\Delta J^{(k)}_E(\mathbf{W}^{(k)})$ as the weight tensor $\mathbf{W}^{(k)}$.

\subsubsection{Activation sparsity regularization}

To further regularize the model, we enforce the layer activation sparsity loss that is given as
\begin{equation}
\label{eq:cnnsparsityloss}
J_{\ell_1}^{(k)}(\mathbf{H}^{(k,p)}) = \frac{\|\mathbf{{H}}^{(k,p)}\|_1}{|\mathbf{{H}}^{(k,p)}\|_2}.
\end{equation}
The gradient of the sparsity loss with respect to the hidden layer can be written as:
\begin{equation}
\label{eq:cnnsparsitygrad}
\Delta J_{\ell_1}^{(k)}(\mathbf{H}^{(k,p)}) = \frac{1}{\|\mathbf{{H}}^{(k,p)}\|_2} \text{sign}(\mathbf{{H}}^{(k,p)}) - \frac{\|\mathbf{{H}}^{(k,p)}\|}{\|\mathbf{{H}}^{(k,p)}\|^{3}_{2}} \mathbf{{H}}^{(k,p)}.
\end{equation}
Then, the gradient of the loss with respect to the model weights can be calculated in a similar manner as the Eq.~(\ref{eq:derivativecnnweight}):
\begin{equation*}
\label{eq:sparselast}
\begin{aligned}
\frac{\partial J_{\ell_1}^{(k)}(\mathbf{H}^{(k,p)})[m]}{\partial \mathbf{W}_{h}^{(k,p)}} &= \frac{1}{B} \sum_{n=mB+1}^{(m+1)B} \Bigg(\Delta J_{\ell_1}^{(k)}(\mathbf{H}^{(k,p)})[n,p,:,:] \ast \mathbf{H}^{(k-1,h)}[n,:,:] \Bigg).
\end{aligned}
\end{equation*}

\subsection{Extensions to Locally Connected (LC) Networks }
\label{sec:LCEBD}
Let $\mathbf{H}^{(k)}\in \mathbb{R}^{P^{(k)} \times M^{(k)} \times N^{(k)}}$ represent the output of the $k^{th}$ layer of a Locally Connected Network (LC), where $P^{(k)}$ is the number of channels and the layer's output is $M^{(k)}\times N^{(k)}$ dimensional. We use $\mathbf{W}^{(k,p,r,s)} \in \mathbb{R}^{P^{(k-1)} \times \Omega^{(k)} \times \Omega^{(k)}}$ and $\mathbf{b}^{(k,p,r,s)} \in \mathbb{R}$ to represent the filter tensor weights and bias coefficient at spatial locations $r \in \mathbb{Z} : 1 \leq r \leq M^{(k)}$ and $s \in \mathbb{Z} : 1 \leq s \leq N^{(k)}$, for channel-$p$ of the $k^{th}$ layer, where $\Omega^{(k)}$ is the local receptive field size. Then a locally connected layer can be defined as
\begin{equation}
\label{eq:hmatrix_lcn}
\mathbf{H}^{(k,p)} = f(\mathcal{U}^{(k,p)}),
\end{equation}
\begin{equation}
\label{eq:umatrix_lcn}
\mathcal{U}^{(k,p)} = (\mathbf{H}^{(k-1)} \circledast \mathbf{W}^{(k,p)}) + \mathbf{b}^{(k,p)},
\end{equation}
where the symbol "$\circledast$" represents the locally connected operation which acts upon both the spatial and channel dimensions, but without weight sharing across spatial locations, generating the $p^{th}$ channel of the $k^{th}$ layer output $\mathbf{H}^{(k,p)}$, and $f$ is the nonlinearity applied to the result.

\subsubsection{Error Broadcast and Decorrelation formulation}
For the LC network, the stochastic orthogonality condition and the corresponding loss $J^{(k)}(\mathbf{H}^{(k,p)},\ber)[m]$ for layer-$k$ at batch-$m$ can be written equivalently as Eq.~(\ref{eq:crosscorrelation}) and Eq.~(\ref{eq:costJ}) respectively. Then the optimization can be performed by taking the derivative of the loss function with respect to $\mathbf{W}_{hij}^{(k,p,r,s)}$ corresponding to input channel $h$, weight spatial indexes $i,j \in \mathbb{Z} : 1 \leq i,j \leq \Omega^{(k)}$ as
\begin{equation}
\begin{aligned}
&\frac{\partial J^{(k)}(\mathbf{H}^{(k,p)},\ber)[m]}{\partial \mathbf{W}_{hij}^{(k,p,r,s)}} \\
&= \zeta \sum_{q=1}^{n_c}\sum_{n=mB+1}^{(m+1)B} \epsilon_q[n] \cdot Tr\left((\hat{\mathbf{R}}_{\mathbf{g}^{(k)}(\mathbf{H}^{(k,p)})\ber}[m,q,:,:]^T \frac{\partial \mathbf{g}^{(k)}(\mathbf{H}^{(k,p)}[n,:,:])}{\partial \mathbf{W}_{hij}^{(k,p,r,s)}}\right) \\
&= \zeta \sum_{q=1}^{n_c}\sum_{n=mB+1}^{(m+1)B} \sum_{r,s} \epsilon_q[n] \left[(\hat{\mathbf{R}}_{\mathbf{g}^{(k)}(\mathbf{H}^{(k,p)})\ber}[m,q,:,:] \odot \frac{\partial \mathbf{g}^{(k)}(\mathbf{H}^{(k,p)}[n,:,:])}{\partial \mathbf{W}_{hij}^{(k,p,r,s)}}\right]_{[r,s]}.
\end{aligned}
\label{eq:derivativelc}
\end{equation}
The inner partial derivative term can be written as
\begin{equation}
\label{eq:derivativeg2}
\frac{\partial \mathbf{g}^{(k)}(\mathbf{H}^{(k,p)}[n,:,:])}{\partial \mathbf{W}_{hij}^{(k,p,r,s)}} = \mathbf{g'}^{(k)}(\mathbf{{H}}^{(k,p)}[n,:,:]) \odot \frac{\partial \mathbf{{H}}^{(k,p)}[n,:,:]} {\partial \mathbf{W}_{hij}^{(k,p,r,s)}},
\end{equation}
and using Eq.~(\ref{eq:hmatrix_lcn}) and Eq.~(\ref{eq:umatrix_lcn}), we obtain:
\begin{equation}
\label{eq:derivativeh2}
\frac{\partial \mathbf{{H}}^{(k,p)}[n,:,:]} {\partial \mathbf{W}_{hij}^{(k,p,r,s)}} = f'(\mathcal{U}^{(k,p)}[n,:,:]) \odot (\mathcal{E}^{(k)}_{hij} \circledast \mathbf{{H}}^{(k-1)}[n,:,:]).
\end{equation}
Here, $\mathcal{E}^{(k,r,s)}_{hij} \in \mathbb{R}^{P^{(k-1)} \times \Omega^{(k)} \times \Omega^{(k)}} \times M^{(k)} \times N^{(k)}$ is a Kronecker delta tensor that occurs as the gradient of $\mathbf{W}^{(k,p)}$ with respect to $\mathbf{W}_{hij}^{(k,p,r,s)}$. Combining the expressions in Eq.~(\ref{eq:derivativelc}), Eq.~(\ref{eq:derivativeg2}), Eq.~(\ref{eq:derivativeh2}), and the expression for $\phi$ as in Eq.~(\ref{eq:phi}) which is equivalent for both CNNs and LCs, and then writing the locally connected operation explicitly, we have
\begin{align*}
&\frac{\partial J^{(k)}(\mathbf{H}^{(k,p)},\ber)[m]}{\partial \mathbf{W}_{hij}^{(k,p,r,s)}} = \zeta \sum_{n=mB+1}^{(m+1)B} \sum_{r,s} \left[\phi[n,p,:,:] \odot (\mathcal{E}^{(k,r,s)}_{hij} \circledast \mathbf{{H}}^{(k-1)}[n,:,:])\right]_{[r,s]} \\
&= \zeta  \sum_{n=mB+1}^{(m+1)B} \phi[n,p,r,s] \cdot \left(\sum_{\substack{h',i',j' \\ r',s'}}  \mathcal{E}^{(k,r,s)}_{hij}[h',i',j',r',s'] \cdot \mathbf{H}^{(k-1,h')}[n, r'+i', s'+j'] \right).
\end{align*}
Then, by the definition of the Kronecker delta, the resulting expression for the weight update is:
\begin{equation}
\label{eq:derivativelcnweight}
\frac{\partial J^{(k)}(\mathbf{H}^{(k,p)},\ber)[m]}{\partial \mathbf{W}_{hij}^{(k,p,r,s)}} = \zeta  \sum_{n=mB+1}^{(m+1)B} \left( \phi[n,p,r,s] \cdot \mathbf{H}^{(k-1,h)}[n, r+i, s+j] \right).
\end{equation}
Similarly, it can be shown that the bias update is:
\begin{equation*}
\label{eq:derivativelcnbias}
\frac{\partial J^{(k)}(\mathbf{H}^{(k,p)},\ber)[m]}{\partial \mathbf{b}^{(k,p,r,s)}} = \zeta \sum_{n=mB+1}^{(m+1)B} \phi[n,p,r,s].
\end{equation*}

\subsubsection{Weight entropy objective}
Similar to CNNs, we propose the weight-entropy objective to avoid dimensional collapse in LCs
\begin{align*}
J^{(k)}_E(\mathbf{W}^{(k)})=\frac{1}{2}\log\det(\mathbf{R}_{\mathbf{\overline{W}}^{(k)}}+\varepsilon^{(k)}\rmI),
\end{align*}
where we define $\mathbf{\overline{W}}^{(k)} \in \mathbb{R}^{P^{(k)} \times P^{(k-1)} . M^{(k)} . N^{(k)} . \Omega^{(k)} . \Omega^{(k)}}$ as the unraveled version of the full size weight tensor ${\mathbf{{W}}^{(k)}}$, then the covariance matrix $\mathbf{R}_{\mathbf{\overline{W}}^{(k)}}$ is defined as:
\begin{align*}
\mathbf{R}_{\mathbf{\overline{W}}^{(k)}}=\mathbf{\overline{W}}^{(k)T}{\mathbf{\overline{W}}^{(k)}}.
\end{align*}
Then, the derivative of this objective can be written as:
\begin{equation*}
\label{eq:derivativeoflog2}
\Delta J^{(k)}_E(\mathbf{W}^{(k)}) = {\mathbf{\overline{W}}^{(k)}}\mathbf{R}_{\mathbf{\overline{W}}^{(k)}}^{-1} 
\end{equation*}
$\frac{\partial J_E(\mathbf{W}^{(k)})}{\partial W_{hij}^{(k,p,r,s)}}$ can be obtained by reshaping $\Delta J^{(k)}_E(\mathbf{W}^{(k)})$ as the weight tensor $\mathbf{W}^{(k)}$.

\subsubsection{Activation sparsity regularization}
The layer activation sparsity loss for the LC is the same as the one given for the CNN in Eq.~(\ref{eq:cnnsparsityloss}), with its gradient with respect to the activations as in Eq.~(\ref{eq:cnnsparsitygrad}). Then, the gradient of the loss with respect to the model weights can be calculated in a similar manner as the expression in Eq.~(\ref{eq:derivativelcnweight}):
\begin{equation*}
\label{eq:sparselast2}
\begin{aligned}
\frac{\partial J_{\ell_1}^{(k)}(\mathbf{H}^{(k)})[m]}{\partial \mathbf{W}_{hij}^{(k,p,r,s)}} &= \frac{1}{B} \sum_{n=mB+1}^{(m+1)B} \Bigg(\Delta J_{\ell_1}^{(k)}(\mathbf{H}^{(k)})[n,p,r,s] \circledast \mathbf{H}^{(k-1,h)}[n,r+i,s+j] \Bigg).
\end{aligned}
\end{equation*}

\newpage
\section{Gradient alignment in EBD}
\label{sec:alignment_angles}

\subsection{Alignment between EBD updates and backpropagation gradients}
\label{app:alignment_bp}
To investigate the relationship between the EBD update directions and the gradients produced by backpropagation (BP), we analyze the cosine similarity between the EBD update vectors and the corresponding BP gradients throughout training. This analysis quantifies how well the EBD learning dynamics align with traditional gradient-based optimization methods.

We conduct experiments on two architectures: a 3-layer multilayer perceptron (MLP) and a locally connected (LC) network, both trained on CIFAR-10. Figure~\ref{fig:cosine_alignment_mlp} and Figure~\ref{fig:cosine_alignment_lc} illustrate the cosine similarity between EBD updates and BP gradients at each training epoch.

\begin{figure}[h]
    \centering
    \includegraphics[width=0.85\linewidth]{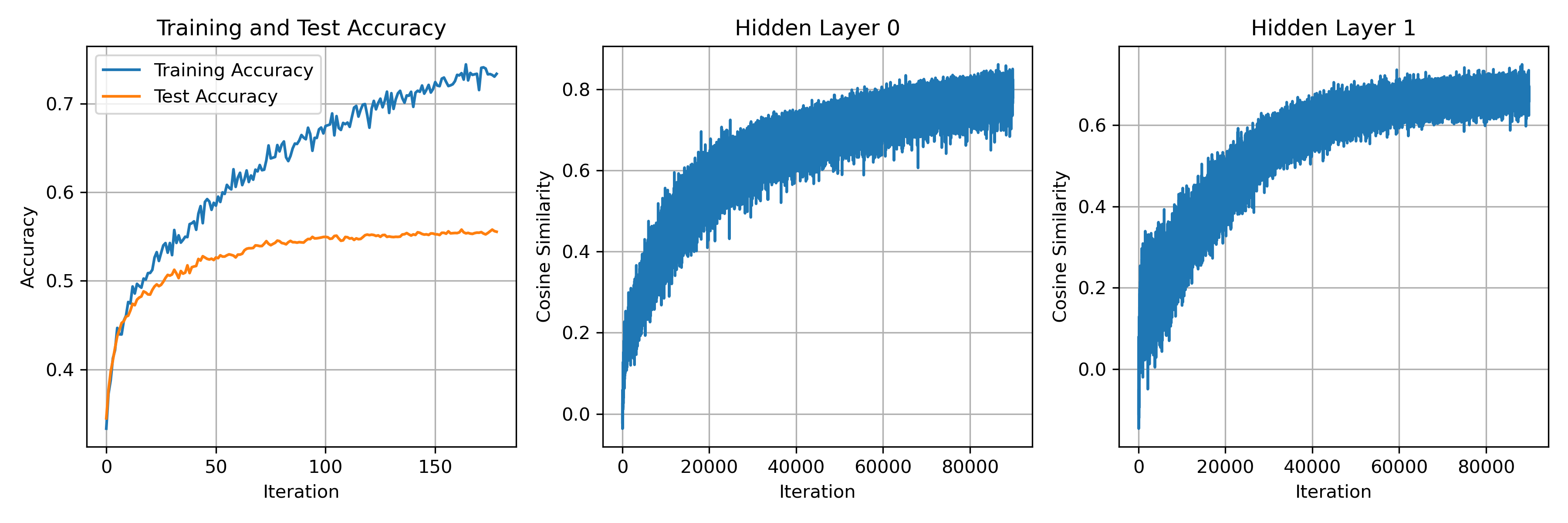}
    \caption{Cosine similarity between EBD updates and backpropagation gradients in a 3-layer MLP trained on CIFAR-10. Alignment is consistently positive and improves during training.}
    \label{fig:cosine_alignment_mlp}
\end{figure}

\begin{figure}[ht!]
    \centering
    \begin{minipage}[c][0.37\linewidth][c]{0.37\linewidth}
        \centering
        \includegraphics[width=\linewidth]{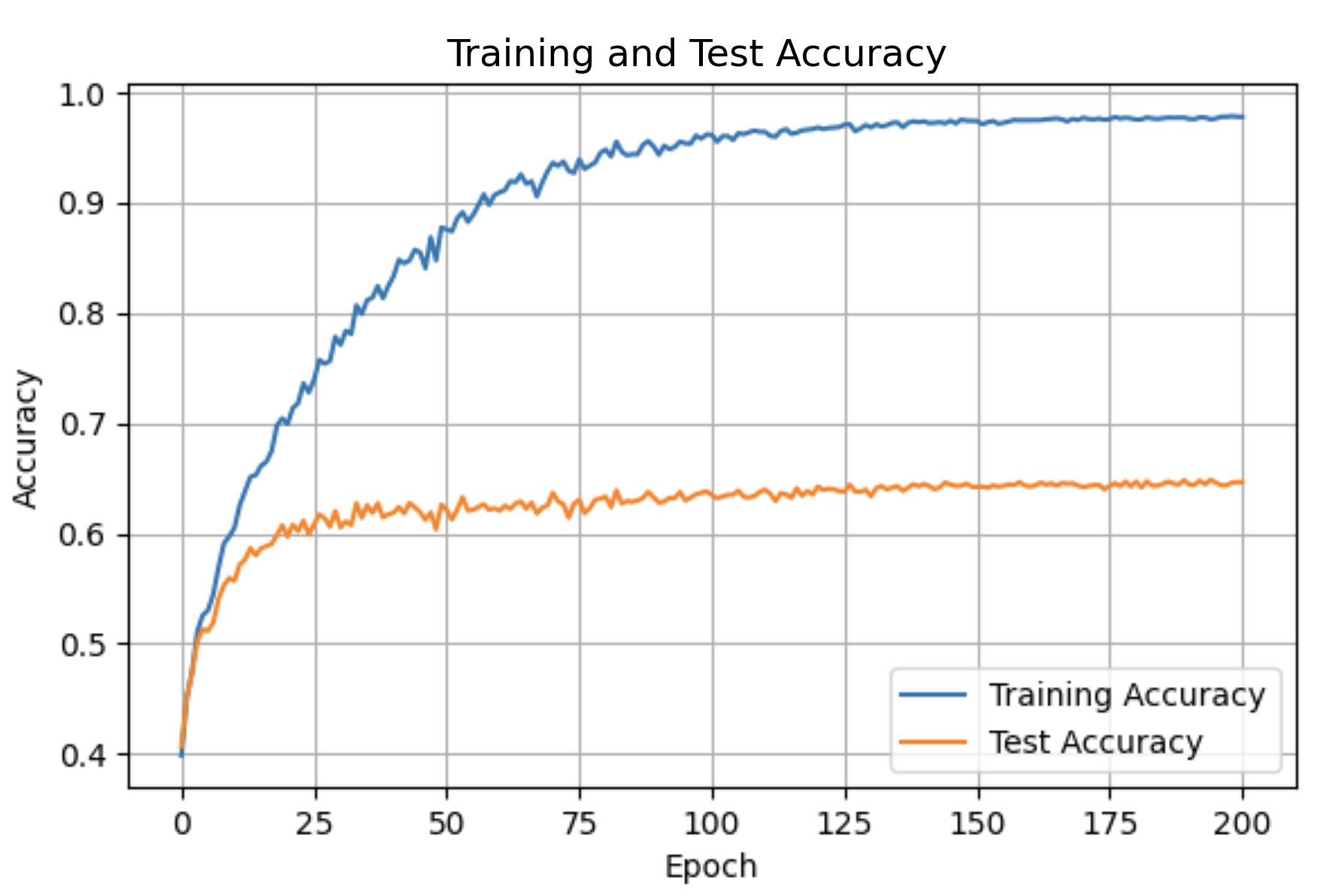}
    \end{minipage}
    \hfill
    \begin{minipage}[c][0.62\linewidth][c]{0.62\linewidth}
        \centering
        \includegraphics[width=\linewidth]{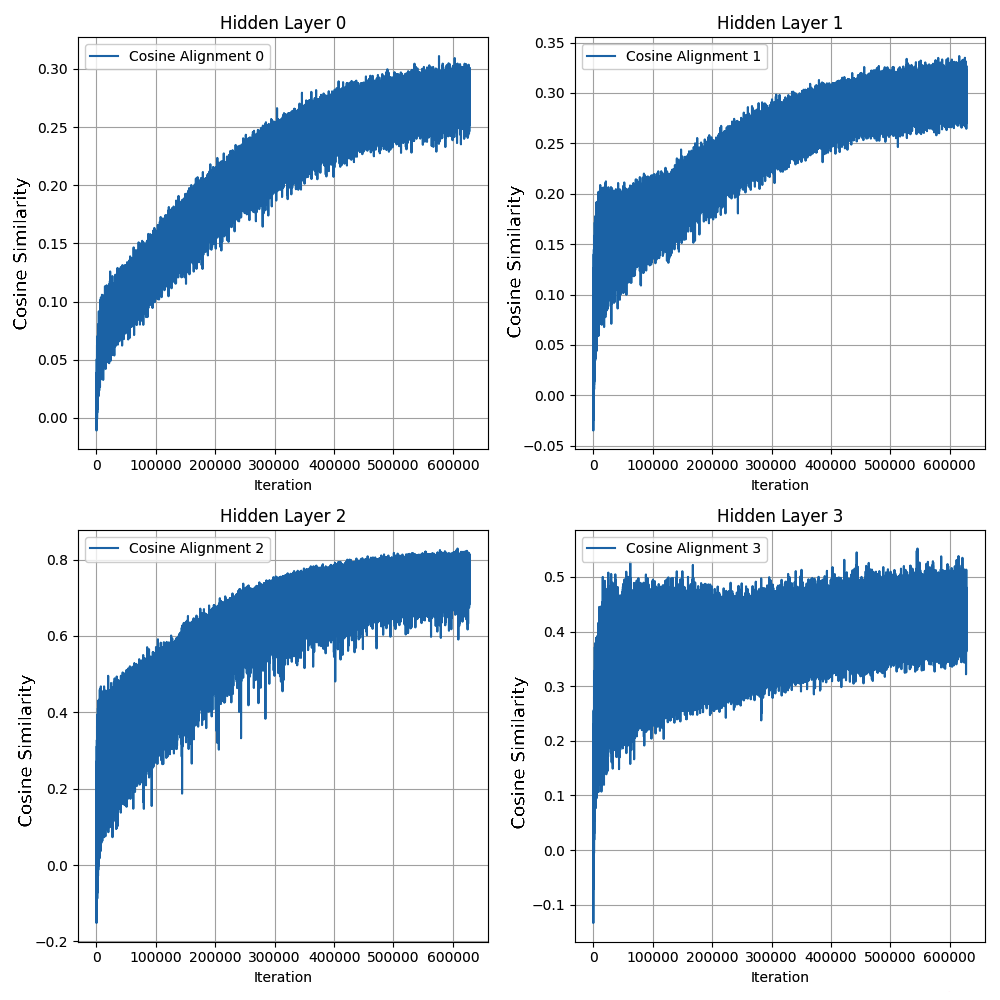}
    \end{minipage}
    \caption{Cosine similarity between EBD updates and backpropagation gradients in a locally connected network on CIFAR-10. Positive alignment indicates directional consistency between EBD and BP.}
    \label{fig:cosine_alignment_lc}
\end{figure}

These results demonstrate that EBD update directions are not arbitrary but align with the descent direction of the loss function as indicated by BP, supporting its effectiveness as a gradient-free but principled optimization strategy.

\newpage
\subsection{On gradient truncation and biological plausibility}
\label{app:alignment_ebd}

The decorrelation objective used in EBD naturally decomposes into two sets of parameter updates per layer \(k\): 
\[
(\Delta W_1^{(k)}, \Delta b_1^{(k)}) \quad \text{and} \quad (\Delta W_2^{(k)}, \Delta b_2^{(k)}).
\]

Here, \((\Delta W_1^{(k)}, \Delta b_1^{(k)})\) corresponds to updates that modify the hidden representation to reduce correlation with the output error, while \((\Delta W_2^{(k)}, \Delta b_2^{(k)})\) corresponds to updates that aims to reshape the error signal itself.

For reasons of local learning and biological plausibility, EBD retains only the \((\Delta W_1^{(k)}, \Delta b_1^{(k)})\) component and drops the error-shaping terms \((\Delta W_2^{(k)}, \Delta b_2^{(k)})\), thereby avoiding the backward propagation of gradients through the network.

To assess the impact of this truncation, we measured the cosine similarity between the full gradient (which includes both components) and the truncated update used in EBD. As shown in Figure~\ref{fig:cosine_truncation}, the truncated update direction remains consistently aligned with the full gradient throughout training on CIFAR-10 using a 3-layer MLP. This positive alignment suggests that the retained component is sufficient for effective learning, validating our simplification. \\

\begin{figure}[h]
    \centering
    \includegraphics[width=0.98\linewidth]{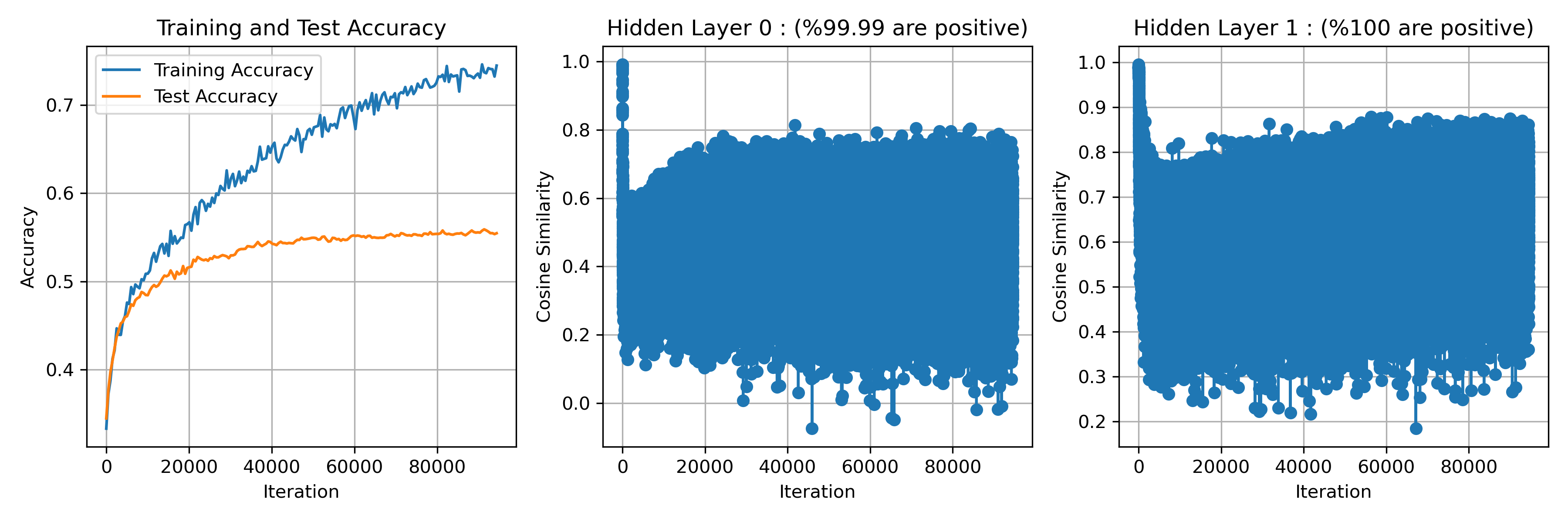}
    \caption{Cosine similarity between the full decorrelation gradient (including \((\Delta W_2^{(k)}, \Delta b_2^{(k)})\)) and the truncated EBD update (only \((\Delta W_1^{(k)}, \Delta b_1^{(k)})\)). Positive similarity confirms that the truncated update remains a valid descent direction.}
    \label{fig:cosine_truncation}
\end{figure}

\newpage
\section{Background on online Correlative Information Maximization (CorInfoMax) based biologically plausible neural networks}
\label{sec:CorInfoMaxEPApp}
Bozkurt et.al.  recently proposed a framework, which we refer as CorInfoMax-EP, to address weight symmetry problem corresponding to backpropagation algorithm \citet{bozkurt2024correlative}. In this section, we provide a brief summary of this framework.

The CorInfoMax-EP framework utilizes an online optimization setting to maximize correlative information between two consequitive layers:
\begin{eqnarray*}
\centering
\sum_{k=0}^{L-1} \hat{I}^{(\epsilon)}(\mathbf{h}^{(k)},\mathbf{h}^{(k+1)})[m]-\frac{\beta}{2}\|\mathbf{y}[m]-\mathbf{h}^{(L)}[m]\|_2^2,
\end{eqnarray*}
where $\hat{I}^{(\epsilon)}(\mathbf{h}^{(k)},\mathbf{h}^{(k+1)})[m]$ is the correlative mutual information between layers $k$ and $k+1$, and the term on the right corresponds to the mean square error between the network output $\rvh^{(L)}[m]$ and the training label $\rvy[m]$. This framework utilizes two alternative but equivalent forms for the correlative mutual information
\begin{align*}
    {\hat{I}_r^{(\varepsilon_k)}}(\rvh^{(k)}, \rvh^{(k+1)})[m] &= \frac{1}{2} \log \det (\hat{\rmR}_{\rvh^{(k+1)}}[m] + \varepsilon_k \mI)- \frac{1}{2} \log \det ({\hat{\mR}}_{\overset{\rightarrow}{\rve}^{(k+1)}_*}[m] + \varepsilon_k \mI),\\
        {\hat{I}_l^{(\varepsilon_k)}}(\rvh^{(k)}, \rvh^{(k+1)})[m] &= \frac{1}{2} \log \det (\hat{\rmR}_{\rvh^{(k)}}[m] + \varepsilon_k \mI)- \frac{1}{2} \log \det ({\hat{\mR}}_{\overset{\leftarrow}{\rve}^{(k)}_*}[m] + \varepsilon_k \mI),
\end{align*}
defined in terms of the correlation matrices of layer activations, i.e., $\hat{\rmR}_{\rvh^{(k)}}$  and the correlation matrices of forward and backward prediction errors (${\hat{\mR}}_{\overset{\rightarrow}{\rve}^{(k+1)}_*}$ and ${\hat{\mR}}_{\overset{\leftarrow}{\rve}^{(k)}_*}$) between two consequitive layers. Here, forward/backward prediction errors are defined by 
\begin{align*}
            \overset{\rightarrow}{\rve}^{(k+1)}_*[n]&=\rvh^{(k+1)}[n]-\rmW^{(f,k)}[m]\rvh^{(k)}[n],\hspace{0.25in}
        \overset{\leftarrow}{\rve}^{(k)}_*[n]=\rvh^{(k)}[n]-\rmW^{(b,k)}[m]\rvh^{(k+1)}[n],
\end{align*}
respectively. Here, $\rmW^{(f,k)}[m]$ ($\rmW^{(b,k)}[m]$) is the forward (backward) prediction matrix for layer $k$.

In order to enable online implementation, the exponentially weighted correlation matrices for hidden layer activations and prediction errors are defined as follows:
\begin{align*}
\displaystyle
\hat{\rmR}_{\rvh^{(k)}}[m] = \frac{1 - \lambda}{1 - \lambda^m}\sum_{i=1}^{m} \lambda^{m-i}  \rvh^{(k)}[m] {\rvh^{(k)}[m]}^T, \\
\hat{\rmR}_{\overset{\rightarrow}{\rve}^{(k)}}[m] = \frac{1 - \lambda}{1 - \lambda^m}\sum_{i=1}^{m} \lambda^{m-i}  \overset{\rightarrow}{\rve}^{(k)}[m] {\overset{\rightarrow}{\rve}^{(k)}[m]}^T,\\ 
\hat{\rmR}_{\overset{\leftarrow}{\rve}^{(k)}}[m] = \frac{1 - \lambda}{1 - \lambda^m}\sum_{i=1}^{m} \lambda^{m-i}  \overset{\leftarrow}{\rve}^{(k)}[m] {\overset{\leftarrow}{\rve}^{(k)}[m]}^T.
\end{align*}
Through the trace approximation of $\log\det(\cdot)$ function, we obtain:
\begin{align*}
    \log \det \left({\hat{\mR}}_{\overset{\rightarrow}{\rve}^{(k+1)}}[m] + \varepsilon \mI\right)& \nonumber \\
    &\hspace*{-1.7in}\approx\frac{1}{\varepsilon_k}\sum_{i=1}^t\lambda^{t-i}\|\rvh^{(k+1)}[i]-\mathbf{W}_{ff,*}^{(k)}[m]\rvh^{(k)}[i]\|_2^2+\varepsilon_k\|\mW_{ff,*}^{(k)}[m]\|_F^2+N_{k+1}\log(\varepsilon_k) \\ 
    \log \det \left({\hat{\mR}}_{\overset{\leftarrow}{\rve}^{(k)}}[m] + \varepsilon_k \mI\right)& \nonumber\\
    &\hspace*{-1.7in}\approx\frac{1}{\varepsilon_k}\sum_{i=1}^t\lambda^{t-i}\|\rvh^{(k)}[i]-\mathbf{W}_{fb,*}^{(k)}[m]\rvh^{(k+1)}[i]\|_2^2+\varepsilon_k\|\mW_{fb,*}^{(k)}[m]\|_F^2+N_k\log(\varepsilon_k),
\end{align*}

\subsection{The derivation of the CorInfoMax network}
Based on the definitions above, the following layerwise  objectives can be defined:
\begin{align*}
    \hat{J}_k(\rvh^{(k)})[m]={\hat{I}_r^{(\epsilon_{k-1})}}(\rvh^{(k - 1)},\rvh^{(k)})[m]+{\hat{I}_l^{(\varepsilon_k)}}(\rvh^{(k)},\rvh^{(k+1)})[m],  \mbox{ for } k=1, \ldots, L-1,
\end{align*}
i.e., correlative information maximization objectives for the hidden layers, and  the mixture of correlation maximization and MSE objectives for the final layer
\begin{align*}
  \hat{J}_L(\rvh^{(L)})[m]={\hat{I}_r^{(\epsilon_{L-1})}}(\rvh^{(L - 1)},\rvh^{(L)})[m]-\frac{\beta}{2}\|\rvh^{(L)}[m]-\rvy[m]\|_2^2.  
\end{align*}
The gradient of the hidden layer objective functions with respect to the corresponding layer activations can be written as:
\begin{eqnarray}
    \nabla_{\rvh^{(k)}}\hat{J}_k(\rvh^{(k)})[m]=2\gamma{\mB}_{\rvh^{(k)}}[m]\rvh^{(k)}[m]-\frac{1}{\epsilon_{k-1}}\overset{\rightarrow}{\rve}^{(k)}[m]-\frac{1}{\varepsilon_k}\overset{\leftarrow}{\rve}^{(k)}[m], \label{eq:gradnk}
\end{eqnarray}
where $\gamma=\frac{1-\lambda}{\lambda}$, and ${\mB}_{\rvh^{(k)}}[m]=(\hat{\rmR}_{\rvh^{(k)}}[m] + \epsilon_{k-1} \mI)^{-1}$, i.e., the inverse of the layer correlation matrix.

For the output layer, we can write the gradient as
\begin{align*}
    \nabla_{\rvh^{(L)}}\hat{J}_L(\rvh^{(L)})[m]=\gamma{\mB}_{\rvh^{(L)}}[m]\rvh^{(L)}[m]-\frac{1}{\epsilon_{L-1}}\overset{\rightarrow}{\rve}^{(L)}[m]-\beta(\rvh^{(L)}[m]-\vy[m]).
\end{align*}

The gradient ascent updates corresponding to these expressions can be organized to obtain CorInfoMax network dynamics:
\begin{align*}
    \tau_{\rvu}\frac{d \rvu^{(k)}[m;s]}{ds}&=-g_{lk}\rvu^{(k)}[m;s]+\frac{1}{\varepsilon_k}\mM^{(k)}[m]\rvh^{(k)}[m;s]-\frac{1}{\epsilon_{k-1}}\overset{\rightarrow}{\rve}_u^{(k)}[m;s]-\frac{1}{\epsilon_{k}}\overset{\leftarrow}{\rve}_u^{(k)}[m;s],\\
   \overset{\rightarrow}{\rve}_u^{(k)}[m;s]&=\rvu^{(k)}[m;s]-\mW^{(k-1)}_{ff}[t]\rvh^{(k-1)}[m;s],\\ \overset{\leftarrow}{\rve}_u^{(k)}[m;s]&=\rvu^{(k)}[m;s]-\mW^{(k)}_{fb}[m]\rvh^{(k+1)}[m;s],\\
\rvh^{(k)}[m;s]&=\sigma_+(\rvu^{(k)}[m;s]),
\end{align*} 
where $m$ is the sample index, $s$ is the time index for the network dynamics, $\tau_\rvu$ is the update time constant, $\mM^{(k)}[t]=\varepsilon_k(2\gamma \mB_{\rvh^{(k)}}[t]+g_{lk}\mI)$, and $\sigma_+=\min(1,\max(u,0))$ represents the elementwise clipped-ReLU function, which is the projection operation corresponding to the combination of the  nonnegativity constraint $\rvh^{(k)}\ge 0$ and the boundedness constraint $\|\rvh^{(k)}\|_\infty\le 1$ on the activations of the network.

Note that \citet{bozkurt2024correlative} takes one more step to organize the network dynamics into a form that fits into the form of a network with three compartment (soma, basal dendrite and appical dendrite compartments) neuron model.

\subsection{CorInfoMax-EP learning dynamics}
The CorInfoMax-EP framework in \citet{bozkurt2024correlative} employs equilibrium propagation (EP) to update feedforward and feedback weights of the CorInfoMax network. 

\subsubsection{Feedforward and feedback weights}
In the CorInfoMax objective, feedforward and feedback weights correspond to forward and backward predictors corresponding to the regularized least squares objectives
\begin{align*}
    C_{ff}(\mW^{(k)}_{ff}[m])=\varepsilon_k\|\mW_{ff}^{(k)}[m]\|_F^2+\|\overset{\rightarrow}{\rve}^{(k+1)}[m]\|_2^2,
\end{align*}
and 
\begin{align*}
    C_{fb}(\mW^{(k)}_{fb}[m])=\varepsilon_k\|\mW_{ff}^{(k)}[m]\|_F^2+\|\overset{\leftarrow}{\rve}^{(k)}[m]\|_2^2,
\end{align*}
respectively. The derivatives of these functions with respect to forward and backward synaptic weights can be written as
\begin{align*}
    \frac{\partial C_{ff}(\mW^{(k)}_{ff}[m])}{\partial \mW^{(k)}_{ff}[m]}=2\varepsilon_k\mW_{ff}^{(k)}[m]-2\overset{\rightarrow}{\rve}^{(k+1)}[m]\rvh^{(k)}[m]^T,
\end{align*}
and
\begin{align*}
    \frac{\partial C_{fb}(\mW^{(k)}_{fb}[m])}{\partial\mW^{(k)}_{fb}[m]}=2\varepsilon_k\mW_{fb}^{(k)}[m]-2\overset{\leftarrow}{\rve}^{(k)}[m]\rvh^{(k+1)}[m]^T.
\end{align*}
The EP based updates of the feedforward and feedback weights are obtained by evaluating these gradients in two different phases: the nudge phase ($\beta=\beta'>0$),  and the free phase ($\beta=0$):
\begin{align*}
    \delta \mW^{(k)}_{ff}[m] \propto 
   \frac{1}{\beta'}\left((\overset{\rightarrow}{\rve}^{(k+1)}[m]\rvh^{(k)}[m]^T)\Big|_{\beta=\beta'}-(\overset{\rightarrow}{\rve}^{(k+1)}[m]\rvh^{(k)}[m]^T)\Big|_{\beta=0}\right) ,\\
    \delta \mW^{(k)}_{fb}[m] \propto
   \frac{1}{\beta'}\left((\overset{\leftarrow}{\rve}^{(k)}[m]\rvh^{(k+1)}[m]^T)\Big|_{\beta=\beta'}-(\overset{\leftarrow}{\rve}^{(k)}[m]\rvh^{(k+1)}[m]^T)\Big|_{\beta=0}\right). 
\end{align*}

\subsubsection{Lateral weights}
\label{sec:CorInfoMax_LateralWeights}
The lateral weight updates derived from the weight correlation matrices of the layer activations, using the matrix inversion lemma \citep{kailath2000linear}:
\begin{align*}
    \mB^{(k)}[m+1]=\lambda_\rvr^{-1}(\mB^{(k)}[m]-\gamma \rvz^{(k)}[m]\rvz^{(k)}[m]^T), \text{ where }\rvz^{(k)}[m]=\mB^{(k)}[m]\rvh^{(k)}[m]\big|_{\beta=\beta'}.
\end{align*}
\subsection{CorInfoMax-EP}
Although the CorInfoMax-EP algorithm derivation above is based on single input sample based updates, it can be extendable to batch updates. Assuming a batch size of $B$, and we define the following matrices:
\begin{align*}
    \rmH^{(k)}[m]=\left[\begin{array}{cccc} \rvh^{(k)}[mB+1] & \rvh^{(k)}[mB+2] & \ldots & \rvh^{(k)}[(m+1)B] \end{array}\right],
\end{align*}
as the activation matrix for the layer-$k$,
\begin{align}
    \overset{\leftarrow}{\rmE}^{(k)}[m]=\left[\begin{array}{cccc} \overset{\leftarrow}{\rve}^{(k)}[mB+1] & \overset{\leftarrow}{\rve}^{(k)}[mB+2] & \ldots & \overset{\leftarrow}{\rve}^{(k)}[(m+1)B] \end{array}\right],\label{eq:EpL}
\end{align}
as the backward prediction matrix for the layer-$k$,
\begin{align}
    \overset{\rightarrow}{\rmE}^{(k)}[m]=\left[\begin{array}{cccc} \overset{\rightarrow}{\rve}^{(k)}[mB+1] & \overset{\rightarrow}{\rve}^{(k)}[mB+2] & \ldots & \overset{\rightarrow}{\rve}^{(k)}[(m+1)B] \end{array}\right],\label{eq:EpR}
\end{align}
as the forward prediction matrix for the layer-$k$, 
\begin{align}
    {\rmZ}^{(k)}[m]=\left[\begin{array}{cccc} {\rvz}^{(k)}[mB+1] & {\rvz}^{(k)}[mB+2] & \ldots & {\rvz}^{(k)}[(m+1)B] \end{array}\right], \label{eq:Z}
\end{align}
as the lateral weights' output matrix for the layer-$k$, and
\begin{align*}
    \rmE=\left[\begin{array}{cccc} \bm{\epsilon}[mB+1] & \bm{\epsilon}[mB+2] & \ldots & \bm{\epsilon}[(m+1)B] \end{array}\right],
\end{align*}
as the output error matrix.

In terms of these definitions, Algorithm 2 lays out the details of the CorInfoMax-EP algorithm:

\label{app:ebdalgorithmep}
\begin{algorithm}[H]
\caption{CorInfoMax Equilibrium Propagation {\small(CorInfoMax-EP)} Update for Layer $k$}
\label{alg:CorInfoMaxEPLayerK}
\begin{algorithmic}[1]
\Require Learning rate parameters $\lambda_E$, $\mu^{(f,k)}[m]$, $\mu^{(b,k)}[m]$
\Require Previous synaptic weights  $\rmW^{(f,k)}[m-1]$ (forward), $\rmW^{(b,k)}[m-1]$ (backward), $\rmB^{(k)}$ (lateral)
\Require Batch size $B$
\Require Layer activations  $\rmH^{(k)}[m]$, preactivations $\rmU^{(k)}[m]$, output errors $\rmE^{(k)}[m]$, lateral weight outputs $\rmZ^{(k)}[m]$, forward prediction errors $\overset{\rightarrow}{\rmE}^{(k)}[m]$ and backward prediction errors $\overset{\leftarrow}{\rmE}^{(k)}[m]$ computed by CorInfoMax network dynamics described in \citet{bozkurt2024correlative}
\Ensure Updated weights $\rmW^{(f,k)}[m]$, $\rmW^{(b,k)}[m]$ ,$\rmB^{(k)}[m]$
\Statex
\State $\gamma_E \gets \frac{1-\lambda_E}{\lambda_E}$
\Statex
\Statex
\textbf{Update forward weights for layer $k$:}
\State $\Delta \rmW^{(f,k)}_{\text{EP}}[m] \gets -\dfrac{\mu^{(d_f,k)}[m]}{B\beta'}\, \left((\overset{\rightarrow}{\rmE}^{(k+1)}[m]\rmH^{(k)}[m]^T)\Big|_{\beta=\beta'}-(\overset{\rightarrow}{\rmE}^{(k+1)}[m]\rmH^{(k)}[m]^T)\Big|_{\beta=0}\right)$
\State $\rmW^{(f,k)}[m] \gets \rmW^{(f,k)}[m-1] +\Delta \rmW^{(f,k)}_{\text{EP}}[m]$
\Statex
\Statex
\textbf{Update backward weights for layer $k$:}
\State $\Delta \rmW^{(b,k)}_{\text{EP}}[m] \gets -\dfrac{\mu^{(d_b,k)}[m]}{B\beta'}\, \left((\overset{\leftarrow}{\rmE}^{(k)}[m]\rmH^{(k)}[m]^T)\Big|_{\beta=\beta'}-(\overset{\leftarrow}{\rmE}^{(k)}[m]\rmH^{(k)}[m]^T)\Big|_{\beta=0}\right)$

\State $\rmW^{(b,k)}[m] \gets \rmW^{(b,k)}[m-1] + \Delta \rmW^{(b,k)}_{\text{EP}}[m]$
\Statex
\Statex
\textbf{Update Lateral weights for layer $k$:}
\State $\Delta \rmB^{(k)}_{\text{E}}[m] \gets -\dfrac{\gamma_E}{B} \rmZ^{(k)}[m]\rmZ^{(k)}[m]^T  $
\State $\rmB^{(k)}[m] \gets \frac{1}{\lambda_E}\rmB^{(k)}[m]+\Delta \rmB^{(k)}_{\text{E}}[m]$
\end{algorithmic}
\end{algorithm}

\section{Implementation complexity of the EBD approach}
\label{sec:Imp_Complexity_EBD}
In this section, we analyze the computational and memory complexity trade-offs of the proposed Error Broadcast and Decorrelation (EBD) approach.

\subsection{Complexity analysis: Error Propagation vs. Error Broadcast}
Considering the standard MLP implementation outlined in Section \ref{sec:EBDApproach}, we compare the memory and computational requirements of the error backpropagation and error broadcast approaches as follows:

\subsubsection{Delivering output error information to layers}
\begin{itemize}[leftmargin=0cm]
\item \textbf{Memory Requirements:} For the standard backpropagation algorithm, the error is propagated through the transposed forward filters $\rmW^{(k)}$. Consequently, no additional memory is required for the weights used in error propagation. In contrast, the error broadcast approach uses error projection matrices $\hRgen$, which require memory storage of $\mathcal{O}\bigl(N^{(k)}  N^{(L)}\bigr)$ for each layer. Thus, the total additional storage requirement for broadcast weights is given by $\mathcal{O}\bigl( (\sum_{l=1}^{L} N^{(l)})  N^{(L)}\bigr)$.

\item \textbf{Computational Requirements:} In the standard backpropagation algorithm, propagating the error to layer $k$ (from layer $k+1$) requires $\mathcal{O}\bigl(B N^{(k)}  N^{(k+1)}\bigr)$ multiplications per batch, where $B$ is the batch size. On the other hand, the broadcast algorithm projects the output error to layer $k$, requiring $\mathcal{O}\bigl(B  N^{(k)} N^{(L)}\bigr)$ multiplications. Additionally, the projection matrix $\hRgen$ is updated at the end of each batch using the Hebbian rule. Therefore, the overall computational complexity for the EBD approach is:
    \[
        \mathcal{O}\bigl((B+1) N^{(k)} N^{(L)}\bigr).
    \]
    When the number of output elements $N^{(L)}$ is significantly smaller than the hidden dimensions $N^{(k)}$, the computational cost of the broadcast algorithm is lower. Furthermore, the error projection operations can be implemented in parallel for all layers, whereas backpropagation must be executed sequentially.
\end{itemize}

\subsubsection{Additional cost of Entropy Regularization}
\begin{itemize}[leftmargin=0cm]
    \item \textbf{Memory requirements:} As described in Section \ref{sec:layerentropy}, layer entropies are based on the covariance matrix $\rmR_\rvh^{(k)}$, which requires $\mathcal{O}\bigl((N^{(k)})^2\bigr)$ additional memory storage. In a computationally optimized implementation, storing the inverse covariance matrix $\rmB_\rvh^{(k)} = {\rmR_\rvh^{(k)}}^{-1}$ may be preferred, though it still requires the same memory allocation.

    \item \textbf{Computational requirements:} The main computational load is due to the computation of the gradient of the layer entropy function. The expression for the gradient is obtained in Appendix \ref{sec:entr_grad} as  
\begin{equation*}
\resizebox{\columnwidth}{!}{%
$\nabla_{\mathbf{W}^{(k)}}\,J^{(k)}[m]
\;=\;
2\,\frac{1-\lambda_E}{B}\;
\Bigl[
(\mathbf{R}_{h}^{(k)}[m]+\epsilon\,\mathbf{I})^{-1}\,\mathbf{H}^{(k)}[m]
\;\odot\;
f'\!\bigl(\mathbf{W}^{(k)}\,\mathbf{H}^{(k-1)}[m]+\mathbf{b}^{(k)}\,\mathbf{1}^T\bigr)
\Bigr]
\;\bigl(\mathbf{H}^{(k-1)}[m]\bigr)^T,$
}
\end{equation*}

where for each batch, we update the layer correlation matrix  matrix
\(\mathbf{R}_{h}^{(k)}[m]\).  We can divide the computational requirement into following pieces:
\begin{itemize}
\item {\it Correlation Matrix Recursion}

Recall we have 
\[
\mathbf{R}_{h}^{(k)}[m]
\;=\;
\lambda_E \,\mathbf{R}_{h}^{(k)}[m-1]
\;+\;
\frac{1-\lambda_E}{B}\,\mathbf{H}^{(k)}[m]\bigl(\mathbf{H}^{(k)}[m]\bigr)^T.
\]
In order to form $\mathbf{R}_{h}^{(k)}[m]$ explicitly at iteration $m$, we must compute 
\(\mathbf{H}^{(k)}[m]\bigl(\mathbf{H}^{(k)}[m]\bigr)^T\) 
and then add the result to $\lambda_E\,\mathbf{R}_{h}^{(k)}[m-1]$. 
\begin{itemize}
  \item The matrix--matrix product 
  \(\mathbf{H}^{(k)}[m]\in\mathbb{R}^{N^{(k)}\times B}\) 
  times its transpose in $\mathbb{R}^{B\times N^{(k)}}$ 
  yields an $N^{(k)}\times N^{(k)}$ matrix, costing 
  \(\mathcal{O}\bigl(N^{(k)\,2}\,B\bigr)\).
  \item Adding $\lambda_E\,\mathbf{R}_{h}^{(k)}[m-1]$ to that product 
  is another $\mathcal{O}\bigl((N^{(k)})^2\bigr)$ operation, 
  though usually smaller in comparison to the product above if $B$ is moderate.
\end{itemize}
Hence the complexity of \emph{forming the new correlation matrix} 
$\mathbf{R}_{h}^{(k)}[m]$ at each iteration is 
\[
\mathcal{O}\bigl(N^{(k)\,2}\,B\bigr).
\]

\item {\it Naive Matrix Inversion and Gradient Computation}

Once $\mathbf{R}_{h}^{(k)}[m]$ is formed, we need to invert 
\(\mathbf{R}_{h}^{(k)}[m] + \epsilon\,\mathbf{I}\) 
to evaluate $J^{(k)}[m]$ and its gradient. Naive inversion 
of an $N^{(k)}\times N^{(k)}$ matrix is 
\(\mathcal{O}\bigl((N^{(k)})^3\bigr)\). 
After this inversion, we multiply
\((\mathbf{R}_{h}^{(k)}[m]+\epsilon\,\mathbf{I})^{-1}\) 
by $\mathbf{H}^{(k)}[m]\in \mathbb{R}^{N^{(k)}\times B}$, which costs 
\(\mathcal{O}\bigl((N^{(k)})^2\,B\bigr)\). 
Next, we do the elementwise multiplication (\(\odot\)) with 
$f'(\mathbf{W}^{(k)}\,\mathbf{H}^{(k-1)}[m]+\mathbf{b}^{(k)}\mathbf{1}^T)$, 
costing \(\mathcal{O}(N^{(k)}\,B)\). 
Finally, we multiply by $\bigl(\mathbf{H}^{(k-1)}[m]\bigr)^T \in \mathbb{R}^{B\times N^{(k-1)}}$, 
which costs $\mathcal{O}\bigl(N^{(k)}\,B\,N^{(k-1)}\bigr)$.
\end{itemize}

Summing all these terms, the dominant operations in \emph{naive} update and inversion are:
\begin{enumerate}
  \item Forming $\mathbf{R}_{h}^{(k)}[m]$: 
    \(\mathcal{O}\bigl(N^{(k)\,2}\,B\bigr)\).
  \item Inverting $(\mathbf{R}_{h}^{(k)}[m]+\epsilon\,\mathbf{I})$: 
    \(\mathcal{O}\bigl((N^{(k)})^3\bigr)\).
  \item Multiplying inverse by $\mathbf{H}^{(k)}[m]$: 
    \(\mathcal{O}\bigl((N^{(k)})^2\,B\bigr)\).
  \item Final multiplication by $(\mathbf{H}^{(k-1)}[m])^T$: 
    \(\mathcal{O}\bigl(N^{(k)}\,B\,N^{(k-1)}\bigr)\).
\end{enumerate}
Thus,  overall cost per batch is
\[
\mathcal{O}\Bigl( N^{(k)\,2}\,B 
  \;+\; (N^{(k)})^3 
  \;+\; (N^{(k)})^2\,B 
  \;+\; N^{(k)}\,B\,N^{(k-1)} \Bigr),
\]
which could be simplified to
\[
\mathcal{O}\Bigl((N^{(k)})^3 
  \;+\; (N^{(k)})^2\,B 
  \;+\; N^{(k)}\,B\,N^{(k-1)}\Bigr).
\]
If $N^{(k)}$ is large, the cubic term $(N^{(k)})^3$ associated with 
the matrix inversion typically dominates.

An alternative is to update the inverse of the correlation matrix \emph{incrementally} 
using the fact that 
\[
\mathbf{R}_{h}^{(k)}[m] 
\;=\;
\lambda_E \,\mathbf{R}_{h}^{(k)}[m-1]
\;+\;
\frac{1-\lambda_E}{B}\,\mathbf{H}^{(k)}[m]\bigl(\mathbf{H}^{(k)}[m]\bigr)^T,
\]
so the new correlation matrix differs from the previous one by a low-rank term of rank 
at most $\min(N^{(k)}, B)$. Neglecting the $\epsilon$ term, the recursion for the inverse can be obtained using matrix inversion lemma \citep{kailath2000linear} as
\begin{align}
    \rmB_\rvh^{(k)}[m]=\lambda_E^{-1}\left(\rmB_\rvh^{(k)}[m-1]-\rmV[m]\left(\frac{\lambda_E B}{1-\lambda_E}\mathbf{I}+\mathbf{H}^{(k)}[m]^T\rmV[m]\right)^{-1}\rmV[m]^T\right), \label{eq:rankB_update}
    \end{align}
where $\rmV[m]=\rmB_\rvh^{(k)}[m-1]\mathbf{H}^{(k)}[m]$.

The corresponding computational cost is:
\begin{itemize}
\item Two matrix--matrix multiplications of shape $(N^{(k)}\times N^{(k)})$ with $(N^{(k)}\times B)$, costing $\mathcal{O}((N^{(k)})^2\, B)$.
\item Inverting the $B\times B$ matrix, 
      costing $\mathcal{O}(B^3)$ if $B$ is not too large.
\end{itemize}
Thus, the update of the inverse alone costs 
\[
\mathcal{O}\bigl((N^{(k)})^2\,B + B^3\bigr),
\]
instead of $\mathcal{O}\bigl((N^{(k)})^3\bigr)$. 
If $B\ll N^{(k)}$, this can be a large savings compared to the naive cubic cost.

Once this updated inverse is in hand, the subsequent multiplications to form the gradient 
(e.g.\ $(\mathbf{R}_{h}^{(k)}[m]+\epsilon\,\mathbf{I})^{-1}\,\mathbf{H}^{(k)}[m]$, 
etc.) 
still take 
\(\mathcal{O}((N^{(k)})^2\,B + N^{(k)}\,B\,N^{(k-1)})\). 
Overall, for each new time step $m$, the dominant costs become 
\[
\mathcal{O}\Bigl((N^{(k)})^2\,B + B^3 
 \;+\;(N^{(k)})^2\,B 
 \;+\;N^{(k)}\,B\,N^{(k-1)}\Bigr),
\]
which is usually simplified to 
\[
\mathcal{O}\Bigl((N^{(k)})^2\,B + B^3 + N^{(k)}\,B\,N^{(k-1)}\Bigr).
\]
Hence, using the Woodbury identity is beneficial whenever $B$ is much smaller than $N^{(k)}$, because $N^{(k)\,2}\,B + B^3 \ll (N^{(k)})^3$.

\end{itemize}

\subsubsection{Additional cost of Power-normalization}
\begin{itemize}[leftmargin=0cm]
    \item \textbf{Memory requirements:} The power normalization described in Section \ref{sec:pownormalization}, involves a power estimate parameter per hidden unit, so it will require additional $N^{(k)}$ storage elements for the layer-$k$.
    \item \textbf{Computational requirements:} 
    The gradient for the power normalization regularization function derived in Appendix \ref{sec:pn_grad} takes the form:
\begin{align*}
\nabla_{\rmW^{(k)}}J_P^{(k)}[m]=\frac{4}{B}\mathbf{D}[m](\rmH^{(k)}[m]\odot\rmF_d^{(k)})\rmH^{(k-1)}[m]^T,
\end{align*}
Based on this expression, the required number of operations per batch for layer-$k$ is $\mathcal{O}\bigl(B N^{(k)} N^{(k-1)}\bigr)$. 
\end{itemize}

Therefore, we can consider the impact of the power-normalization on memory and computational requirements as negligible.

In Section \ref{sec:empirical_runtime}, we provide empirical runtime results for the EBD algorithm, relative to the backpropagation algorithm. These experimental results show a $7$ to $8$ time increase in the runtimes of the MLP model with the EBD algorithm (employing entropy regularization) relative to the BP algorithm. The runtime increase is less for CNN and LC models.

Finally, we note that the implementation complexity analysis provided above is for the MLP based EBD approach. For the biologically more realistic CorInfoMax-EBD networks, entropy maximization is implemented through lateral weights (see Appendices \ref{sec:CorInfoMaxEPApp}, \ref{sec:CorInfoMax_LateralWeights} and \ref{sec:bioplau_entr_pownorm}), whose update requires $\mathcal{O}\bigl((N^{(k)})^2\bigr)$ multiplications per sample.

\newpage

\section{On the biologically plausible nature of Entropy and Power-normalization updates}
\label{sec:bioplau_entr_pownorm}

As discussed in Section~\ref{sec:collapse}, the layer-entropy and power-normalization objectives are introduced to avert potential collapse of network coefficients in the EBD algorithm. A natural question arises regarding the biological plausibility of the EBD framework when these regularizations are incorporated. We address this question by examining two specific cases:

\begin{enumerate}
    \item \textbf{MLP Implementation with Entropy and Power-normalization regularizations (Section~\ref{sec:EBDApproach})}
    \item \textbf{CorInfoMax-EBD implementation (Section~\ref{sec:corinfomax-ebd})}
\end{enumerate}

\subsection{MLP implementation with Entropy and Power-normalization regularizations}
\label{sec:MLPwithEntropyPower}
In Section~\ref{sec:EBDApproach}, we presented an MLP-based EBD framework that uses batch-SGD to optimize the feedforward weights with EBD, along with entropy and power-normalization losses. As outlined in Sections~\ref{sec:EBDAlgo} and \ref{sec:three_factor_EBD}, the gradient-based updates of the EBD loss naturally reduce to a three-factor update rule, which is considered biologically plausible. We now examine whether adding the layer-entropy objective~ in Eq.~(\ref{eq:layer_etnropyeq}) and the power-normalization objective in Eq.~(\ref{eq:power_normaleq}) preserves this biological realism.

\subsubsection{Power normalization-based SGD updates}

Appendix~\ref{sec:pn_grad} derives the gradient expression for the power-normalization loss:
\begin{align*}
    \nabla_{\rmW^{(k)}}J_P^{(k)}[m]
    = \frac{4}{B}\,\mathbf{D}[m]\bigl(\rmH^{(k)}[m]\odot \rmF_d^{(k)}[m]\bigr)\rmH^{(k-1)}[m]^T.
\end{align*}
Focusing on an individual element of this matrix gives
\begin{align}
    [\nabla_{\rmW^{(k)}}J_P^{(k)}[m]]_{ij}
    = \frac{4}{B} d_i[m] \sum_{n=mB+1}^{mB+B} h_i^{(k)}[n]\,{f_i^{(k)}}'(u_i^{(k)}[n])\,h_j^{(k-1)}[n].
    \label{eq:pn_grad2}
\end{align}
This update depends only on the activations of the neurons connected by the synapse $W_{ij}$, thus satisfying a local learning rule. However, the summation over the batch index in Eq.~(\ref{eq:pn_grad2}) might be considered biologically implausible unless the batch size $B=1$. In practice, one can interpret the summation for $B>1$ as an integral of local updates over the time window corresponding to the batch, which may still be reasonably viewed as local integration in a biological setting.

\subsubsection{Layer entropy regularization-based SGD updates}

Appendix~\ref{sec:entr_grad} derives the gradient of the layer-entropy objective:
\begin{align}
    \label{eq:Entropy_gradW2}
    \nabla_{\rmW^{(k)}}J^{(k)}_E[m]
    = \frac{2(1-\lambda_E)}{B}\Bigl[\bigl((\rmR^{(k)}_{\mathbf{h}}[m]+\varepsilon^{(k)}\rmI)^{-1}\rmH^{(k)}[m]\bigr)\odot {\rmF_d}^{(k)}[m]\Bigr]
    \rmH^{(k-1)}[m]^T.
\end{align}
Examining an individual element of this gradient shows
\begin{align*}
    [\nabla_{\rmW^{(k)}}J^{(k)}_E[m]]_{ij}
    = \frac{2(1-\lambda_E)}{B}
      \sum_{n=mB+1}^{mB+B}
      v^{(k)}_i[n]\,{f_i^{(k)}}'(u_i^{(k)}[n])\,h_j^{(k-1)}[n],
\end{align*}
where $\rvv^{(k)}[n] = (\rmR^{(k)}_{\mathbf{h}}[m]+\varepsilon^{(k)}\rmI)^{-1}\,\rvh^{(k)}[n]$. Because $v_i^{(k)}[n]$ depends on all neurons’ activations in the layer, the layer-entropy update for feedforward weights is not strictly local and hence violates the criteria for strict biological plausibility.

Nevertheless, this limitation is circumvented by the CorInfoMax-EBD approach, wherein the lateral (recurrent) weights, rather than the feedforward weights, implement the layer-entropy maximization objective. We discuss this in the next section.

\subsection{CorInfoMax-EBD implementation}
\label{appx:corinfomax_plausibility}
Section~\ref{sec:corinfomax-ebd} introduces a more biologically realistic network by combining the CorInfoMax framework---known to yield recurrent networks closely reflecting biological dynamics---with the proposed EBD approach to enable a three-factor update rule in supervised learning.

\subsubsection{Power-normalization-based SGD updates}

In CorInfoMax-EBD, we adopt the same power-normalization gradient in Eq.~(\ref{eq:pn_grad2}) for updating feedforward weights. Therefore, by setting the batch size to $B=1$, these updates remain local and thus biologically plausible.

\subsubsection{Layer entropy maximization}
As summarized in Appendix~\ref{sec:CorInfoMaxEPApp}, CorInfoMax networks inherently include layer-entropy maximization via the correlative-information objective. Crucially, this entropy maximization is implemented through \emph{lateral weights} of the RNN structure rather than by modifying feedforward weights. Specifically, from the gradient of the correlative-information objective (see Eq.~(\ref{eq:gradnk})):
\begin{align}
    \nabla_{\rvh^{(k)}}\hat{J}_k(\rvh^{(k)})[m]
    = 2\gamma\,{\mB}_{\rvh^{(k)}}[m]\,\rvh^{(k)}[m]
    \;-\;\frac{1}{\epsilon_{k-1}}\overset{\rightarrow}{\rve}^{(k)}[m]
    \;-\;\frac{1}{\varepsilon_k}\overset{\leftarrow}{\rve}^{(k)}[m],
    \label{eq:gradnk2_app}
\end{align}
the first term, $2\gamma\,{\mB}_{\rvh^{(k)}}[m]\,\rvh^{(k)}[m]$, corresponds to the layer-entropy maximization. Here, the lateral weight matrix ${\mB}_{\rvh^{(k)}}[m]$ approximates the inverse of the layer correlation matrix. As described in Appendix~\ref{sec:CorInfoMax_LateralWeights} (and in \cite{bozkurt2024correlative}), the lateral weights can be updated by an anti-Hebbian rule:
\begin{align*}
    \mB^{(k)}[m+1] 
    = \lambda_\rvr^{-1}\Bigl(\mB^{(k)}[m] \;-\; \gamma\,\rvz^{(k)}[m]\rvz^{(k)}[m]^T\Bigr),
\end{align*}
{where}
\begin{align*}
    \rvz^{(k)}[m] = \mB^{(k)}[m]\,\rvh^{(k)}[m] \big|_{\beta=\beta'}.
\end{align*}
Once again, this update is strictly local if $B=1$, while for $B>1$, the rank-$B$ extension may break strict locality. We demonstrate in Section~\ref{sec:numericalexperiments} that CorInfoMax-EBD with $B=1$ yields comparable or superior performance to CorInfoMax-EP with larger batch sizes.

\subsection{Summary and conclusions}

In summary, the CorInfoMax-EBD implementation described in Section~\ref{sec:corinfomax-ebd} offers a more biologically plausible approach to supervised learning compared to the MLP-based EBD approach in Section \ref{sec:EBDApproach} due to several factors:

\begin{itemize}
    \item Using \textbf{lateral weights} to impose layer-entropy maximization in a biologically realistic manner;
    \item Employing \textbf{feedforward/feedback weights} for forward and backward predictive coding;
    \item Adopting \textbf{neuron models} with distinct compartments (soma, basal dendrites, and apical dendrites);
    \item Incorporating \textbf{EBD updates}, which naturally embody a three-factor learning rule; and
    \item Leveraging \textbf{power-normalization updates}, which satisfy local-learning constraints when $B=1$.
\end{itemize}

These features stem from the CorInfoMax-EP framework~\citep{bozkurt2024correlative}, enhanced by our proposed EBD-based regularizations. This architecture reconciles the benefits of layer-entropy and power-normalization objectives with the demands of biological plausibility.

\newpage
\section{Supplementary on numerical experiments}
\label{sec:numexp_appendix}

The models were trained on an NVIDIA Tesla V100 GPU, using the hyperparameters detailed in the sections below. Each experiment was conducted five times under identical settings, and the reported results reflect the average performance. We used the standard train/test splits for the datasets, with MNIST comprising 60,000 training examples and CIFAR-10 comprising 50,000, while both datasets included 10,000 test examples. The MNIST dataset \cite{lecunMNIST} is made available under the Creative Commons Attribution-Share Alike 3.0 license. The CIFAR-10 dataset [45], originating from the University of Toronto, is publicly available for academic research purposes. Both datasets were accessed via standard deep learning library functionalities.

Rather than utilizing automatic differentiation tools, we manually implemented the gradient calculations for the EBD algorithm, utilizing batched operations to ensure computational efficiency. As a side note, the $(1-\lambda)$ factors present in the derived update expressions are absorbed into the learning rate constants and thus eliminated. In our experiments, we trained the MLP models for \textbf{120} epochs and the CNN and LC models for \textbf{100} epochs on MNIST and \textbf{200} epochs on CIFAR-10. In addition, we trained the CNN model for the CIFAR-100 dataset for \textbf{300} epochs, and the CorInfoMax-EBD (3-layer, batch size = 20) model for \textbf{60} epochs. \looseness=-2

\subsection{Architectures}
\label{ref:appxarch}
The architectural details of MLP, CNN and LC networks for the MNIST and CIFAR-10 datasets are shown in Tables \ref{tab:mnist_arch} and \ref{tab:cifar_arch}, respectively, while the CNN model used in the CIFAR-100 experiments is detailed in Table \ref{tab:cifar100_cnnmodel}. The structure of the MNIST and CIFAR-10 models are the same as in the reference \cite{Clark:21}, while the CIFAR-100 model closely matches \citep{Nokland:16}, differing only in the MaxPool shape. In all architectures, we used ReLU as the nonlinear functions except the last layer. Furthermore, the architectural details of the biologically more realistic CorInfoMax network for MNIST and CIFAR-10 datasets are shown in Table \ref{tab:corinfomax}. These techniques are the same as examples in Appendix J.5 of \cite{bozkurt2024correlative}. \looseness=-2

\begin{table}[!ht]
\caption{MNIST architectures. \textbf{FC:} fully connected; \textbf{Conv:} convolutional; \textbf{LC:} locally connected. 
FC layers are reported by hidden size. Conv/LC layers are reported as (channels, kernel size, stride, padding). 
Pooling layers use stride 1; we report the kernel size.}
    \centering
    \begin{tabular}[t]{c c}
        \toprule
         \multicolumn{2}{c}{MLP} \\
         \toprule
         FC1 & 1024 \\
         FC2 & 512 \\
    \end{tabular}
    \quad
    \begin{tabular}[t]{cc}
        \toprule
        \multicolumn{2}{c}{Convolutional} \\
        \toprule
         Conv1 & 64, $3 \times 3$, 1, 1 \\
         AvgPool & $2 \times 2$ \\
         Conv2 & 32, $3 \times 3$, 1, 1 \\
         AvgPool & $2 \times 2$ \\
         FC1 & 1024 \\
    \end{tabular}
    \quad 
    \begin{tabular}[t]{cc}
        \toprule
        \multicolumn{2}{c}{Locally connected} \\
        \toprule
         LC1 & 32, $3 \times 3$, 1, 1 \\
         AvgPool & $2 \times 2$ \\
         LC2 & 32, $3 \times 3$, 1, 1 \\
         AvgPool & $2 \times 2$ \\
         FC1 & 1024 \\
    \end{tabular}
\label{tab:mnist_arch}
\end{table}

\vspace{-1em}

\begin{table}[!ht]
\caption{CIFAR-10 architectures. Conventions are the same as in Table \ref{tab:mnist_arch}.}
\label{tab:cifar10_arch}
    \centering
    \begin{tabular}[t]{c c}
         \toprule
         \multicolumn{2}{c}{MLP} \\
         \toprule
         FC1 & 1024 \\
         FC2 & 512 \\
         FC3 & 512 \\
    \end{tabular}
    \quad
    \begin{tabular}[t]{cc}
        \toprule
        \multicolumn{2}{c}{Convolutional} \\
        \toprule
         Conv1 & 128, $5 \times 5$, 1, 2 \\
         AvgPool & $2 \times 2$ \\
         Conv2 & 64, $5 \times 5$, 1, 2 \\
         AvgPool & $2 \times 2$ \\
         Conv3 & 64, $2 \times 2$, 2, 0 \\
         FC1 & 1024 \\
    \end{tabular}
    \quad 
    \begin{tabular}[t]{cc}
        \toprule
        \multicolumn{2}{c}{Locally connected} \\
        \toprule
         LC1 & 64, $5 \times 5$, 1, 2 \\
         AvgPool & $2 \times 2$ \\
         LC2 & 32, $5 \times 5$, 1, 2 \\
         AvgPool & $2 \times 2$ \\
         LC3 & 32, $2 \times 2$, 2, 0 \\
         FC1 & 512 \\
    \end{tabular}
\label{tab:cifar_arch}
\end{table}

\vspace{-1em}
\begin{table}[ht]
\caption{CorInfoMax architectures. Conventions are the same as in Table \ref{tab:mnist_arch}.}
    \centering
    \begin{tabular}[t]{c c}
         \toprule
         \multicolumn{2}{c}{MNIST} \\
         \toprule
         FC1 & 500 \\
         FC2 & 500 \\
    \end{tabular}
    \quad
    \begin{tabular}[t]{cc}
         \toprule
         \multicolumn{2}{c}{CIFAR-10} \\
         \toprule
         FC1 & 1000 \\
         FC2 & 500 \\
    \end{tabular}
    \label{tab:corinfomax}
\end{table}

\begin{table}[ht]
\caption{CIFAR-100 architectures. Conventions are the same as in Table \ref{tab:mnist_arch}.}
    \centering
    \begin{tabular}[t]{cc}
        \toprule
        \multicolumn{2}{c}{Convolutional} \\
        \toprule
         Conv1 & 96, $5 \times 5$, 1, 2 \\
         MaxPool & $2 \times 2$ \\
         Conv2 & 128, $5 \times 5$, 1, 2 \\
         MaxPool & $2 \times 2$ \\
         Conv3 & 256, $5 \times 5$, 1, 2 \\
         MaxPool & $2 \times 2$ \\
         Dropout & $p$ \\
         FC1 & 2048 \\
         Dropout & $p$ \\
         FC2 & 2048 \\
         Softmax & 100 \\
    \end{tabular}
    \label{tab:cifar100_cnnmodel}
\end{table}

\newpage
\subsection{CorInfoMax-EBD}
\label{sec:corinfomaxebdapp}
In this section, we offer additional details regarding the numerical experiments conducted with the CorInfoMax Error Broadcast and Decorrelation (CorInfoMax-EBD) algorithm.  Appendix~\ref{app:cebdimpdet} elaborates on the general implementation details. Appendix~\ref{app:ebdalgorithm} presents the fundamental learning steps of the algorithm, which are based on the EBD method. Appendices~\ref{app:cebdinit} and \ref{app:cebddeschyper} discuss the initialization of the algorithm's variables and describe the hyperparameters. Finally, Appendix~\ref{app:cebdhyper} ( 3-Layer and batch size=20, 3-Layer batch size=1) and Appendix~\ref{app:cebdhyper3} (10-Layer batch size=1) detail the specific hyperparameter configurations used in our numerical experiments for the MNIST and CIFAR-10 datasets. In Appendix~\ref{app:acc_loss}  we present the accuracy and loss learning curves for the CorInfoMax-EBD, shown in  Figures \ref{fig:accuracy_plots}.(g)-(h) and Figures \ref{fig:mse_plots}.(g)-(h), respectively.

\subsubsection{Implementation details}
\label{app:cebdimpdet}
We implemented the CorInfoMax-EBD algorithm based on the repository available at GitHub \footnote{\texttt{https://github.com/BariscanBozkurt/Supervised-CorInfoMax}}. This repository from Bozkurt et al. \cite{bozkurt2024correlative}, used as a basis for our CorInfoMax-EBD implementation, did not specify an explicit license in its public repository at the time of access. Our use and modification are for academic research purposes, building upon the published scientific work presented in \cite{bozkurt2024correlative}. The following modifications were made to the original code:
\begin{itemize}
\item \textbf{Reduction to a single phase:} We simplified the algorithm by reducing it to a single phase. Specifically, we removed the nudge phase, during which the label is coupled to the network dynamics. In this modified version, the network operates solely in the free phase, where the label is decoupled from the network. This change aligns with the removal of time-contrastive updates from the CorInfoMax-EP algorithm.
\item \textbf{Algorithmic updates:} We incorporated the updates outlined in  Algorithm \ref{alg:CorInfoMaxEBDLayerK}.
\item \textbf{Hyperparameters:} We maintained the same hyperparameters for the neural dynamics as in the original code.  Additionally, new hyperparameters specific to the learning dynamics were introduced, which are detailed in Appendix~\ref{app:cebddeschyper}.
\end{itemize}

In the CorInfoMax-EBD implementation the following loss and regularization functions are used
\begin{itemize}
\item EBD  loss: $J^{(k)}$,
\item Power normalization loss: $J^{(k)}_{P}$,
\item $\ell_2$ weight regularization (weight decay): $J^{(k)}_{\ell_2}$, 
\item Activation sparsity regularization: $J^{(k)}_{\ell_1}=\|\rmH^{(k)}\|_1$.
\end{itemize}

\newpage

\subsubsection{Algorithm}
The CorInfoMax-EBD algorithm follows the same neural dynamics framework detailed in \cite{bozkurt2024correlative} for computing neuron activations. Consequently, we only outline the steps specific to the learning process, which distinguishes it from the original CorInfoMax-EP algorithm described in \cite{bozkurt2024correlative}. The full iterative process for updating weights in the CorInfoMax-EBD algorithm is provided in Algorithm \ref{alg:CorInfoMaxEBDLayerK}.

\label{app:ebdalgorithm}
\begin{algorithm}[H]
\caption{CorInfoMax Error Broadcast and Decorrelation {\small(CorInfoMax-EBD)} Update for Layer $k$}
\label{alg:CorInfoMaxEBDLayerK}
\begin{algorithmic}[1]
\Require Learning rate parameters $\lambda_d$,$\lambda_E$, $\mu^{(d,k)}[m]$, $\mu^{(f,k)}[m]$, $\mu^{(b,k)}[m]$
\Require Previous synaptic weights  $\rmW^{(f,k)}[m-1]$ (forward), $\rmW^{(b,k)}[m-1]$ (backward), $\rmB^{(k)}[m-1]$ (lateral)
\Require Previous error projection weights $\rmR_{g(\rvh^{(k)})\ber}[m-1]$
\Require Batch size $B$
\Require Layer activations  $\rmH^{(k)}[m]$ in Eq.~(\ref{eq:Hk}),  the derivative of activations $\mathbf{F}_d^{(k)}$ in Eq.~(\ref{eq:Fd}), in Eq.~(\ref{eq:E}), prediction errors $\overset{\leftarrow}{\rmE}$ and $\overset{\rightarrow}{\rmE}^{(k)}$ in Eq.~(\ref{eq:EpL})-Eq.~(\ref{eq:EpR}), lateral weight outputs $\rmZ^{(k)}$ in Eq.~(\ref{eq:Z}) computed by CorInfoMax network dynamics described in \citet{bozkurt2024correlative} (and Appendix \ref{sec:CorInfoMaxEPApp}) 
\Require The nonlinear function of layer activations $\rmG^{(k)}$ in Eq.~(\ref{eq:G}) and the derivative of the nonlinear function of layer activations $\rmG_d^{(k)}$ in Eq.~(\ref{eq:Gd})
\Ensure Updated weights $\rmW^{(f,k)}[m]$, $\rmW^{(b,k)}[m]$,  $\rmB^{(k)}[m]$
\Statex
\Statex
\textbf{Error projection weight update for layer $k$:}
\State $\hRgen[m] \gets \lambda_d\, \hRgen[m-1] + \dfrac{1 - \lambda_d}{B}\, \rmG^{(k)}[m]\,  \rmE^{(k)}[m]^T$
\Statex
\Statex
\textbf{Project errors to layer $k$:}
\State $\rmQ^{(k)}[m] \gets \hRgen[m]\, \rmE^{(k)}[m]$
\Statex
\Statex
\textbf{Find the gradient of the nonlinear function of activations for layer $k$:}
\State $\bm{\Phi}^{(k)}[m] = \mathbf{F}_d^{(k)}[m] \odot \rmQ^{(k)}[m] \odot \rmG_d^{(k)}[m]$
\Statex
\Statex
\textbf{Update forward weights for layer $k$:}
\State $\Delta \rmW^{(f,k)}_{\text{EBD}}[m] \gets -\dfrac{\mu^{(d_f,k)}[m]}{B}\,  \bm{\Phi}^{(k)}[m] {\rmH^{(k-1)}[m]}^\top$
\State $\Delta \rmW^{(f,k)}_{\text{Pred}}[m] \gets \dfrac{\mu^{(f,k)}[m]}{B}\, \overset{\rightarrow}{\rmE}^{(k)}[m]\, \left( \rmH^{(k-1)}[m] \right)^\top$
\State $\rmW^{(f,k)}[m] \gets \rmW^{(f,k)}[m-1] + \Delta \rmW^{(f,k)}_{\text{EBD}}[m] + \Delta \rmW^{(f,k)}_{\text{Pred}}[m]$
\Statex
\Statex
\textbf{Update backward weights for layer $k$:}
\State $\Delta \rmW^{(b,k)}_{\text{EBD}}[m] \gets -\dfrac{\mu^{(d_b,k)}[m]}{B}\,  \bm{\Phi}^{(k)}[m]\, {\rmH^{(k+1)}[m]}^\top$
\State $\Delta \rmW^{(b,k)}_{\text{Pred}}[m] \gets \dfrac{\mu^{(b,k)}[m]}{B}\, \overset{\leftarrow}{\rmE}^{(k)}[m]\,  \rmH^{(k+1)}[m]^\top$
\State $\rmW^{(b,k)}[m] \gets \rmW^{(b,k)}[m-1] + \Delta \rmW^{(b,k)}_{\text{EBD}}[m] + \Delta \rmW^{(b,k)}_{\text{Pred}}[m]$
\Statex
\Statex
\textbf{Update Lateral weights for layer $k$:}
\State $\Delta \rmB^{(k)}_{\text{EBD}}[m] \gets -\dfrac{\mu^{(d_l,k)}[m]}{B} \bm{\Phi}^{(k)}[m]\,  \rmH^{(k)}[m]^\top$
\State $\Delta \rmB^{(k)}_{\text{E}}[m] \gets -\dfrac{\gamma_E}{B} \rmZ^{(k)}[m]\rmZ^{(k)}[m]^T  $
\State $\rmB^{(k)}[m] \gets \frac{1}{\lambda_E}\rmB^{(k)}[m]+\Delta \rmB^{(k)}_{\text{E}}[m]+\Delta \rmB^{(k)}_{\text{EBD}}[m]$
\end{algorithmic}
\end{algorithm}

\newpage
\subsubsection{Initialization of algorithm variables}
\label{app:cebdinit}
We initialize the variables $\rmW^{(f,k)}$, $\rmW^{(b,k)}$, and $\rmR_{\rvh^{(k)}\ber}$ using PyTorch's Xavier uniform initialization with its default parameters for the MNIST dataset. For the CIFAR-10 dataset is initialized with gain $0.25$. For the lateral weights $\rmB^{(k)}$, we first generate a random matrix $\rmJ^{(k)}$ of the same dimensions, also using the Xavier uniform distribution, with gain$=1$ for the MNIST dataset and with gain$=0.5$ for the CIFAR-10 dataset. We then compute $\rmB^{(k)}[0] = \rmJ^{(k)}{\rmJ^{(k)}}^T$, ensuring that $\rmB^{(k)}[0]$ is a positive definite symmetric matrix. \looseness=-2

\subsubsection{Description of hyperparameters}
\label{app:cebddeschyper}
Table \ref{tab:CorInfoMaxEBDHyperParams} presents a  description of the hyperparameters used in the CorInfoMax-EBD implementation.
\vspace{-0.3cm}
\begin{table}[ht!]
  \caption{Detailed explanation of hyperparameter notations for the CorInfoMax-EBD algorithm}
  \label{tab:CorInfoMaxEBDHyperParams}
  \centering
  \resizebox{0.625\textwidth}{!}{\begin{tabular}{lll}
    \toprule
    Hyperparameter     & Description   \\
    \midrule
    $\alpha[m]$ & Learning rate dynamic scaling factor \\
    $\alpha_2[m]$ & Learning rate dynamic scaling factor 2 \\
    $\mu^{(d_f,k)}$     & Learning rate  for decorrelation loss (forward weights)     \\
    $\mu^{(d_b,k)}$     & Learning rate  for decorrelation loss (backward weights)     \\
    $\mu^{(d_l,k)}$     & Learning rate  for decorrelation loss (lateral weights)     \\
    $\mu^{(f,k)}$     & Learning rate  for forward prediction    \\
    $\mu^{(b,k)}$     & Learning rate  for backward prediction    \\
    $\mu^{(p,k)}$     & Learning rate  for power normalization loss    \\
    $ p^{(k)}$   & Target power level   \\
    $\mu^{(k)}_{f,\ell_1}$ & Learning rate for activation sparsity (forward weights) \\
    $\mu^{(k)}_{b,\ell_1}$ & Learning rate for activation sparsity (backward weights) \\
    $\mu^{(k)}_{f,w-\ell_2}$ & Forward weight $\ell_2$-regularization coefficent\\
    $\mu^{(k)}_{b,w-\ell_2}$ & Backward weight $\ell_2$-regularization coefficent\\
    $ \lambda_E$ & Layer correlation matrix update forgetting factor \\
    $\lambda_d$ & Error-layer activation cross-correlation forgetting factor \\
    $m^{(d)}$ & Momentum factor for decorrelation forward weight gradient\\
    $B$ & Batch size\\
    \bottomrule
    \end{tabular}}
    \end{table}

\vspace{-0.2cm}
\subsubsection{Hyperparameters for 3-Layer MNIST and CIFAR-10 Models}
\label{app:cebdhyper}

Table~\ref{tab:CEBD_combined} and \ref{tab:CEBD_combined_B1} summarizes the hyperparameters used in the 3-layer CorInfoMax-EBD experiments for the MNIST and CIFAR-10 datasets with a batch size of $20$ and $1$ respectively. The iteration index is denoted by $m$ in all expressions.
\vspace{-0.3cm}
\begin{table}[h!]
\caption{3-Layer CorInfoMax-EBD hyperparameters for MNIST and CIFAR-10 datasets ($B=20$).}
\label{tab:CEBD_combined}
\centering
\resizebox{0.95\textwidth}{!}{\begin{tabular}{lll}
\toprule
Hyperparameter & MNIST & CIFAR-10 \\
\midrule
$\alpha[m]$ & $\dfrac{1}{3\times10^{-3}\times\lfloor\tfrac{m}{10}\rfloor+1}$ & $\dfrac{1}{3\times10^{-3}\times\lfloor\tfrac{m}{10}\rfloor+1}$ \\
$\alpha_2[m]$ & $\dfrac{1}{3\times\lfloor\tfrac{m}{10}\rfloor+1}$ & $\dfrac{1}{3\times\lfloor\tfrac{m}{10}\rfloor+1}$ \\
$\mu^{(d_f,k)}[m]$ & $[96,\,60,\,1e5]\alpha[m]$ & $[80,\,50,\,1e5]\alpha[m]$ for epoch$=0$\\
& & $[320,\,400,\,1e5]\alpha[m]$ for epoch$>0$ \\
$\mu^{(d_b,k)}[m]$ & $[96,\,60,\,1e5]\alpha[m]$ & $[0,\,0,\,0]\alpha[m]$ \\
$\mu^{(d_l,k)}[m]$ & $[0.25,\,0.25,\,0.25]\alpha[m]$ for epoch$=0$ & $[0.5,\,0.5,\,0.5]\alpha[m]$ for epoch$=0$\\
& $[0.5,\,0.5,\,0.5]\alpha[m]$ for epoch$>0$ & $[2.0,\,2.0,\,2.0]\alpha[m]$ for epoch$>0$ \\
$\mu^{(f,k)}[m]$ & $[0.11\times10^{-18},\,0.06\times10^{-18},\,0.035\times10^{-18}]\alpha[m]$ & $[0.11\times10^{-18},\,0.06\times10^{-18},\,0.035\times10^{-18}]\alpha[m]$ \\
$\mu^{(b,k)}[m]$ & $[1.125\times10^{-18},\,0.375\times10^{-18}]\alpha[m]$ & $[1.125\times10^{-18},\,0.375\times10^{-18}]\alpha[m]$ \\
$\mu^{(p,k)}[m]$ & $[4.4\times10^{-3},\,6\times10^{-3},\,3.5\times10^{-12}]\alpha_2[m]$ & $[4.4\times10^{-3},\,6\times10^{-3},\,3.5\times10^{-12}]\alpha_2[m]$ \\
$p^{(k)}$ & $[2.5,\,2.5,\,0.1]$ & $[2.5,\,2.5,\,0.1]$ \\
$\mu^{(k)}_{f,\ell_1}[m]$ & $[0.008,\,0.135,\,0]\alpha_2[m]$ & $[0.008,\,0.135,\,0]\alpha_2[m]$ \\
$\mu^{(k)}_{b,\ell_1}[m]$ & $[0,\,0.35,\,0.05]\alpha_2[m]$ & $[0,\,0.35,\,0.05]\alpha_2[m]$ \\
$\mu^{(k)}_{f,w-\ell_2}[m]$ & $\dfrac{8\times10^{-2}}{10^{-2}\times\lfloor\tfrac{m}{10}\rfloor+1}$ & $\dfrac{8\times10^{-2}}{10^{-2}\times\lfloor\tfrac{m}{10}\rfloor+1}$ \\
$\mu^{(k)}_{b,w-\ell_2}[m]$ & $\dfrac{8\times10^{-2}}{10^{-2}\times\lfloor\tfrac{m}{10}\rfloor+1}$ & $\dfrac{8\times10^{-2}}{10^{-2}\times\lfloor\tfrac{m}{10}\rfloor+1}$ \\
$\lambda_E$ & $0.999999$ & $0.999999$ \\
$\lambda_d$ & $0.99999$ & $0.99999$ \\
$m^{(d)}[m]$ & $0.99\dfrac{1}{\lfloor\tfrac{m}{10}\rfloor+1}+0.999\!\left(1-\dfrac{1}{\lfloor\tfrac{m}{10}\rfloor+1}\right)$ & $0.99\dfrac{1}{\lfloor\tfrac{m}{10}\rfloor+1}+0.999\!\left(1-\dfrac{1}{\lfloor\tfrac{m}{10}\rfloor+1}\right)$ \\
$B$ & $20$ & $20$ \\
\bottomrule
\end{tabular}}
\end{table}

\begin{table}[ht!]
\caption{3-Layer CorInfoMax-EBD hyperparameters for MNIST and CIFAR-10 datasets ($B=1$).}
\label{tab:CEBD_combined_B1}
\centering
\resizebox{0.95\textwidth}{!}{\begin{tabular}{lll}
\toprule
Hyperparameter & MNIST & CIFAR-10 \\
\midrule
$\alpha[m]$ & $\dfrac{1}{3\times10^{-3}\times\lfloor\tfrac{m}{10}\rfloor+1}$ & $\dfrac{1}{3\times10^{-3}\times\lfloor\tfrac{m}{10}\rfloor+1}$ \\
$\alpha_2[m]$ & $\dfrac{1}{3\times\lfloor\tfrac{m}{10}\rfloor+1}$ & $\dfrac{1}{3\times\lfloor\tfrac{m}{10}\rfloor+1}$ \\
$\mu^{(d_f,k)}[m]$ & $[4.8,\,3.0,\,5\times10^{3}]\alpha[m]$ & $[4,\,2.5,\,5\times10^{3}]\alpha[m]$ for epoch$=0$ \\
& & $[16,\,20,\,5\times10^{3}]\alpha[m]$ for epoch$>0$ \\
$\mu^{(d_b,k)}[m]$ & $[4.8,\,3.0,\,5\times10^{3}]\alpha[m]$ & $[0,\,0,\,0]\alpha[m]$ \\
$\mu^{(d_l,k)}[m]$ & $[0.0125,\,0.0125,\,0.0125]\alpha[m]$ for epoch$=0$ & $[0.025,\,0.025,\,0.025]\alpha[m]$ for epoch$=0$ \\
& $[0.025,\,0.025,\,0.025]\alpha[m]$ for epoch$>0$ & $[0.1,\,0.1,\,0.1]\alpha[m]$ for epoch$>0$ \\
$\mu^{(f,k)}[m]$ & $[0.11\times10^{-18},\,0.06\times10^{-18},\,0.035\times10^{-18}]\alpha[m]$ & $[0.11\times10^{-18},\,0.06\times10^{-18},\,0.035\times10^{-18}]\alpha[m]$ \\
$\mu^{(b,k)}[m]$ & $[1.125\times10^{-18},\,0.375\times10^{-18}]\alpha[m]$ & $[1.125\times10^{-18},\,0.375\times10^{-18}]\alpha[m]$ \\
$\mu^{(p,k)}[m]$ & $[2.2\times10^{-4},\,3\times10^{-4},\,3.5\times10^{-12}]\alpha_2[m]$ & $[2.2\times10^{-4},\,3\times10^{-4},\,3.5\times10^{-12}]\alpha_2[m]$ \\
$p^{(k)}$ & $[2.5,\,2.5,\,0.1]$ & $[0.125,\,0.125,\,0.005]$ \\
$\mu^{(k)}_{f,\ell_1}[m]$ & $[0.0004,\,0.00675,\,0]\alpha_2[m]$ & $[0.0004,\,0.000675,\,0]\alpha_2[m]$ \\
$\mu^{(k)}_{b,\ell_1}[m]$ & $[0,\,0.0175,\,0.0025]\alpha_2[m]$ & $[0,\,0.0175,\,0.0025]\alpha_2[m]$ \\
$\mu^{(k)}_{f,w-\ell_2}[m]$ & $\dfrac{8\times10^{-2}}{10^{-2}\times\lfloor\tfrac{m}{10}\rfloor+1}$ & $\dfrac{8\times10^{-2}}{10^{-2}\times\lfloor\tfrac{m}{10}\rfloor+1}$ \\
$\mu^{(k)}_{b,w-\ell_2}[m]$ & $\dfrac{8\times10^{-2}}{10^{-2}\times\lfloor\tfrac{m}{10}\rfloor+1}$ & $\dfrac{8\times10^{-2}}{10^{-2}\times\lfloor\tfrac{m}{10}\rfloor+1}$ \\
$\lambda_E$ & $0.99999995$ & $0.99999995$ \\
$\lambda_d$ & $0.99999$ & $0.99999$ \\
$m^{(d)}[m]$ & $0.99\dfrac{1}{\lfloor\tfrac{m}{10}\rfloor+1}+0.999\!\left(1-\dfrac{1}{\lfloor\tfrac{m}{10}\rfloor+1}\right)$ & $0.99\dfrac{1}{\lfloor\tfrac{m}{10}\rfloor+1}+0.999\!\left(1-\dfrac{1}{\lfloor\tfrac{m}{10}\rfloor+1}\right)$ \\
$B$ & $1$ & $1$ \\
\bottomrule
\end{tabular}}
\end{table}

\vspace{0.75cm}
\subsubsection{Hyperparameters for 10-layer CorInfoMax-EBD on MNIST and CIFAR-10 datasets for batch size 1}
\label{app:cebdhyper3}
Table \ref{tab:CEBD_MNIST_C1} list the hyperparameters used in the 10-Layer CorInfoMax-EBD numerical experiments for the MNIST and CIFAR-10 datasets with a batch size of $1$. In these experiments, a weight thresholding scheme is applied to the network weights for every $5000$ samples, where the weights with $0.00003$ scale (relative to the peak) are set to zero.

\begin{table}[ht!]
  \caption{10-Layer CorInfoMax-EBD hyperparameters: MNIST and CIFAR-10 datasets ($B=1$).}
  \label{tab:CEBD_MNIST_C1}
  \centering
  \resizebox{0.75\textwidth}{!}{\begin{tabular}{ll}
    \toprule
    Hyperparameter     & Value \\
        \midrule
        $\alpha[m]$ & $\frac{1}{3\times 10^{-3}\times\lfloor \frac{m}{10}\rfloor+1}$ \\
        $\alpha_2[m]$ & $\frac{1}{3\times\lfloor \frac{m}{10}\rfloor+1}$ \\
    $\mu^{(d_f,k)}[m]$ & $\left[\begin{array}{ccccc} 3.5 & 3.5 &\ldots &3.5  & 6e4 \end{array}\right]\alpha[m]$  \\
      $\mu^{(d_l,k)}[m]$ & $0.03\alpha[m]\cdot \mathbf{1}_{1\times 10}$ \\
      $\mu^{(f,k)}[m]$ & $\left[\begin{array}{cccccccccc}0.1, 0.1, 0.1, 0.1,0.1,0.1,0.1, 0.11, 0.06, 0.035\end{array}\right]\cdot 1e(-18)\cdot \alpha[m]$\\
      $\mu^{(b,k)}[m]$ & $\left[\begin{array}{ccccccccc}1.1, 0.4, 0.4, 0.4,0.4, 0.4, 0.4, 0.4, 0.4, 0.4\end{array}\right]\cdot 1e(-18)\cdot \alpha[m]$\\
      $\mu^{(p,k)}[m]$ & $\left[\begin{array}{cccccccccc}2, 2, 2, 2,2,2,2, 2, 5, 1e-7\end{array}\right]\cdot 1e(-3)\cdot  \alpha_2[m]$ \\
      $ p^{(k)}$ & $\left[\begin{array}{cccccccccc}1, 1, 1, 1,1,1,1, 1, 2, 0.1\end{array}\right]$\\
      $\mu^{(k)}_{f,\ell_1}[m]$ & $\left[\begin{array}{cccccccccc}0.16, 0.16, 0.16, 0.16,0.16,0.16,0.16, 0.16, 0.16, 0.0\end{array}\right]\alpha_2[m]$\\
      $\mu^{(k)}_{f,w-\ell_2}[m]$ & $\mathbf{0}_{1\times 10}$ \\
      $\mu^{(k)}_{l,w-\ell_2}[m]$ & $5e-4\cdot \alpha_2[m] \mathbf{1}_{1\times 10} $ \\
      $ \lambda_E$ & $0.99999995$ \\
      $\lambda_d$ & $0.999999\gamma+(1-\gamma)0.99999999$ with $\gamma=\frac{1}{\frac{m}{5}+1}$\\
      $m^{(d)}[m]$ & $0.99\frac{1}{\lfloor \frac{m}{10}\rfloor+1}+0.999(1-\frac{1}{\lfloor \frac{m}{10}\rfloor+1})$\\ 
      $B$ & $1$ \\
    \bottomrule
    \end{tabular}}
    \end{table}

\newpage
  
\subsection{Multi-Layer Perceptron}
In this section, we provide additional details about the numerical experiments conducted to train Multi-layer Perceptrons (MLPs) using the EBD algorithm (MLP-EBD). Appendix~\ref{app:mlpid} outlines the implementation details of these experiments, while Appendix~\ref{app:mlpinitial} discusses the initialization of algorithm variables. Information about hyperparameters and their values for the MNIST and CIFAR-10 datasets can be found in Appendices~\ref{app:mlpdeschype}-\ref{app:mlphyper}. In Appendix~\ref{app:acc_loss}  we present the accuracy and loss learning curves for the MLP architecture, shown in  Figures \ref{fig:accuracy_plots}.(a)-(b) and Figures \ref{fig:mse_plots}.(a)-(b), respectively. 

\subsubsection{Implementation details}
\label{app:mlpid}
For the MLP experiments using the proposed EBD approach, we adopted the same network architecture as described in \cite{Clark:21}, detailed in Tables \ref{tab:mnist_arch} and \ref{tab:cifar10_arch}.

In the MLP-EBD implementation, the following loss and regularization functions were employed:
\begin{itemize}
\item EBD  loss: $J^{(k)}$,
\item Power normalization loss: $J^{(k)}_{P}$,
\item Entropy objective: $J^{(k)}_E$,
\item $\ell_2$ weight regularization (weight decay): $J^{(k)}_{\ell_2}$, 
\item Activation sparsity regularization: $J^{(k)}_{\ell_1}=\|\rmH^{(k)}\|_1$.
\end{itemize}

Additionally, we imposed a weight-sparsity constraint by setting $WS$ percent of the weights to zero during the initialization phase and maintaining these weights at zero throughout training.

\subsubsection{Initialization of algorithm variables}
\label{app:mlpinitial}
We use the Pytorch framework's Xavier uniform initialization with gain value $10^{-2}$ on the $\rmR_{\rvh^{(k)}\ber}$ variables, and Kaiming uniform distribution with gain $0.75$ for synaptic weights $\rmW^{(k)}$.

\subsubsection{Description of hyperparameters}
\label{app:mlpdeschype}

Table \ref{tab:MLPEBDHyperParams} provides the description of the hyperparameters for the MLP-EBD implementation.

\begin{table}[ht!]
  \caption{Description of the hyperparameter notations for MLP-EBD.}
  \label{tab:MLPEBDHyperParams}
  \centering
  \begin{tabular}{lll}
    \toprule
    Hyperparameter     & Description   \\
    \midrule
    $\alpha[m]$ & Learning rate dynamic scaling factor \\
    $\alpha_2[m]$ & Learning rate dynamic scaling factor 2  \\
    $\mu^{(d,b,k)}$     & Learning rate  for (backward projection) decorrelation loss  \\
    $\mu^{(d,f,k)}$     & Learning rate  for (forward projection) decorrelation loss  \\
    $\mu^{(E,k)}$     & Learning rate  for entropy objective   \\
    $\mu^{(p,k)}$     & Learning rate  for power normalization loss    \\
    $ p^{(k)}$   & Target power level   \\
    $\mu^{(k)}_{\ell_1}$ & Learning rate for activation sparsity  \\
    $\mu^{(k)}_{w-\ell_2}$ & Weight $\ell_2$-regularization coefficent\\
    $ \lambda_E$ & Layer autocorrelation matrix update forgetting factor \\
    $\lambda_d$ & Error-layer activation cross-correlation forgetting factor \\
    $m^{(d)}$ & Momentum factor for decorrelation gradient\\
    $B$ & Batch size\\
    $WS$ & Weight Sparsity \\
    
    \bottomrule
    \end{tabular}
    \end{table}

\subsubsection{Hyperparameters for MLP-EBD on MNIST and CIFAR-10 Datasets}
\label{app:mlphyper}

Table~\ref{tab:MLPEBD_combined} summarizes the hyperparameters used in the MLP-EBD experiments for the MNIST and CIFAR-10 datasets. The iteration index is denoted by $m$ in all expressions.

\begin{table}[ht!]
\caption{MLP-EBD hyperparameters for MNIST and CIFAR-10 datasets.}
\label{tab:MLPEBD_combined}
\centering
\resizebox{\textwidth}{!}{\begin{tabular}{lll}
\toprule
Hyperparameter & MNIST & CIFAR-10 \\
\midrule
$\alpha[m]$ & $\dfrac{1}{1.5\times\lfloor\tfrac{m}{10}\rfloor+1}$ & $\dfrac{1}{1.5\times\lfloor\tfrac{m}{10}\rfloor+1}$ \\
$\alpha_2[m]$ & $\dfrac{\lfloor\tfrac{m}{10}\rfloor}{3\times10^{4}}+1$ & $\dfrac{\lfloor\tfrac{m}{10}\rfloor}{3\times10^{4}}+1$ \\
$\mu^{(d,b,k)}[m]$ & $18000\,\alpha[m]\alpha_2[m]$ for $k=0,1$ & $[4000,\,2000,\,2000,\,3500]\,\alpha[m]\alpha_2[m]$ \\
& $20000\,\alpha[m]\alpha_2[m]$ for $k=2$ &  \\
$\mu^{(d,f,k)}[m]$ & $0.005\,\alpha[m]\alpha_2[m]$ for $k=0,1$ & $0.005\,\alpha[m]\alpha_2[m]$ for $k=0,1$ \\
$\mu^{(E,k)}[m]$ & $[2.5\times10^{-4},\,1.5\times10^{-3},\,0]\,\alpha[m]$ & $[2.5\times10^{-4},\,1.5\times10^{-3},\,1.5\times10^{-3},\,0]\,\alpha[m]$ \\
$\mu^{(p,k)}[m]$ & $[4\times10^{-3},\,6\times10^{-3},\,1\times10^{-10}]\,\alpha[m]$ & $[4\times10^{-3},\,6\times10^{-3},\,6\times10^{-3},\,1\times10^{-10}]\,\alpha[m]$ \\
$p^{(k)}[m]$ & $[0.25,\,0.25,\,0.1]\,\alpha[m]$ & $[0.25,\,0.25,\,0.25,\,0.1]\,\alpha[m]$ \\
$\mu^{(k)}_{\ell_1}$ & $[0.8,\,0.3,\,0]\,\alpha[m]$ & $[0.8,\,0.3,\,0.3,\,0]\,\alpha[m]$ \\
$\mu^{(k)}_{w-\ell_2}$ & $1.6\times10^{-4}\,\alpha[m]$ for all layers & $1.6\times10^{-4}\,\alpha[m]$ for all layers \\
$\lambda_E$ & $0.99999$ & $0.99999$ \\
$\lambda_d$ & $0.999999$ & $0.999999$ \\
$m^{(d)}$ & $0.9999$ for all layers & $0.9999$ for all layers \\
$B$ & $20$ & $20$ \\
$WS$ & $55$ & $40$ \\
\bottomrule
\end{tabular}}
\end{table}

\newpage
\subsection{Convolutional Neural Network}
\label{sec:cnn_implementation}

In this section, we offer additional details regarding the numerical experiments for training Convolutional Neural Neural Networks (CNNs) using EBD algorithm (CNN-EBD). Section \ref{sec:cnn_implementation_details} provides information about implemetation details. Appendices~\ref{app:cnn_cnninit} and \ref{app:cnn_implementation_params} discuss the initialization of the algorithm's variables and describe the hyperparameters. Finally, Appendix~\ref{app:cnn_hypers} detail the specific hyperparameter configurations used in our numerical experiments for the MNIST, CIFAR-10 and CIFAR-100 datasets. In Appendix~\ref{app:acc_loss}  we present the accuracy and loss learning curves for the CNN, shown in  Figures \ref{fig:accuracy_plots}.(c)-(d) and Figures \ref{fig:mse_plots}.(c)-(d), respectively.

\subsubsection{Implementation details}
\label{sec:cnn_implementation_details}
The architectures we utilized for the CNN networks can be found in tables \ref{tab:mnist_arch} and \ref{tab:cifar_arch} respectively for the MNIST and CIFAR10 datasets. In the training, we used the Adam optimizer with hyperparameters \(\beta_1 = 0.9\), \(\beta_2 = 0.999\), and \(\epsilon = 10^{-8}\) \citep{adamoptimizer}. Also, the model biases are not utilized. In the CNN-EBD implementation the following loss and regularization functions as detailed in section \ref{sec:CNNEBD} are used:
\begin{itemize}
\item EBD loss: $J^{(k)}$,
\item Entropy objective: $J^{(k)}_E$,
\item Activation sparsity regularization: $J^{(k)}_{\ell_1}$.
\end{itemize}

Specifically for the CIFAR-100 experiments, we applied both training and test time augmentation of data to improve model generalization and handle increased difficulty in the task. At training time, each image was randomly translated into 2-pixels with reflection padding and a deterministic alternating horizontal flip that ensures that every image is flipped every other epoch, reducing redundancy compared to standard random flipping. During evaluation, we used test-time augmentation combining horizontal flipping and multi-crop averaging over six translated views (the original, two one-pixel translations, and their mirrored counterparts). 

\subsubsection{Initialization of algorithm variables}
\label{app:cnn_cnninit}
We use the Kaiming normal initialization for the weights, with a common standard deviation scaling parameter $\sigma_{\rmW}$, on both the linear and convolutional layers. Furthermore, the estimated cross-correlation variable $\rmR_{\rvh^{(k)}\ber}$ (linear layers) and $\mathbf{R}_{\mathbf{g}^{(k)}(\mathbf{H}^{(k,p)})\ber}$ (convolutional layers) are initialized with zero mean normal distributions with standard deviations $\sigma_{\rmR_{lin}}$ and $\sigma_{\rmR_{conv}}$ respectively.

\subsubsection{Description of hyperparameters}
\label{app:cnn_implementation_params}
Table \ref{tab:hyperparamDescriptionscnn} describes the notation for the hyperparameters used to train CNNs using the Error Broadcast and Decorrelation (EBD) approach.

\begin{table}[ht!]
  \caption{Description of the hyperparameter notations for CNN-EBD.}
  \label{tab:hyperparamDescriptionscnn}
  \centering
  \begin{tabular}{lll}
    \toprule
    Hyperparameter     & Description   \\
    \midrule
    $\alpha_{\text{exp}}$ & Exponential learning rate decay parameter. \\
    $\alpha[i]$ & Learning rate dynamic scaling factor where $i$ is the epoch index \\
    $\mu^{(d,b,k)}$     & Learning rate for (backward projection) decorrelation loss  \\
    $\mu^{(E,k)}$     & Learning rate  for entropy objective   \\
    $\mu^{(k)}_{\ell_1}$ & Learning rate for activation sparsity  \\
    $\sigma_{\rmW}$ & Standard deviation of the weight initialization.    \\
    $\sigma_{\mathbf{\rmR}_{lin}}$ & Std. dev. of $\rmR_{\rvh^{(k)}\ber}$ initialization in linear layers \\
    $\sigma_{\mathbf{\rmR}_{conv}}$ & Std. dev. of $\mathbf{R}_{\mathbf{g}^{(k)}(\mathbf{H}^{(k,p)})\ber}$ initialization in convolutional layers \\
    $\sigma_{\mathbf{\rmR}_{local}}$ & Gain parameter for $\mathbf{R}_{\mathbf{g}^{(k)}(\mathbf{H}^{(k,p)})\ber}$ initialization in locally connected layers \\
    $\lambda_E$ & Layer autocorrelation matrix update forgetting factor \\
    $\lambda_d$ & Error-layer activation cross-correlation forgetting factor \\
    $\lambda_R$     &  Convergence parameter for $\lambda$ as in Equations (\ref{eq:lambdaR}), (\ref{eq:lambdaR2}) \\
    $\epsilon_L$     & Entropy objective epsilon parameter for linear layers \\
    $\epsilon$     & Entropy objective epsilon parameter for conv. or locally con. layers \\
    $\beta$     & Adam Optimizer weight decay parameter \\
    $p$     & Dropout probability \\
    $B$ & Batch size\\
    \bottomrule
   \end{tabular}
\end{table}

\newpage
We also introduce a convergence parameter $\lambda_R$ which increases the estimation parameter for the decorrelation loss $\lambda_d$, together with the estimation parameter for the layer entropy objective $\lambda_E$, to converge to 1 as the training proceeds with the following Equations (\ref{eq:lambdaR}), (\ref{eq:lambdaR}2) where $i$ is the epoch index:
\begin{equation}
\label{eq:lambdaR}
\lambda_d^{(i+1)} = \lambda_d^{(i)} + \lambda_R \cdot \left(1 - \lambda_d^{(i)}\right), i\geq 0.
\end{equation}
\begin{equation}
\label{eq:lambdaR2}
\lambda_E^{(i+1)} = \lambda_E^{(i)} + \lambda_R \cdot \left(1 - \lambda_E^{(i)}\right), i\geq 0.
\end{equation}

\vspace{1cm}
\subsubsection{Hyperparameters for MNIST, CIFAR-10 and CIFAR-100 datasets}
\label{app:cnn_hypers}
Table \ref{tab:cnn_params}, lists the hyperparameters as defined in Table \ref{tab:hyperparamDescriptionscnn}, used in the CNN-EBD training experiments. 

\begin{table}[ht!]
  \caption{Hyperparameters for CNN-EBD for the MNIST, CIFAR-10 and CIFAR-100 datasets, where $i$ denotes the epoch index.}
  \label{tab:cnn_params}
  \centering
  \resizebox{\textwidth}{!}{\begin{tabular}{llll}
    \toprule
    Hyperparameter     & MNIST  & CIFAR-10 & CIFAR-100 \\
    \midrule
    $\alpha_{\text{exp}}$ & 0.97 & 0.97 & 0.97 \\
    $\alpha[i]$ & $10^{-4} \cdot \alpha_{\text{exp}}^{-i}$ & $10^{-4} \cdot \alpha_{\text{exp}}^{-i}$ & $10^{-4} \cdot \alpha_{\text{exp}}^{-i}$ \\
    $\mu^{(d,b,k)[i]}$ & $0.1 \alpha[i]$ for $k=0,1,2,3$ & $0.1 \alpha[i]$ for $k=0,1,2,3$ & $0.1 \alpha[i]$ for $k=0,1,2,3,4$ \\
                    & $10 \alpha[i]$ for $k=4$ & $10 \alpha[i]$ for $k=4$ & $10 \alpha[i]$ for $k=5$ \\
    $\mu^{(E,k)}[i]$ & $\left[\begin{array}{ccccc} 1 & 1 & 1 & 10 & 0 \end{array}\right]10^{-7} \alpha[i]$ & $\left[\begin{array}{ccccc} 1 & 1 & 1 & 1 & 1 \end{array}\right]10^{-6} \alpha[i]$ & $\left[\begin{array}{cccccc} 0 & 0 & 0 & 5 & 5 & 5\end{array}\right]10^{-7} \alpha[i]$ \\
    $\mu^{(k)}_{\ell_1}[i]$ & $\left[\begin{array}{ccccc} 1 & 1 & 1 & 10 & 0 \end{array}\right]10^{-11} \alpha[i]$ & $\left[\begin{array}{ccccc} 1 & 1 & 1 & 10^{2} & 0 \end{array}\right]10^{-10} \alpha[i]$ & $\left[\begin{array}{cccccc} 0 & 0 & 0 & 1 & 1 & 0 \end{array}\right]10^{-7} \alpha[i]$ \\
    $\sigma_{\rmW}$ & $\sqrt{\frac{1}{6}}$ & $\sqrt{\frac{1}{6}}$ & $\sqrt{\frac{1}{6}}$ \\
    $\sigma_{\mathbf{\rmR}_{lin}}$ & $1e-2$ & $1e-2$ & $1e-2$ \\
    $\sigma_{\mathbf{\rmR}_{conv}}$ & $1e-2$ & $1e-2$ & $1e-2$ \\
    $\lambda_d$ & $0.99999$ & $0.99999$ & $0.99999$ \\
    $\lambda_E$ & $0.99999$ & $0.99999$ & $0.99999$ \\
    $\lambda_R$ & $2e-2$ & $2e-2$ & $2e-2$ \\
    $\beta$ & $1e-8$ & $1e-5$ & $1e-5$ \\
    $\epsilon_L$ & $1e-8$ & $1e-8$ & $1e-8$ \\
    $\epsilon$ & $1e-5$ & $1e-5$ & $1e-5$ \\
    $B$ & $16$ & $16$ & $16$ \\
    $p$ & N/A & N/A & $0.075$ \\
    \bottomrule
  \end{tabular}}
\end{table}

\newpage
\subsection{Locally Connected Network}
\label{app:lc_implementation}
In this section, we offer additional details regarding the numerical experiments for the training of Locally Connected Networks (LCs) using EBD algorithm (LC-EBD). Appendix~\ref{app:lcn_implementation_details} provides information about implemetation details. Appendices~\ref{app:lcn_cnninit} and \ref{app:lcn_implementation_params} discuss the initialization of the algorithm's variables and describe the hyperparameters. Finally, 
 Appendix~\ref{app:lcn_hypers} detail the specific hyperparameter configurations used in our numerical experiments for the MNIST and CIFAR-10 datasets. In  Appendix~\ref{app:acc_loss}  we present the accuracy and loss learning curves for the LCs, shown in  Figures \ref{fig:accuracy_plots}.(e)-(f) and Figures \ref{fig:mse_plots}.(e)-(f), respectively.

\subsubsection{Implementation details}
\label{app:lcn_implementation_details}
The training procedure mirrors the CNN approach described in Section \ref{sec:cnn_implementation_details} for CNNs. In the LC-EBD implementation, the loss and regularization functions detailed in section \ref{sec:LCEBD} are used:
\begin{itemize}
\item EBD loss: $J^{(k)}$,
\item Entropy objective: $J^{(k)}_E$,
\item Activation sparsity regularization: $J^{(k)}_{\ell_1}$.
\end{itemize}

\subsubsection{Initialization of algorithm variables}
\label{app:lcn_cnninit}
We use the Kaiming uniform initialization for the weights, with a common standard deviation scaling parameter $\sigma_{\rmW}$, on both the linear and locally connected layers. The estimated cross-correlation variable $\rmR_{\rvh^{(k)}\ber}$ (linear layers) is initialized with a normal distribution with zero mean and standard deviation $\sigma_{\rmR_{lin}}$. Also, the parameter $\mathbf{R}_{\mathbf{g}^{(k)}(\mathbf{H}^{(k,p)})\ber}$ (locally connected layers) is initialized with Pytorch framework's Xavier uniform initialization with the gain parameter equal to $\sigma_{\rmR_{local}}$.

\subsubsection{Description of hyperparameters}
\label{app:lcn_implementation_params}
Table \ref{tab:hyperparamDescriptionscnn} in the CNN section, again describes the notation for the hyperparameters used to train LCs using the Error Broadcast and Decorrelation (EBD) approach. The convergence parameter $\lambda_R$ introduced in equations Eq.~(\ref{eq:lambdaR}) and Eq.~(\ref{eq:lambdaR}) is used as well.

\subsubsection{Hyperparameters for MNIST and CIFAR-10 datasets}
\label{app:lcn_hypers}
Table \ref{tab:lcn_params}, lists the hyperparameters as defined in Table \ref{tab:hyperparamDescriptionscnn}, used in the LC-EBD training experiments. 

\vspace{-5pt}
\begin{table}[ht!]
  \caption{Hyperparameters for LC-EBD for both the MNIST and CIFAR-10 datasets, where $i$ denotes the epoch index.}
  \label{tab:lcn_params}
  \centering
   \resizebox{0.85\textwidth}{!}{\begin{tabular}{llll}
    \toprule
    Hyperparameter     & MNIST  & CIFAR-10 \\
    \midrule
    $\alpha_{\text{exp}}$ & 0.96 & 0.98 \\
    $\alpha[i]$ & $10^{-4} \cdot \alpha_{\text{exp}}^{-i}$ & $10^{-4} \cdot \alpha_{\text{exp}}^{-i}$ \\
    $\mu^{(d,b,k)}[i]$ & $0.1 \alpha[i]$ for $k=0,1,2,3$ & $0.5 \alpha[i]$ for $k=0,1,2,3$ \\
                    & $10 \alpha[i]$ for $k=4$ & $5 \alpha[i]$ for $k=4$ \\
    $\mu^{(E,k)}[i]$ & $\left[\begin{array}{ccccc} 1 & 1 & 1 & 10^{2} & 0 \end{array}\right]10^{-9} \alpha[i]$ & $\left[\begin{array}{ccccc} 1 & 1 & 1 & 10 & 10^{3} \end{array}\right]10^{-11} \alpha[i]$ \\
    $\mu^{(k)}_{\ell_1}[i]$ & $\left[\begin{array}{ccccc} 1 & 1 & 1 & 10 & 0 \end{array}\right]10^{-11} \alpha[i]$ & $\left[\begin{array}{ccccc} 1 & 1 & 1 & 10 & 0 \end{array}\right]10^{-13} \alpha[i]$ \\
    $\sigma_{\rmW}$ & $\sqrt{\frac{1}{6}}$ & $\sqrt{\frac{1}{6}}$ \\
    $\sigma_{\mathbf{\rmR}_{lin}}$ & $1$ & $1e-3$ \\
    $\sigma_{\mathbf{\rmR}_{local}}$ & $1$ & $1e-1$ \\
    $\lambda_d$ & $0.99999$ & $0.99999$ \\
    $\lambda_E$ & $0.99999$ & $0.99999$ \\
    $\lambda_R$ & $3e-2$ & $3e-2$ \\
    $\beta$ & $1e-8$ & $1e-6$ \\
    $\epsilon_L$ & $1e-8$ & $1e-8$ \\
    $\epsilon$ & $1e-5$ & $1e-5$ \\
    $B$ & $16$ & $16$ \\
    $p$ & N/A & N/A \\
    \bottomrule
  \end{tabular}}
\end{table}

\clearpage
\subsection{Implementation details for Direct Feedback Alignment (DFA) and backpropagation training}

This section presents further details on the numerical experiments comparing Direct Feedback Alignment (DFA) and Backpropagation (BP) methods, conducted under the same training conditions and number of epochs as those used for our proposed EBD algorithm. The results of these experiments are provided in Table \ref{tab:mnist-cifar10-results} . We also include the DFA+E method, which extends DFA by incorporating correlative entropy regularization similar to the EBD. Note that, when the update on the $\rmR_{\rvh^{(k)}\ber}$ is fixed to its initialization, the EBD algorithm reduces to standard DFA.

For BP-based models trained on MNIST, CIFAR-10 and CIFAR-100, we used the Adam optimizer with hyperparameters \(\beta_1 = 0.9\), \(\beta_2 = 0.999\), and \(\epsilon = 10^{-8}\) \citep{adamoptimizer}. For DFA and DFA+E models, we again used the Adam optimizer for CNN and LC models, while MLP models were trained using SGD with momentum. \looseness=-2

In Tables \ref{tab:params_bp_dfa_mnist} and \ref{tab:params_bp_dfa_cifar}, we detail the hyperparameters for models trained with BP, DFA, and DFA+E update rules on MNIST and CIFAR-10 respectively. In Table \ref{tab:params_bp_dfa_cifar100}, we give the hyperparameters for the CNN model (detailed in Table \ref{tab:cifar100_cnnmodel}) trained with with BP and DFA. Some of the learning rate and the learning rate decay values or methodologies are linked to the tables corresponding to the hyperparameter details of its EBD counterpart, where the same method is also utilized for its DFA or DFA+E counterpart. Unlinked values denote a constant value applied to each layer, or the step decay multiplier applied per epoch. Additionally, sparsity inducing losses are not utilized for BP, DFA and DFA+E models. \looseness=-2

\begin{table}[ht]
\centering
\caption{Hyperparameter details for models trained on the MNIST dataset, including learning rate, L2 regularization coefficient, learning rate decay, and number of epochs for MLP, CNN, and LC models using BP, DFA, and DFA+E methods.}
  \resizebox{0.8\textwidth}{!}{\begin{tabular}{cccccc}
\toprule
Model & Method & Learning Rate ($\mu^{(d,b,k)}$) & L2 Reg. Coef. & LR Decay ($\alpha_{\text{exp}}$) & Epochs \\
\toprule
 & BP        & $5e-5$  & $1e-5$  & 0.96  & 120 \\
MLP & DFA       & Table-\ref{tab:MLPEBD_combined} & Table-\ref{tab:MLPEBD_combined}  & Table-\ref{tab:MLPEBD_combined}  & 120 \\
 & DFA+E     & Table-\ref{tab:MLPEBD_combined}   & Table-\ref{tab:MLPEBD_combined} & Table-\ref{tab:MLPEBD_combined}   & 120 \\
\midrule
 & BP        & $5e-5$  & $1e-8$  & 0.97  & 100 \\
CNN & DFA       & Table-\ref{tab:cnn_params}  & $1e-8$  & 0.97  & 100 \\
 & DFA+E     & Table-\ref{tab:cnn_params} & $1e-8$  & 0.97  & 100 \\
\midrule
 & BP        & $5e-5$  & $1e-8$  & 0.96  & 100 \\
LC  & DFA       & Table-\ref{tab:lcn_params} & $1e-8$  & 0.96  & 100 \\
 & DFA+E     & Table-\ref{tab:lcn_params}  & $1e-8$  & 0.96  & 100 \\
\bottomrule
\end{tabular}}
\label{tab:params_bp_dfa_mnist}
\end{table}
\vspace{-0.25cm}

\begin{table}[ht]
\centering
\caption{Hyperparameter details for models trained on the CIFAR-10 dataset, including learning rate, L2 regularization coefficient, learning rate decay, and number of epochs for MLP, CNN, and LC models using BP, DFA, and DFA+E methods.}
  \resizebox{0.8\textwidth}{!}{\begin{tabular}{cccccc}
\toprule
Model & Method & Learning Rate ($\mu^{(d,b,k)}$) & L2 Reg. Coef. & LR Decay ($\alpha_{\text{exp}}$) & Epochs \\
\toprule
 & BP        & $5e-5$  & $1e-5$  & 0.85  & 120 \\
MLP & DFA       & Table-\ref{tab:MLPEBD_combined} & 0 & Table-\ref{tab:MLPEBD_combined}  & 120 \\
 & DFA+E     & Table-\ref{tab:MLPEBD_combined}   & 0 & Table-\ref{tab:MLPEBD_combined}   & 120 \\
\midrule
 & BP        & $5e-5$  & $1e-5$  & 0.92  & 200 \\
CNN & DFA       & Table-\ref{tab:cnn_params}  & $1e-5$  & 0.97  & 200 \\
 & DFA+E     & Table-\ref{tab:cnn_params} & $1e-5$  & 0.97  & 200 \\
\midrule
 & BP        & $1e-4$  & $1e-6$  & 0.90  & 200 \\
LC  & DFA       & Table-\ref{tab:lcn_params} & $1e-6$  & 0.96  & 200 \\
 & DFA+E     & Table-\ref{tab:lcn_params}  & $1e-6$  & 0.96  & 200 \\
\bottomrule
\end{tabular}}
\label{tab:params_bp_dfa_cifar}
\end{table}
\vspace{-0.25cm}

\begin{table}[ht]
\centering
\caption{Hyperparameter details for the CNN model trained on the CIFAR-100 dataset, including learning rate, L2 regularization coefficient, learning rate decay, dropout probability and number of epochs using BP and DFA methods.}
  \resizebox{1\textwidth}{!}{\begin{tabular}{ccccccc}
\toprule
Model & Method & Learning Rate ($\mu^{(d,b,k)}$) & L2 Reg. Coef. & LR Decay ($\alpha_{\text{exp}}$) & Dropout ($p$) & Epochs \\
\toprule
\multirow{2}{*}{CNN} & BP        & $5e-5$  & $1e-5$  & 0.97  & 0.5 & 200 \\
 & DFA       & $\left[ 0.1 \hspace{0.5em} 0.1 \hspace{0.5em} 0.1 \hspace{0.5em} 1 \hspace{0.5em} 1 \hspace{0.5em} 50 \right] \cdot 0.25 \alpha[i]$  & $0$  & 0.95 & 0.075 & 300 \\
\bottomrule
\end{tabular}}
\label{tab:params_bp_dfa_cifar100}
\end{table}

\clearpage
\subsection{Runtime comparisons for the update rules}
\label{sec:empirical_runtime}
In this section, we present the relative average runtimes from the simulations, normalized to BP for the MNIST and CIFAR-10 models in Tables \ref{tab:mnist_runtimes} and \ref{tab:cifar_runtimes} respectively, for the models that we implemented and demonstrated their performance in Table \ref{tab:mnist-cifar10-results}.

The results show that entropy regularization in both EBD and DFA+E more than doubles the average runtime. However, these runtimes could be significantly improved by optimizing the implementation of the entropy gradient terms, specifically by avoiding repeated matrix inverse calculations. A more efficient approach would involve directly updating the inverses of the correlation matrices instead of recalculating both the matrices and their inverses at each step. This strategy aligns with the CorInfoMax-(EP/EBD) network structure. Nonetheless, we chose not to pursue this optimization, as CorInfoMax networks already employ it effectively. 

The efficiency of the DFA, DFA+E, and EBD methods can be further enhanced through low-level optimizations and improved implementations.

\begin{table}[ht]
    \centering
    \caption{Average Runtimes in MNIST (relative to BP)}
    \begin{tabular}{lcccc}
        \toprule
        Model & DFA & DFA+E & BP & EBD \\ \midrule
        MLP & 3.40 & 7.68 & 1.0 & 8.06 \\ 
        CNN & 1.68 & 2.95 & 1.0 & 3.85 \\ 
        LC  & 1.61 & 3.57 & 1.0 & 3.54 \\ \bottomrule
    \end{tabular}
    \label{tab:mnist_runtimes}
\end{table}

\begin{table}[ht]
    \centering
    \caption{Average Runtimes in CIFAR-10 (relative to BP)}
    \begin{tabular}{lcccc}
        \toprule
        Model & DFA & DFA+E & BP & EBD \\ \midrule
        MLP & 2.85 & 6.94 & 1.0 & 7.61 \\ 
        CNN & 2.10 & 3.24 & 1.0 & 4.11 \\ 
        LC  & 1.35 & 2.01 & 1.0 & 2.41 \\ \bottomrule
    \end{tabular}
    \label{tab:cifar_runtimes}
\end{table}

\subsection{Reproducibility}
To facilitate the reproducibility of our results, we have included the following:
\begin{itemize}
    \item[i.] Detailed information on the derivation of the weight and bias updates of the Error Broadcast and Decorrelation (EBD) Algorithm for various networks in Appendix \ref{app:derivationofupdateterms} for MLPs, \ref{sec:CNNEBD} for CNNs, \ref{sec:LCEBD} for LCs,
    \item[ii.] Full list of hyperparameters used in the experiments in Appendix \ref{app:cebdhyper}, \ref{app:mlphyper}, \ref{app:cnn_hypers}, \ref{app:lcn_hypers},
    \item[iii.] Algorithm descriptions for CorInfoMax Error Broadcast and Decorrelation (CorInfoMax-EBD) Algorithm in pseudo-code format in Appendix \ref{app:ebdalgorithm},
    \item[iv.] Python scripts, Jupyter notebooks, and bash scripts for replicating the individual experiments and reported results are included in the supplementary zip file.
\end{itemize}

\subsection{Computational resources}
\label{sec:comp_resources}
All experiments were conducted within a High-Performance Computing (HPC) facility. Each experimental run utilized a single NVIDIA Tesla V100 GPU equipped with 32GB of HBM2 memory. To provide context on execution times for our proposed CorInfoMax-EBD models:

\begin{itemize}

\item Training the 3-Layer CorInfoMax-EBD model (as described in Appendix \ref{app:cebdhyper}) for 30 epochs required approximately 22 hours.
\item Training the 10-Layer CorInfoMax-EBD model (as described in Appendix \ref{app:cebdhyper3}) for 100 epochs took approximately 75 hours. Execution times for other models (MLP, CNN, LC) and baseline methods were generally shorter; relative runtime comparisons are provided in Appendix \ref{sec:empirical_runtime}.

\end{itemize}

\newpage

\newpage
\subsection{Accuracy and loss curves}
\label{app:acc_loss}
Figures \ref{fig:accuracy_plots} and \ref{fig:mse_plots} present the training/test accuracy and MSE loss curves over epochs for the CIFAR-10 and MNIST datasets. Solid lines represent test curves; dashed lines denote training curves.

\begin{figure}[htbp]
    \centering
    \subfloat[]{\includegraphics[trim = {0cm 0cm 0cm 0cm},clip,width=0.45\textwidth ]{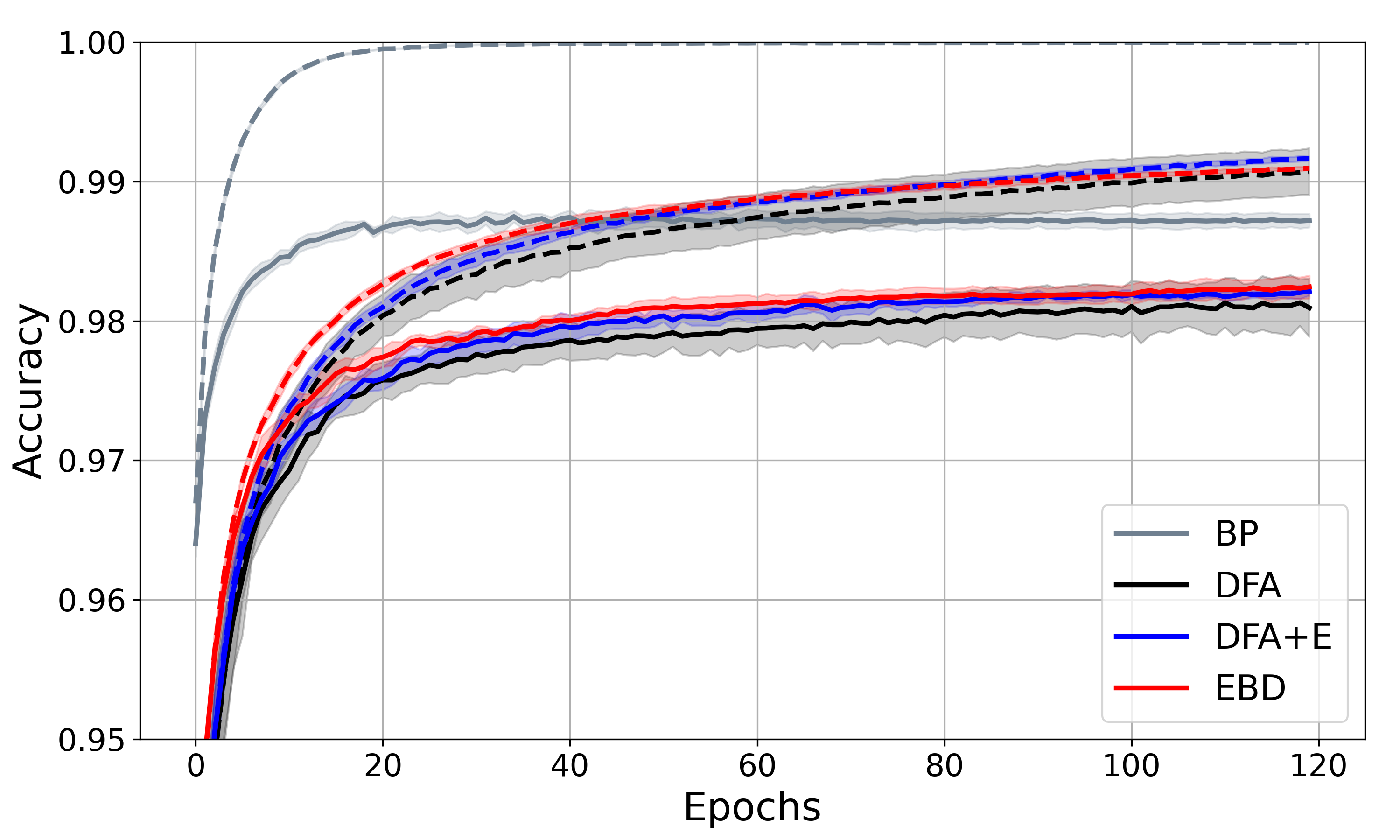} \label{fig:a}} 
    \subfloat[]{\includegraphics[trim = {0cm 0cm 0cm 0cm},clip,width=0.45\textwidth ]{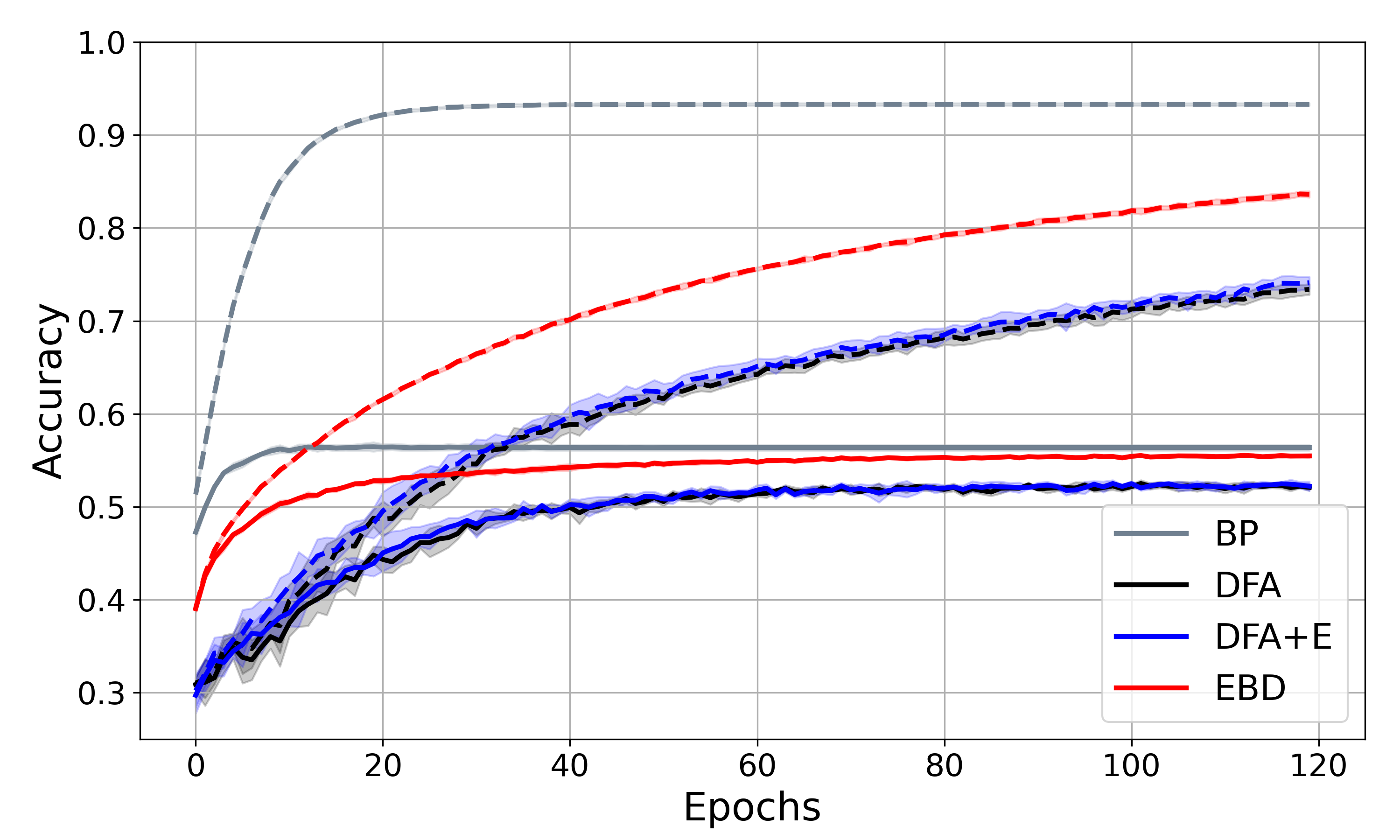} \label{fig:b}} \\
    \vspace{-7pt}
    \subfloat[]{\includegraphics[trim = {0cm 0cm 0cm 0cm},clip,width=0.45\textwidth ]{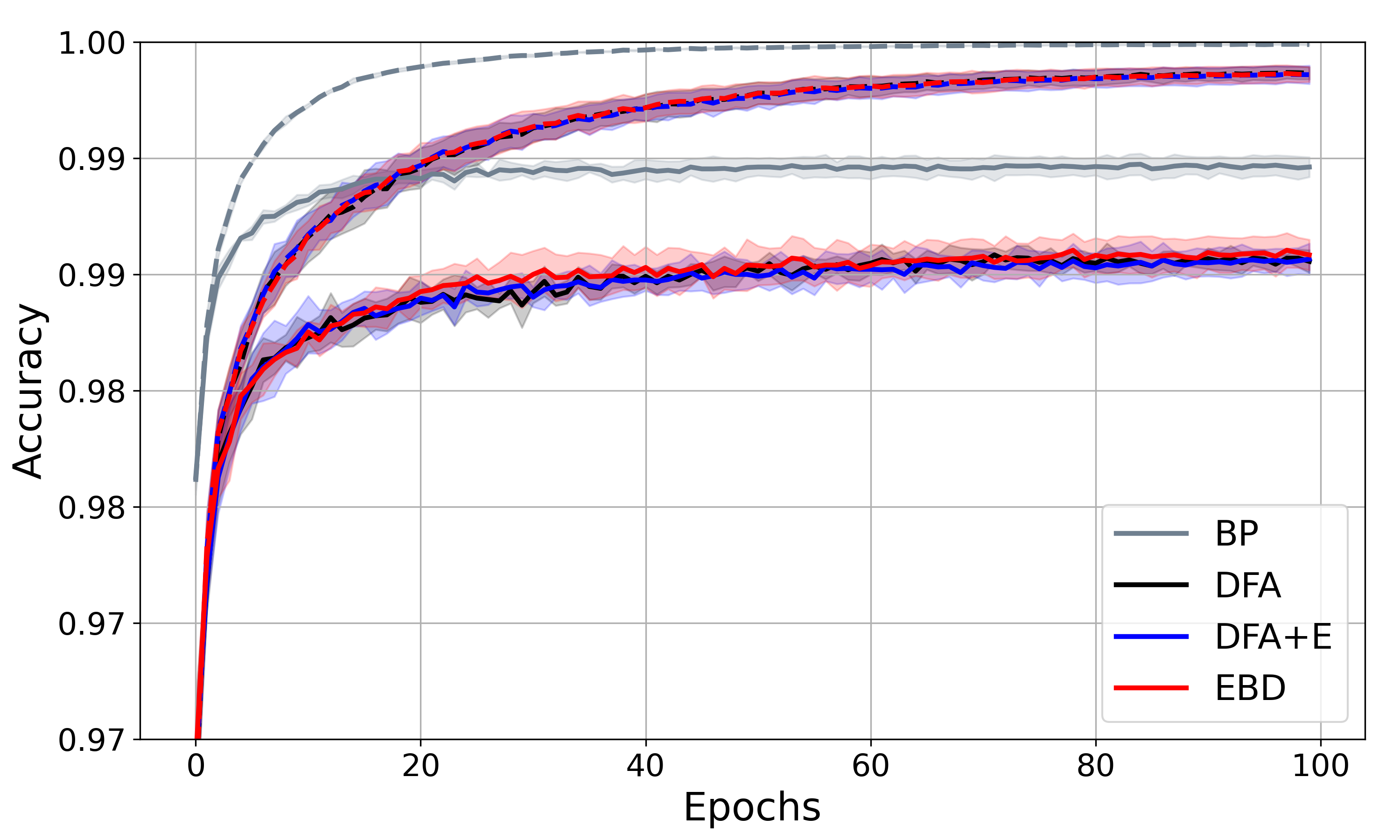} \label{fig:c}} 
    \subfloat[]{\includegraphics[trim = {0cm 0cm 0cm 0cm},clip,width=0.45\textwidth ]{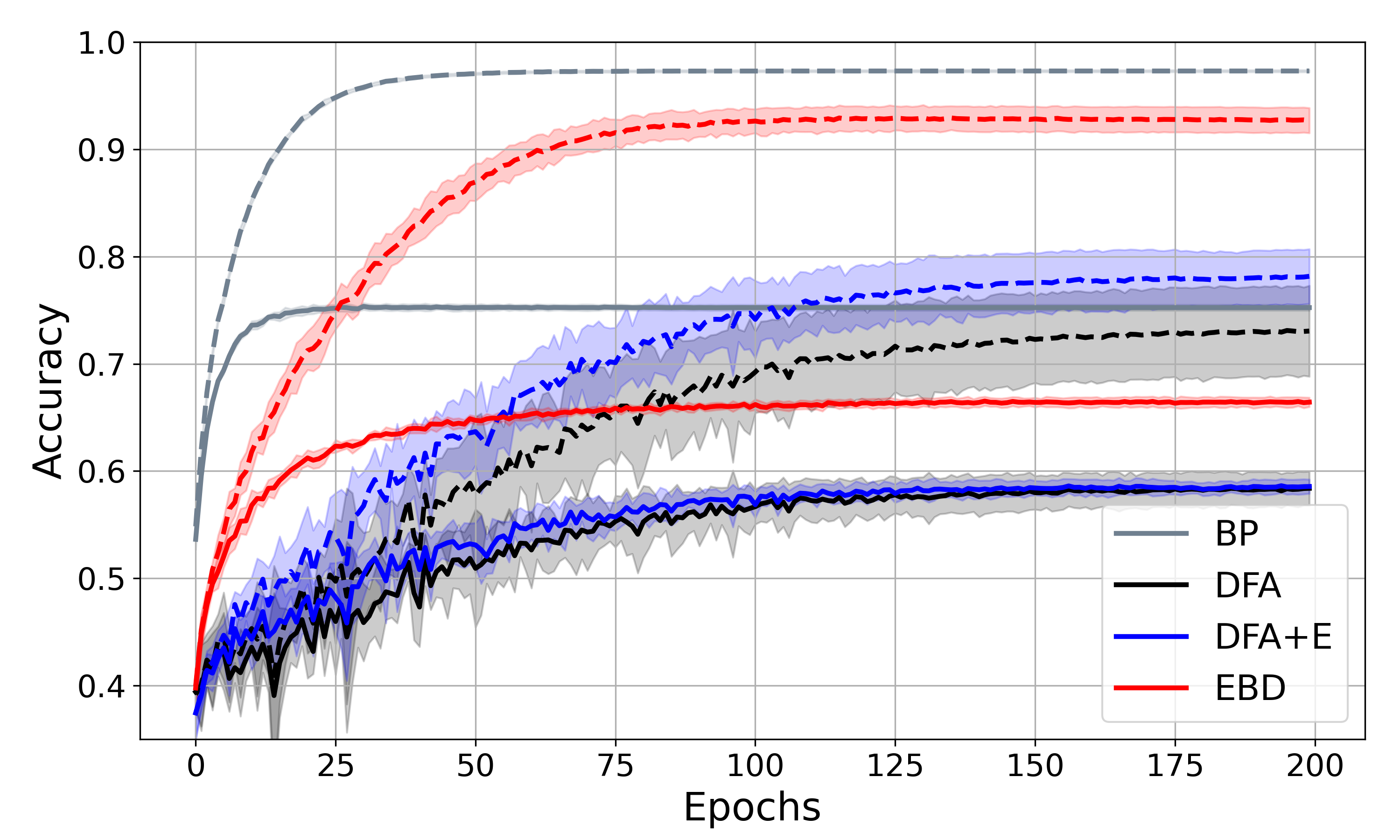} \label{fig:d}} \\
    \vspace{-7pt}
    \subfloat[]{\includegraphics[trim = {0cm 0cm 0cm 0cm},clip,width=0.45\textwidth ]{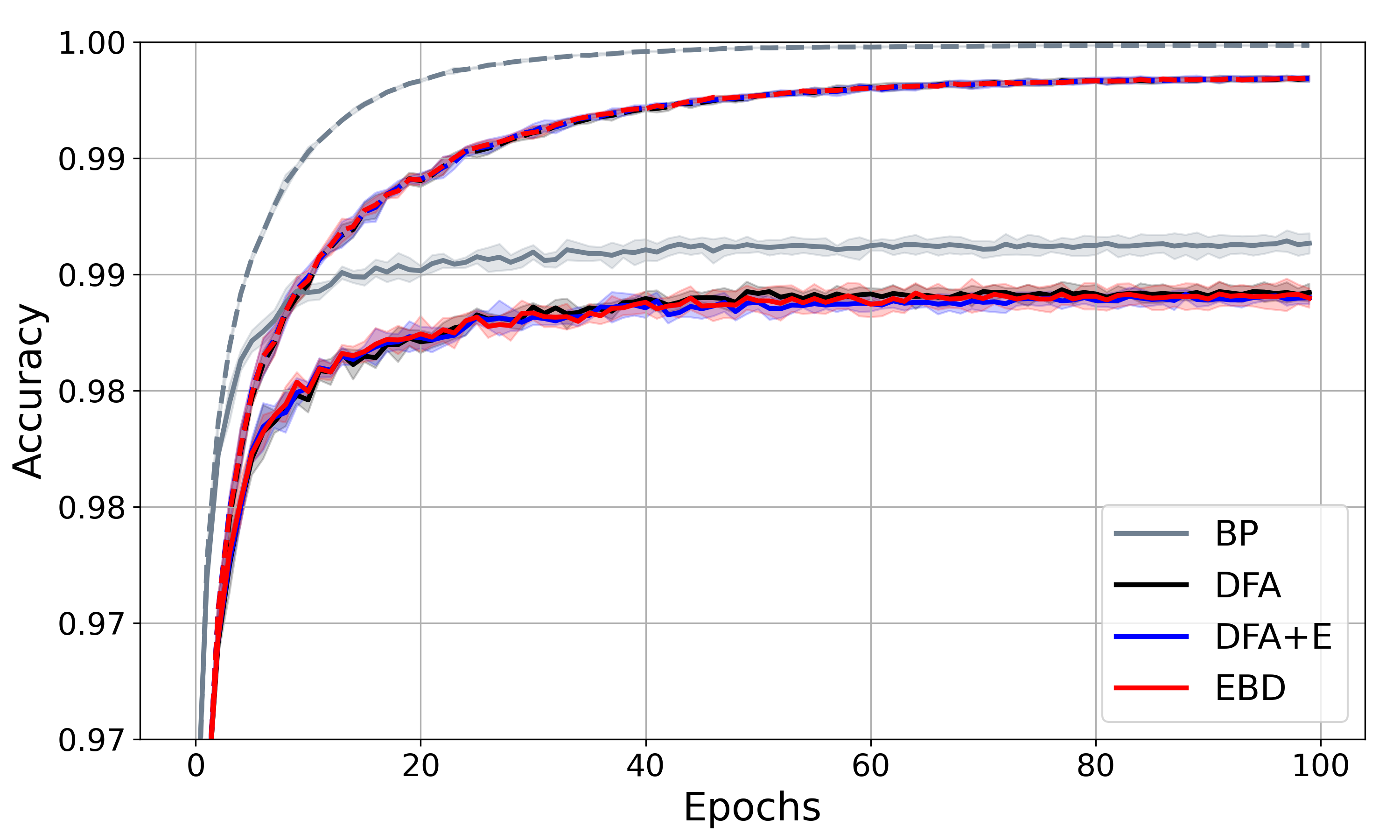} \label{fig:e}} 
    \subfloat[]{\includegraphics[trim = {0cm 0cm 0cm 0cm},clip,width=0.45\textwidth ]{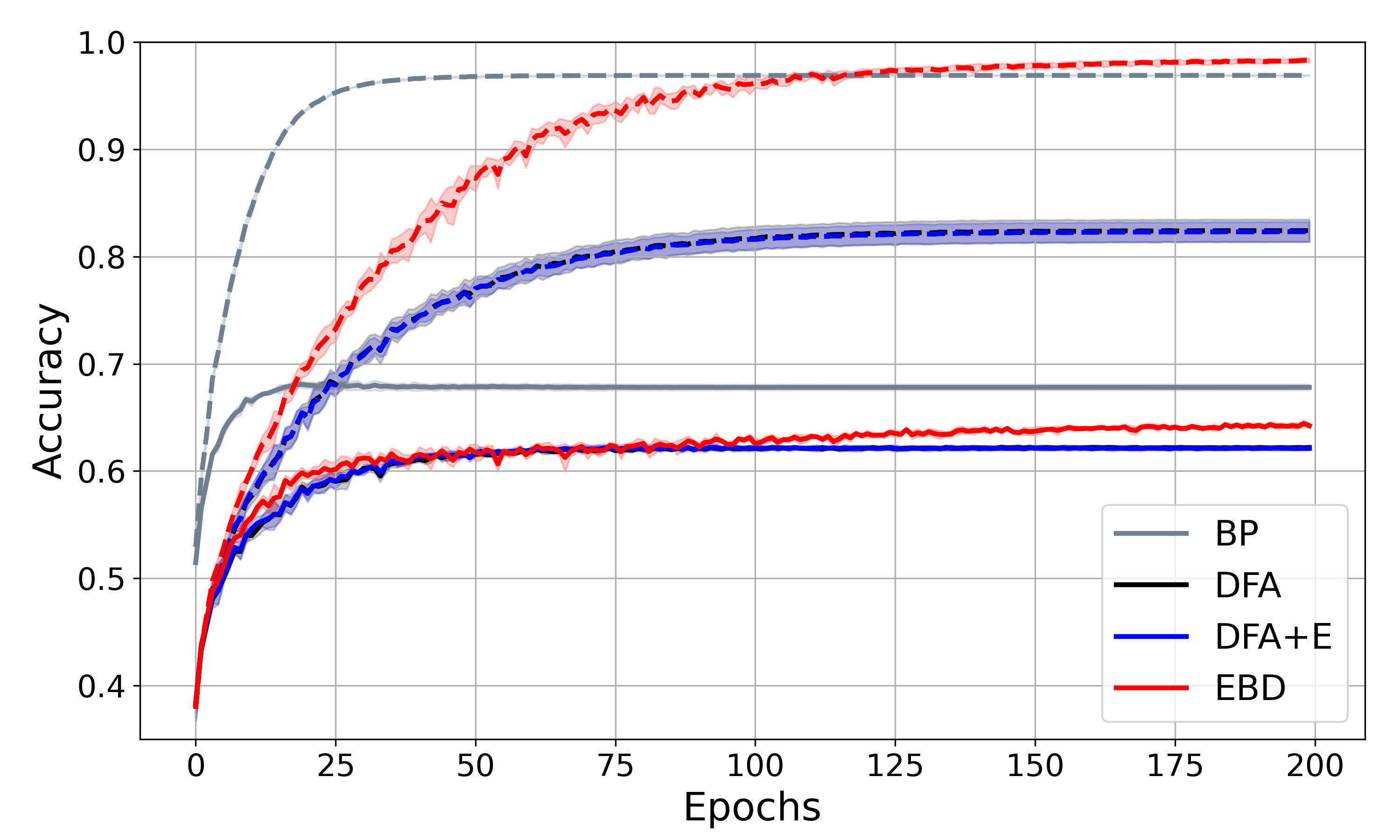} \label{fig:f}} \\
    \vspace{-7pt}
    \subfloat[]{\includegraphics[trim = {0cm 0cm 0cm 0cm},clip,width=0.45\textwidth ]{Figures/train_test_accuracy_plot_cnn_mnist.png} \label{fig:g}} 
    \subfloat[]{\includegraphics[trim = {0cm 0cm 0cm 0cm},clip,width=0.45\textwidth ]{Figures/train_test_accuracy_plot_cnn_cifar.png} \label{fig:h}} \\
    
    \caption{Train and test accuracies plotted as a function of algorithm epochs for various update rules (averaged over $n=5$ runs associated with the corresponding $\pm$ std envelopes) for the (a) MLP on MNIST (b) MLP on CIFAR-10 (c) CNN on MNIST (d) CNN on CIFAR-10 (e) LC on MNIST (f) LC on CIFAR-10 (g) 10-Layer CorInfoMax-EBD with batchsize=1 on MNIST (h)  10-Layer CorInfoMax-EBD with batchsize=1 on CIFAR-10.}
    \label{fig:accuracy_plots}
\end{figure}

\begin{figure}[htbp]
    \centering
    \subfloat[]{\includegraphics[trim = {0cm 0cm 0cm 0cm},clip,width=0.45\textwidth ]{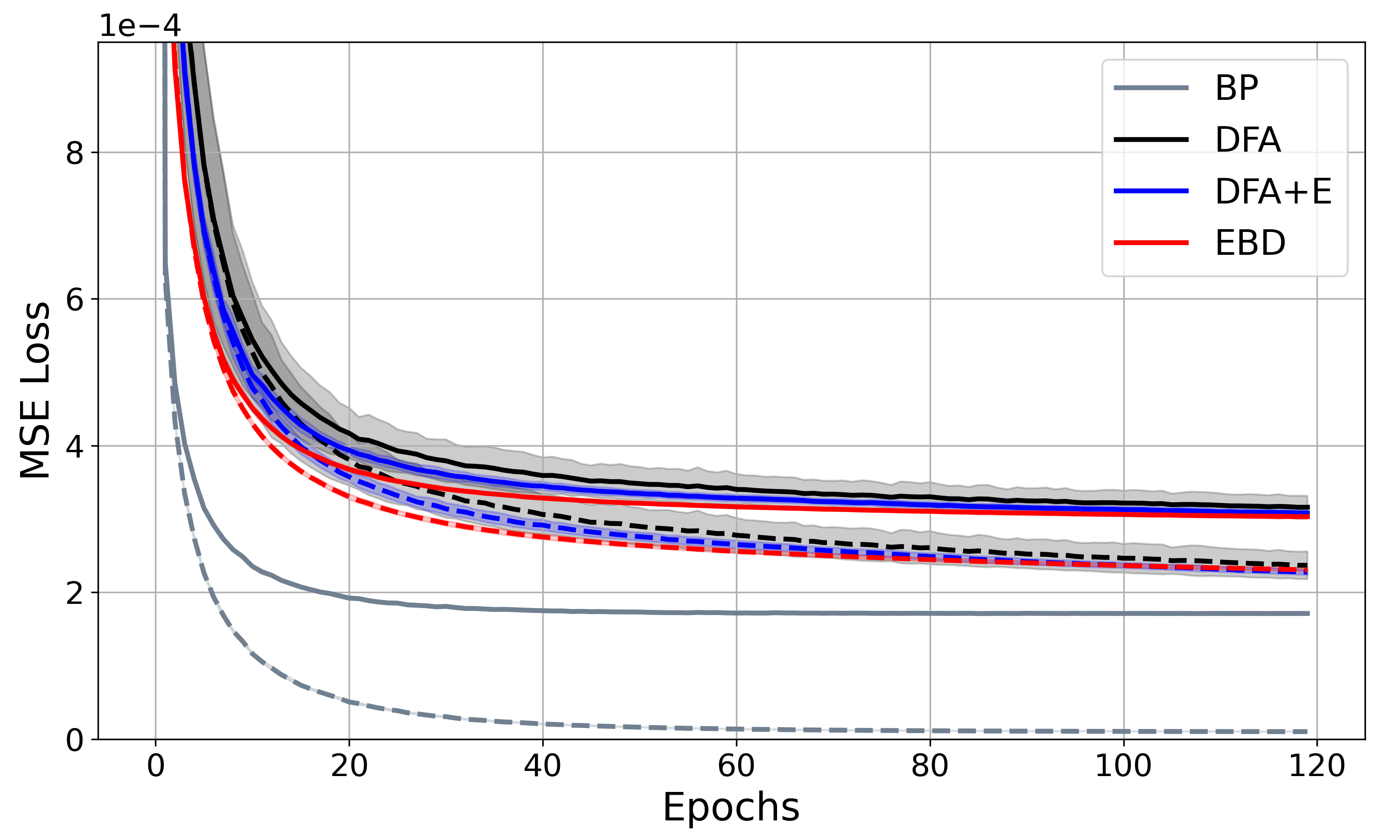} \label{fig:a2}} 
    \subfloat[]{\includegraphics[trim = {0cm 0cm 0cm 0cm},clip,width=0.45\textwidth ]{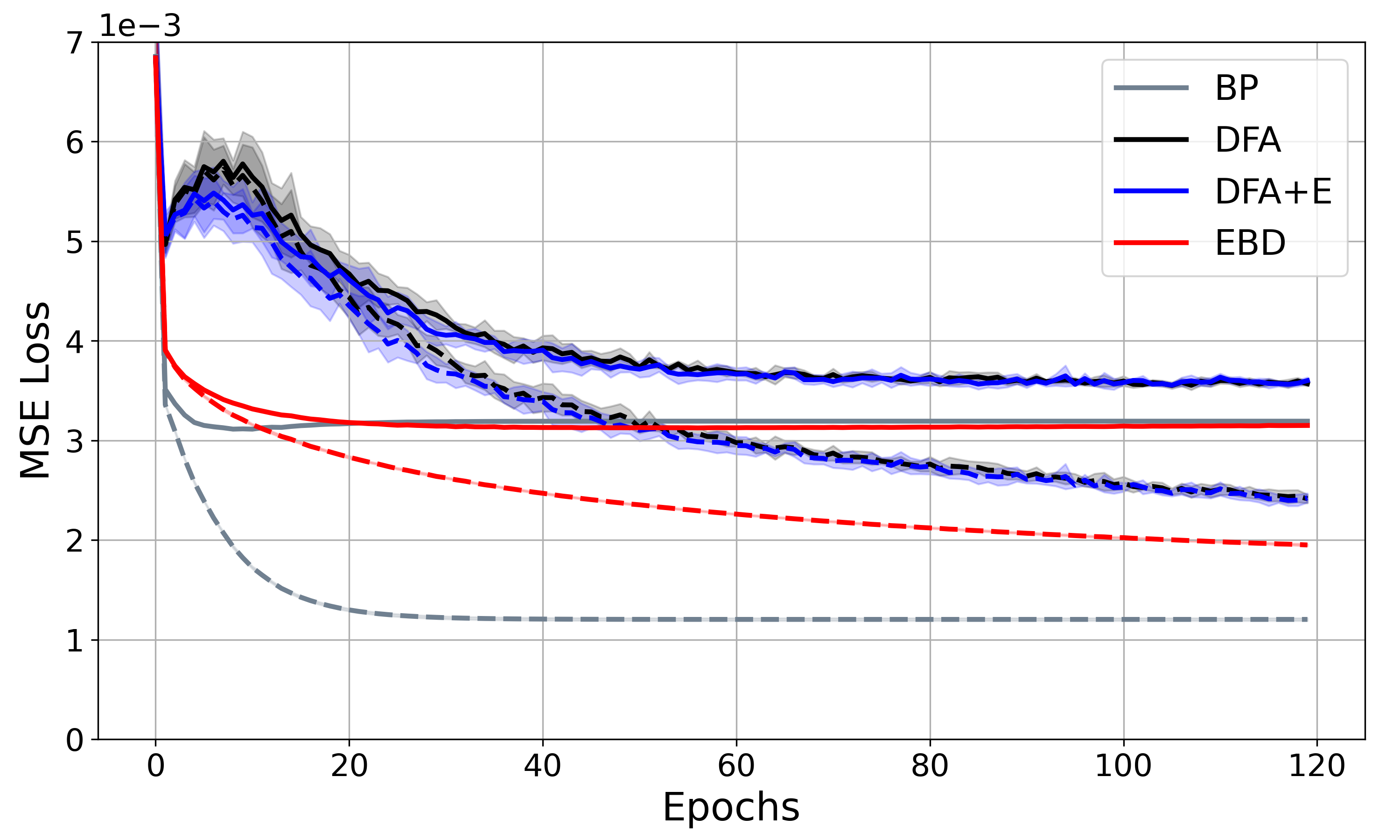} \label{fig:b2}} \\
    \vspace{-7pt}
    \subfloat[]{\includegraphics[trim = {0cm 0cm 0cm 0cm},clip,width=0.45\textwidth ]{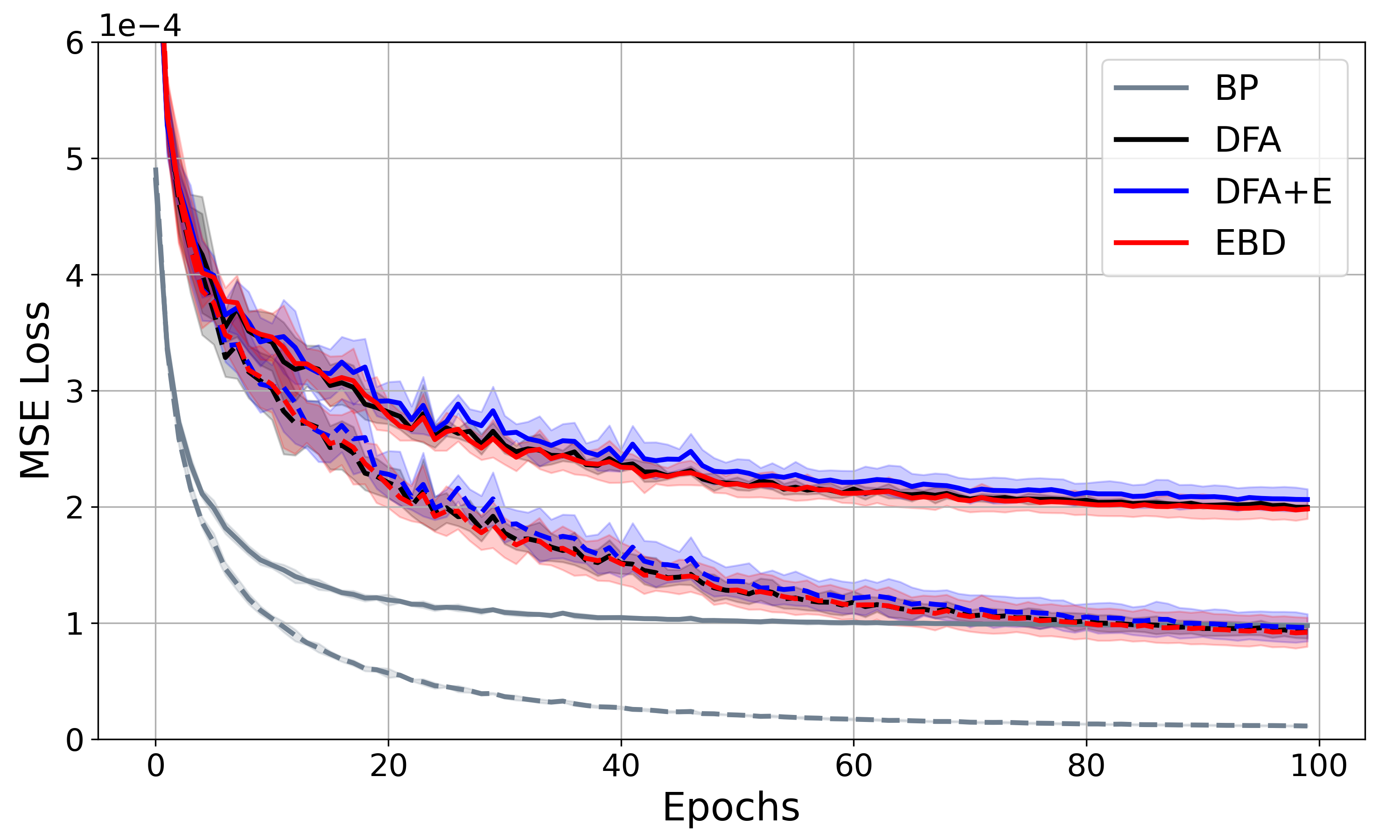} \label{fig:c2}} 
    \subfloat[]{\includegraphics[trim = {0cm 0cm 0cm 0cm},clip,width=0.45\textwidth ]{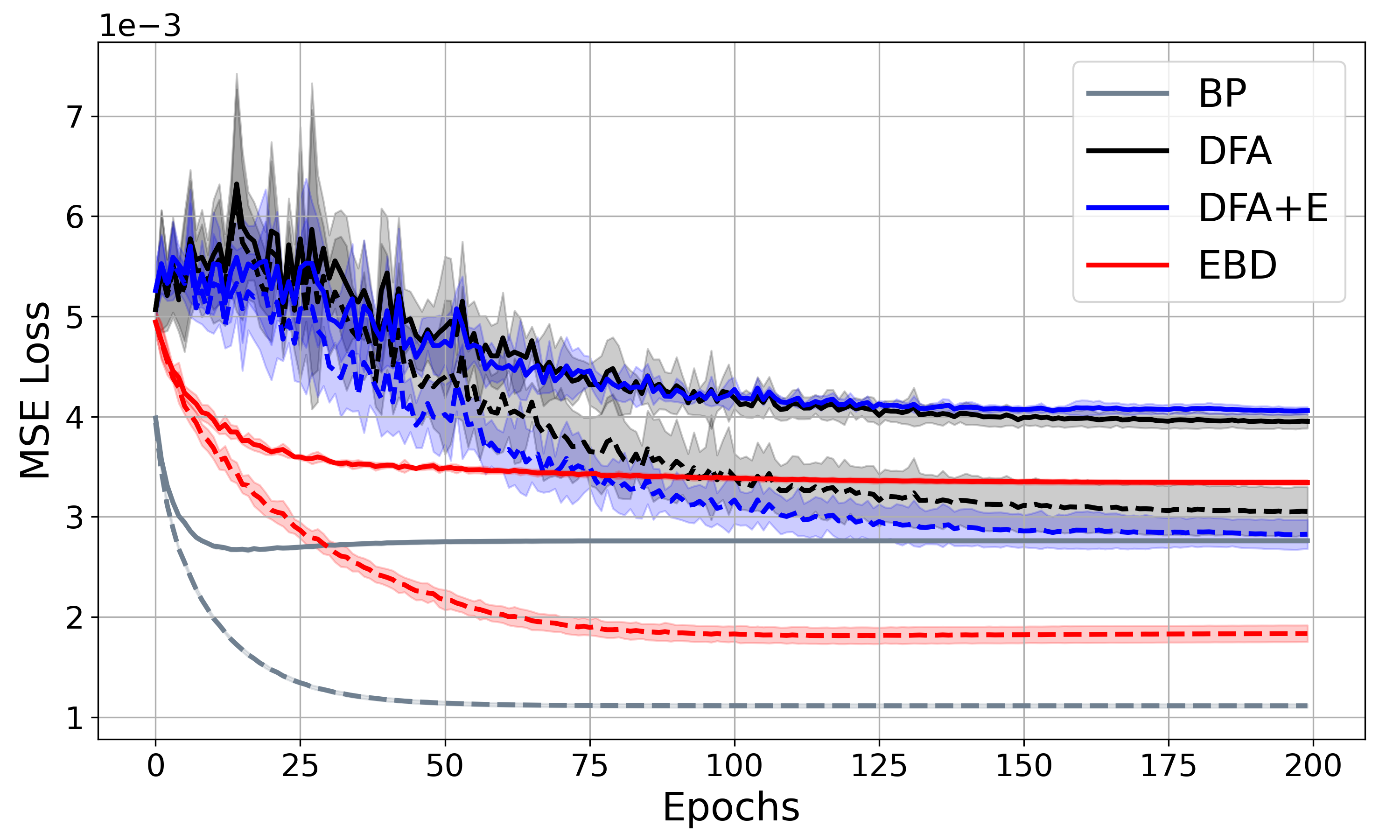} \label{fig:d2}} \\
    \vspace{-7pt}
    \subfloat[]{\includegraphics[trim = {0cm 0cm 0cm 0cm},clip,width=0.45\textwidth ]{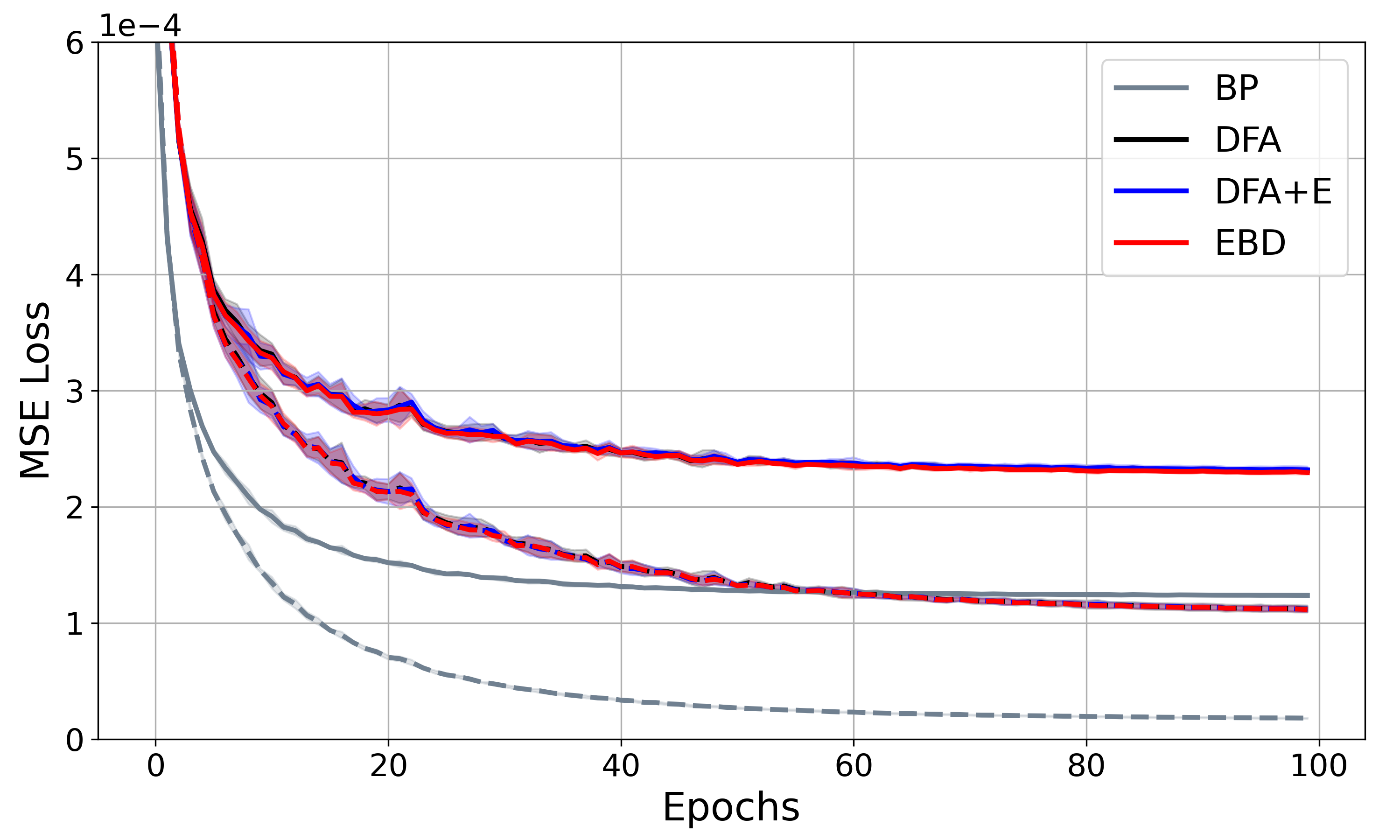} \label{fig:e2}} 
    \subfloat[]{\includegraphics[trim = {0cm 0cm 0cm 0cm},clip,width=0.45\textwidth ]{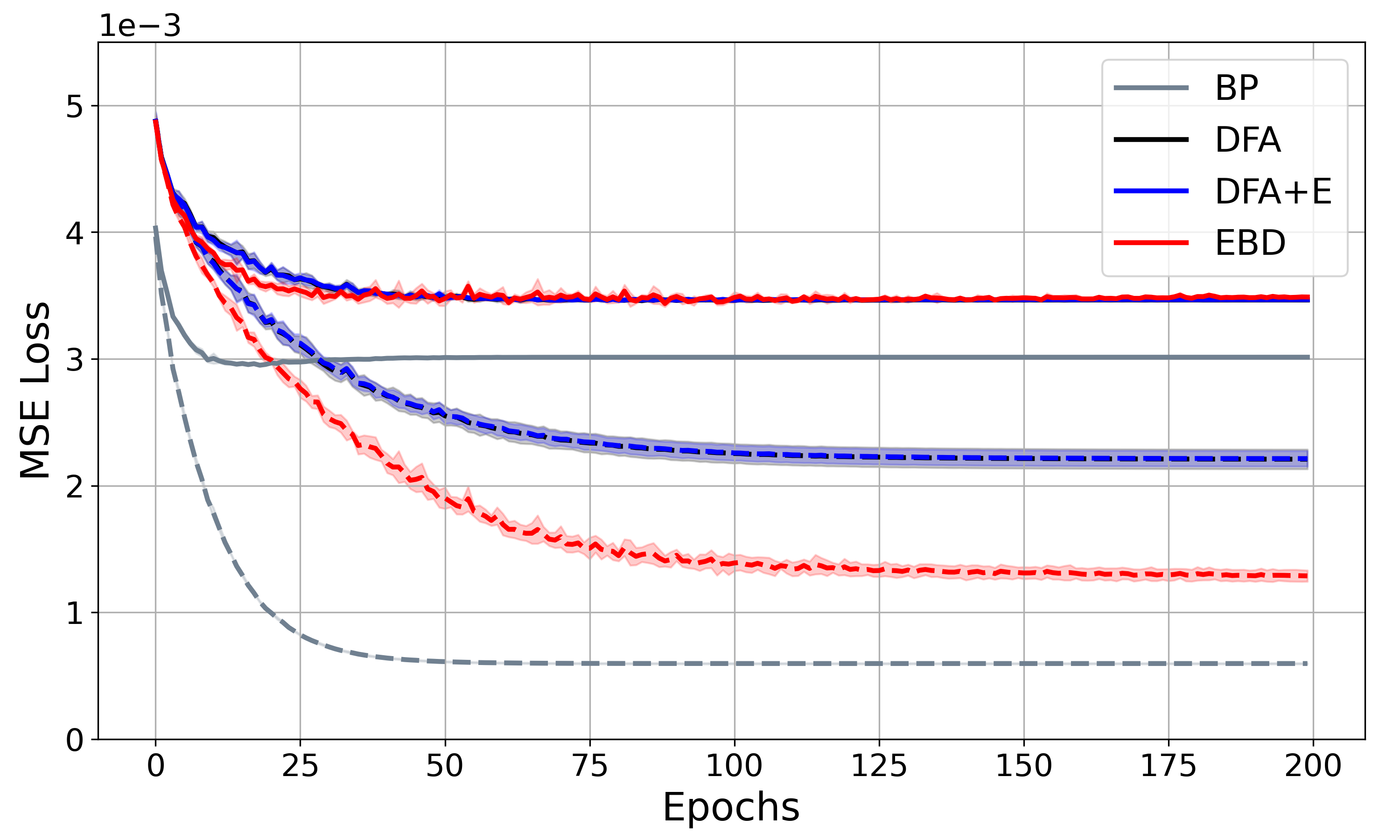} \label{fig:f2}} \\
    \vspace{-7pt}
    \subfloat[]{\includegraphics[trim = {0cm 0cm 0cm 0cm},clip,width=0.45\textwidth ]{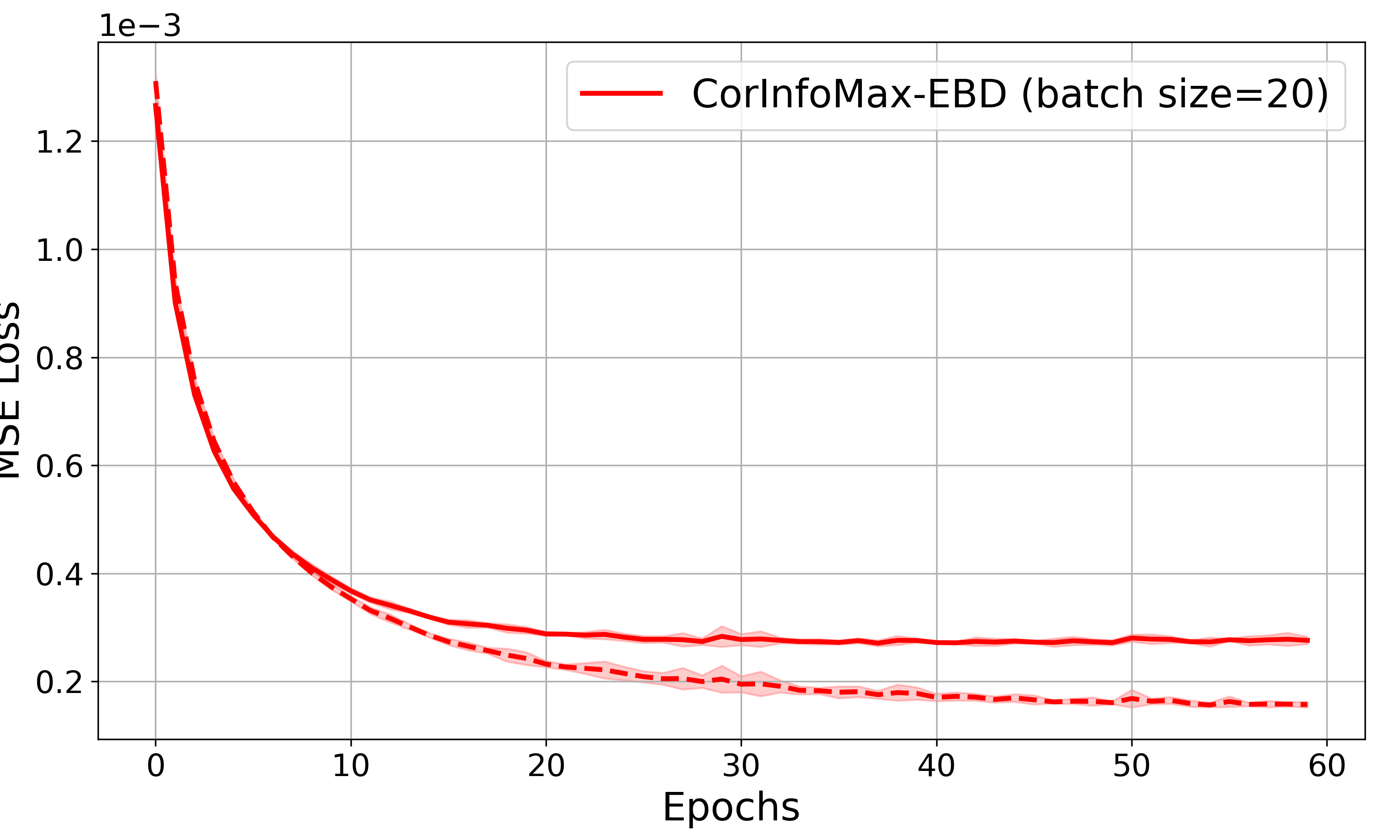} \label{fig:g2}} 
    \subfloat[]{\includegraphics[trim = {0cm 0cm 0cm 0cm},clip,width=0.45\textwidth ]{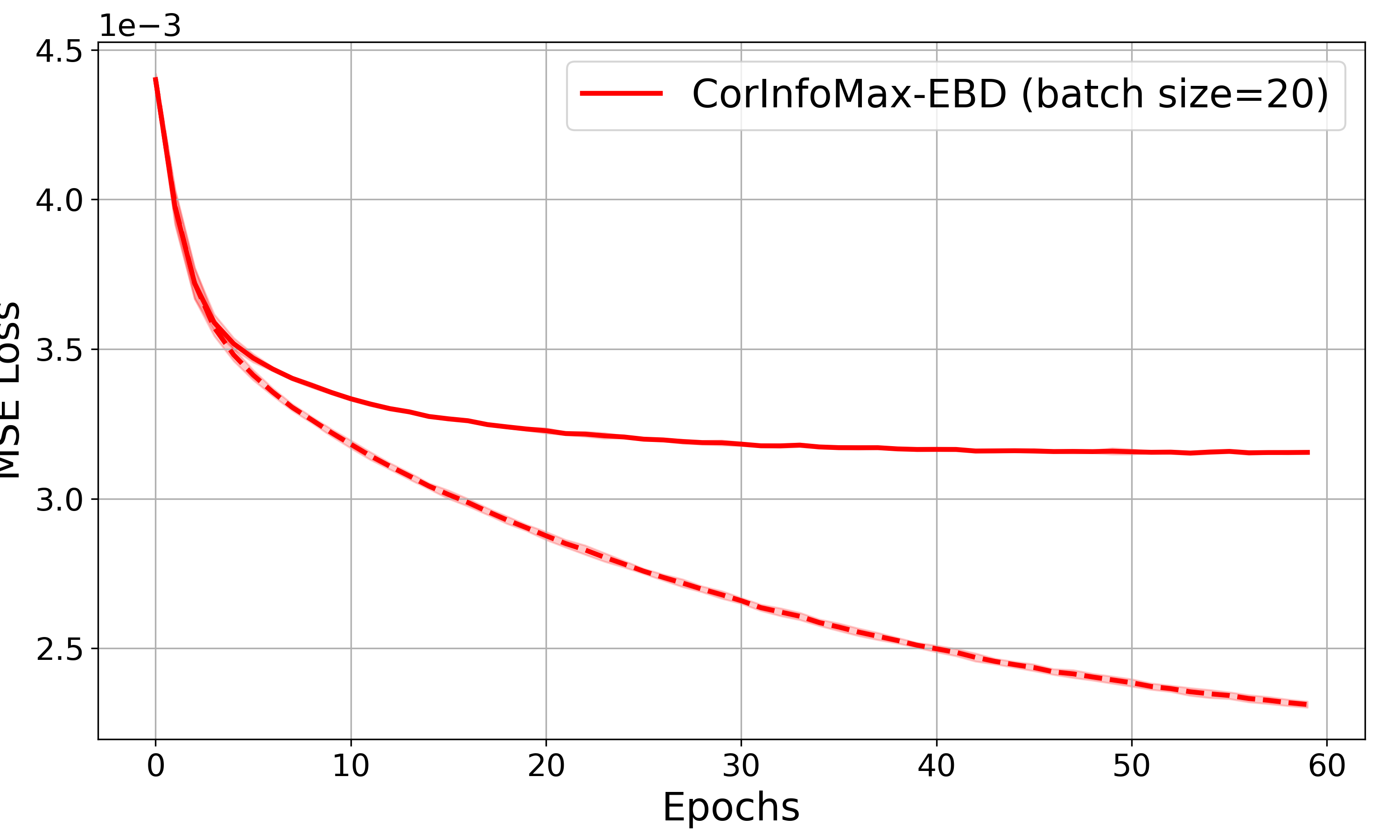} \label{fig:h2}} \\
    
    \caption{Train and test mean squared errors (MSE)  plotted as a function of algorithm epochs for various update rules (averaged over $n=5$ runs associated with the corresponding $\pm$ std envelopes) for the (a) MLP on MNIST (b) MLP on CIFAR-10 (c) CNN on MNIST (d) CNN on CIFAR-10 (e) LC on MNIST (f) LC on CIFAR-10 (g) 3-Layer CorInfoMax-EBD with batchsize=20 on MNIST (h) 3-Layer CorInfoMax-EBD with batchsize=20 on CIFAR-10.}
    \label{fig:mse_plots}
\end{figure}

\newpage
Figure \ref{fig:accuracy_plots_10lyrb1} shows the training/test accuracy curves over epochs in the 10-Layer CorInfoMax-EBD numerical experiments for the CIFAR-10 and MNIST datasets. Furthermore, Figure \ref{fig:accuracy_plots_cifar100} presents the training/test accuracy curves over epochs for the CNN model trained with the CIFAR-100 dataset. Solid lines represent test curves; dashed lines denote training curves. \\
\vspace{1.3cm}

\begin{figure}[htbp]
    \centering
    \subfloat[]{\includegraphics[trim = {0cm 0cm 0cm 0cm},clip,width=0.45\textwidth ]{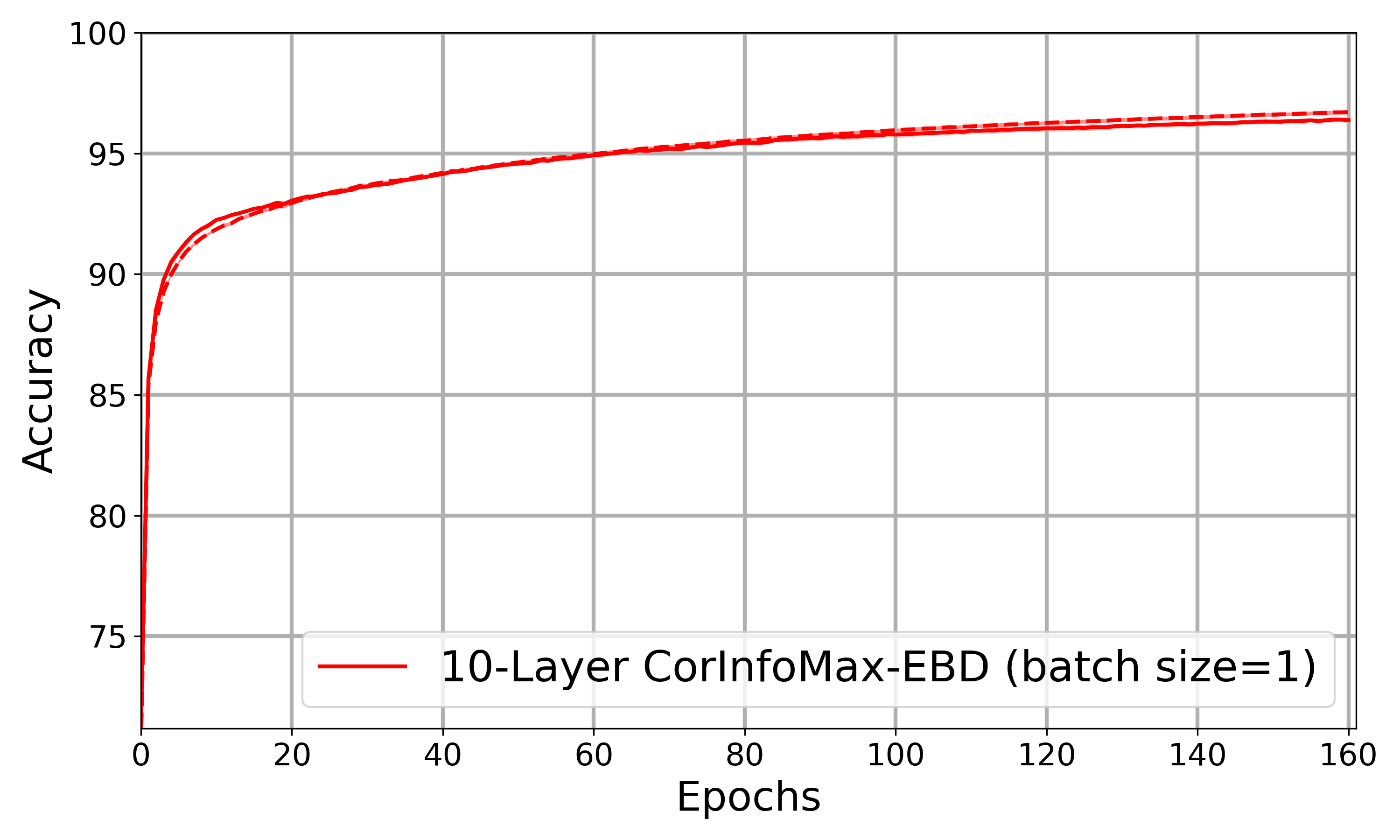} \label{fig:a3}} 
    \subfloat[]{\includegraphics[trim = {0cm 0cm 0cm 0cm},clip,width=0.45\textwidth ]{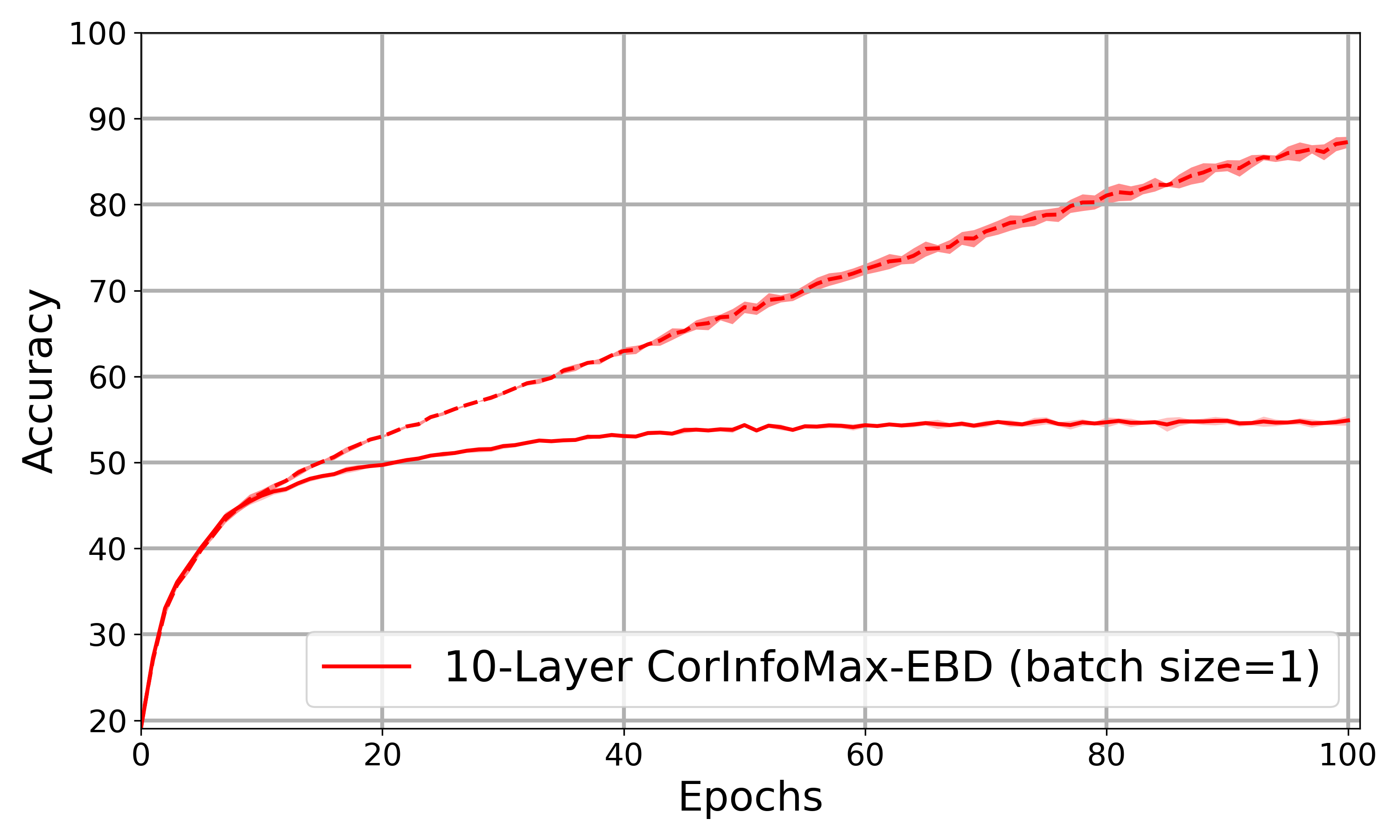}} \label{fig:b3} \\
    
    \caption{Train and test accuracies plotted as a function of algorithm epochs (averaged over $n=5$ runs associated with the corresponding $\pm$ std envelopes) for training with (a) 10-Layer CorInfoMax-EBD with batchsize=1 on MNIST (b)  10-Layer CorInfoMax-EBD with batchsize=1 on CIFAR-10.}
    \label{fig:accuracy_plots_10lyrb1}
\end{figure}

\vspace{1cm}

\begin{figure}[htbp]
    \centering
    \includegraphics[trim = {0cm 0cm 0cm 0cm},clip,width=0.55\textwidth ]{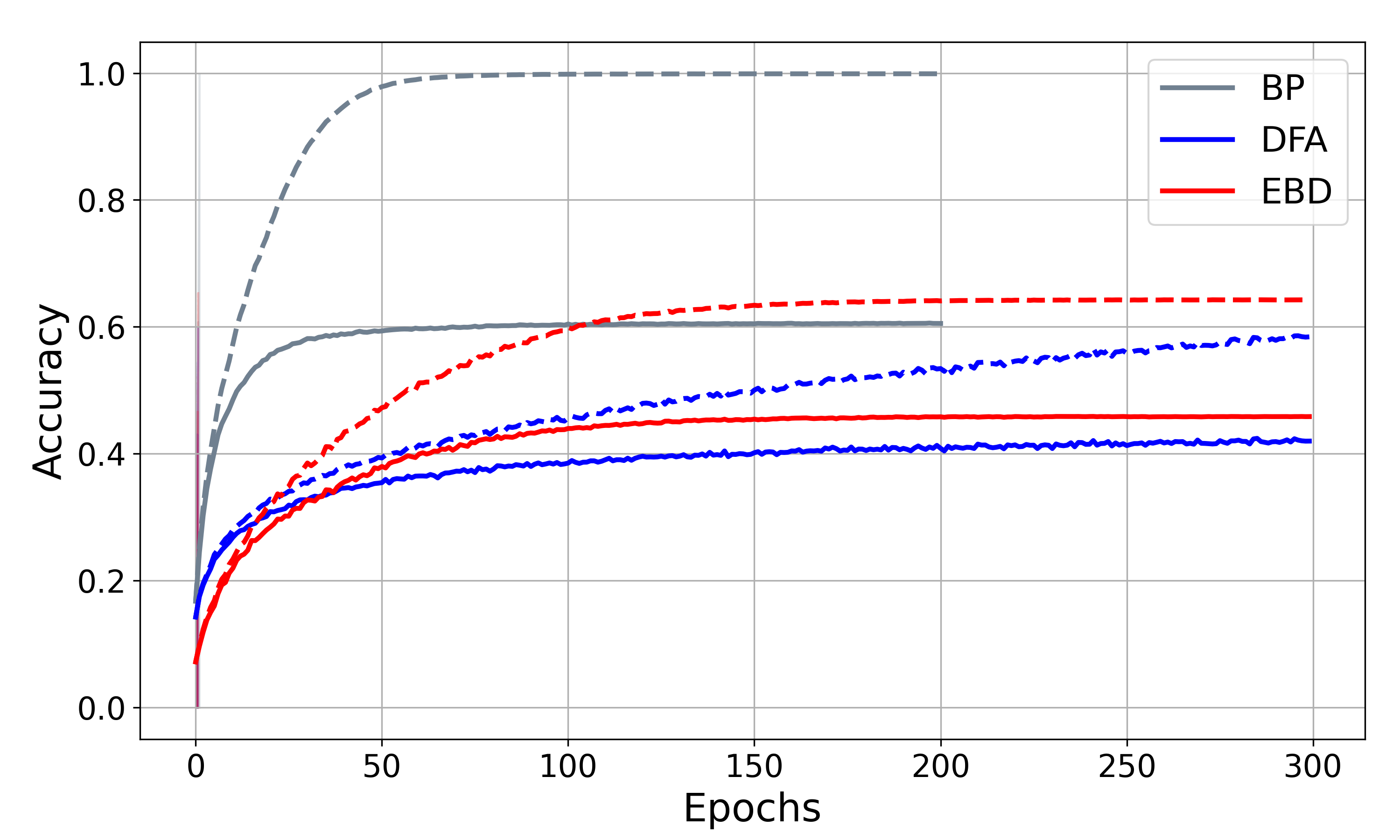}
    \caption{Train and test accuracies plotted as a function of algorithm epochs for various update rules (averaged over $n=5$ runs associated with the corresponding $\pm$ std envelopes) for training with the CNN network on CIFAR-100.}
    \label{fig:accuracy_plots_cifar100}
\end{figure}

\clearpage

\section{Calculation of the correlation between layer activations and output error}
\label{sec:correlation_calculation}

\begin{wrapfigure}{r}{0.42\textwidth}
    \centering
    \vspace{-0.45cm}
    \includegraphics[trim = {0cm 0cm 0cm 0cm},clip,width=0.375\textwidth]{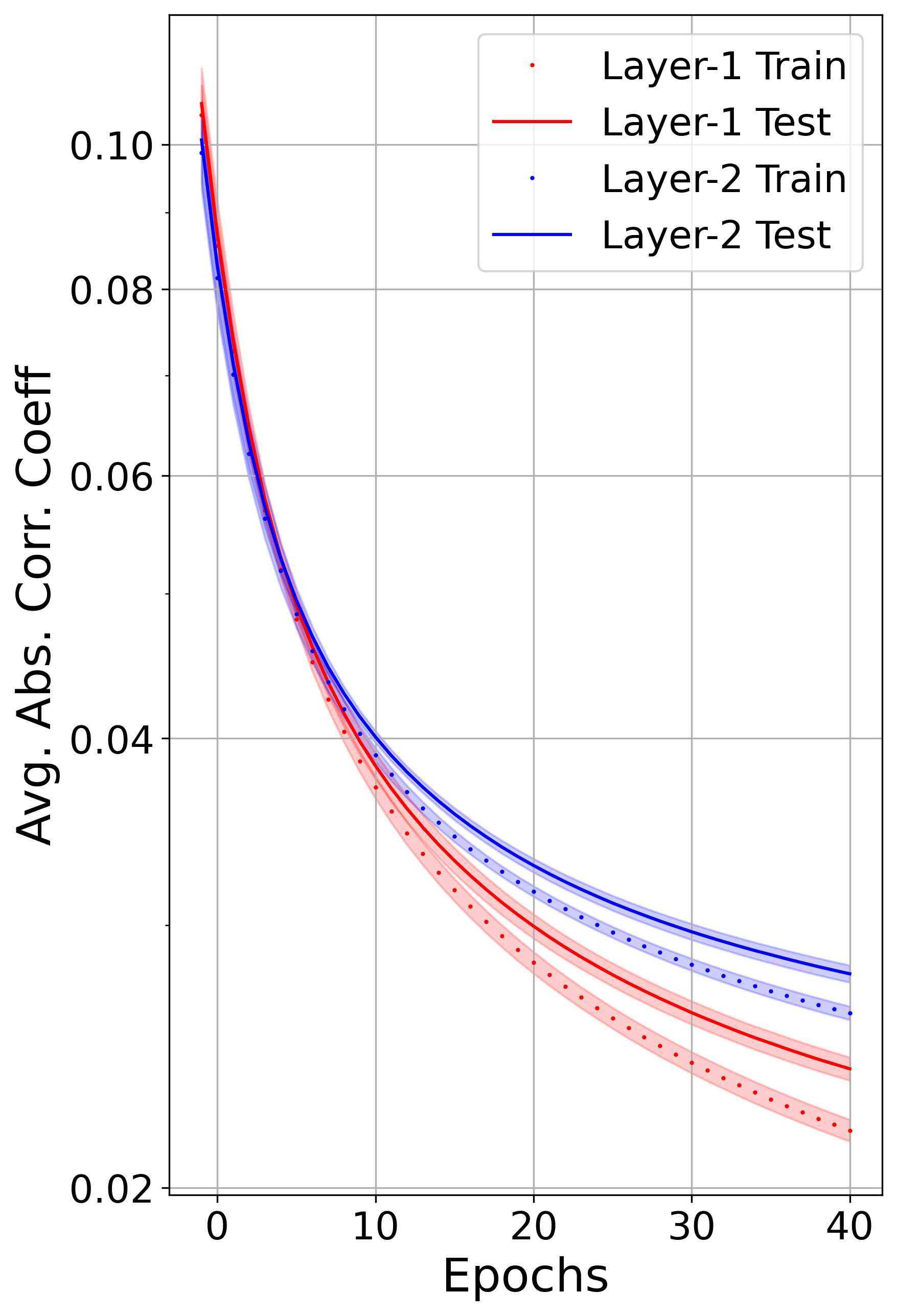}
    \caption{The evolution of the average absolute correlation between layer activations and the error signal during backpropagation training of an MLP with two hidden layers (using the MSE criterion) on the MNIST dataset, showing the correlation decrease over epochs, on both the training and test sets.}
    \vspace{-2cm}
    \label{fig:corrplot2}
\end{wrapfigure}

Figure \ref{fig:corrplot} illustrates the decrease in the average absolute correlation between hidden activations and output error during backpropagation, using a Multi-layer Perceptron (MLP) model with the architecture outlined in Table \ref{tab:cifar10_arch}, on the CIFAR-10 dataset. Details for the MSE based training and the Cross-Entropy based training are explained in Appendices~\ref{app:corr_mse} and \ref{app:corr_ce} respectively.

\subsection{Correlation in the mean squared error (MSE) criterion-based training}
\label{app:corr_mse}
The MLP models are trained using the Stochastic Gradient Descent (SGD) optimizer with a small learning rate of $10^{-4}$ and the MSE criterion. In both plots, the initial value represents the correlation before training begins. The reduction in correlation observed during training provides insight into the core principle of the EBD algorithm.

To compute these correlations, we apply a batched version of Welford's algorithm \citep{chan1982updating}, which efficiently calculates the Pearson correlation coefficient between hidden activations and errors in a memory-efficient way by using streaming statistics.

Welford's algorithm works by accumulating the necessary statistics (e.g., sums and sums of squares) across batches of data and finalizing the correlation computation only after all data has been processed, avoiding the need to store all hidden activations simultaneously.

Given the hidden activations \( \rvh^{(k)} \in \mathbb{R}^{b \times N^{(k)}} \), where \( b \) is the batch size and \( N^{(k)} \) is the number of hidden units, and the errors \( \ber\in \mathbb{R}^{b \times k} \), where \( k \) is the number of output dimensions (e.g., classes); the goal is to compute the Pearson correlation coefficient between activations \( h_i \) for each hidden unit \( i \) and the corresponding error values across all samples as:
\begin{align*}
\label{eq:pearsoncorrelation}
\rho^{(k)} = \frac{\text{Cov}(\rvh^{(k)}, \ber)}{\sqrt{\mathbf{\sigma}_{\rvh^{(k)}}^2} \sqrt{\mathbf{\sigma}_{\ber}^2} + \epsilon}
\end{align*}

where \( \epsilon \) is a small constant for numerical stability. Finally, we compute the average of the absolute values of the correlation coefficients for each hidden layer $k$ to generate the corresponding plots. 

Figure \ref{fig:corrplot2} shows the correlation throughout training on the MNIST dataset, employing the MLP model detailed in Table \ref{tab:mnist_arch}, where we observe the correlation decay behavior similar to Figure~\ref{fig:corrplot}. 

To verify that the correlation–decay phenomenon observed for fully‑connected
networks extends to convolutional architectures, we trained a compact five‑layer
CNN consisting of three $\mathrm{Conv}\!+\!\mathrm{ReLU}\!+\!\mathrm{MaxPool}$
blocks (with $32$, $64$, and $128$ $3{\times}3$ filters, respectively) followed
by a fully–connected layer with $512$ hidden units and an output layer with
$10$ logits.  The network was optimized for $30$ epochs on \textsc{Cifar}\,–10
using mini‐batch SGD ($\eta=10^{-3}$, momentum $0.9$) and the mean‑squared error
criterion on one‑hot labels.  After every forward pass we streamed the Pearson
correlation between each hidden activation vector and the output error using
the batched Welford estimator.  Figure~\ref{fig:corr_cnn} plots
the epoch‑wise evolution of the average absolute correlation coefficient
for both training and test data.  As with the MLP in Figure~\ref{fig:corrplot},
all layers exhibit a pronounced monotonic decline, confirming that
back‑propagation progressively enforces the stochastic orthogonality of hidden
activations and output errors in convolutional networks as well, as expected.

\begin{figure}[t]
  \centering
  \includegraphics[width=0.65\linewidth]{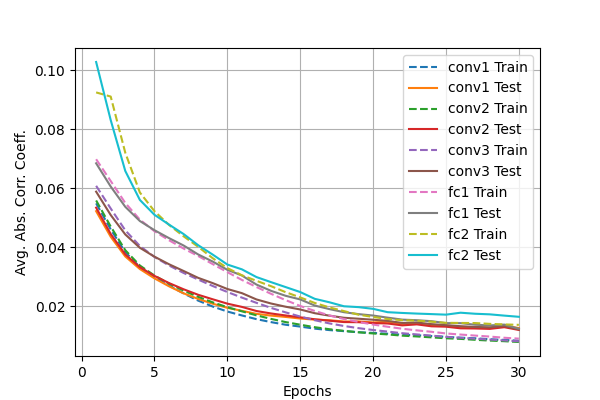}
  \caption{The evolution of the average absolute correlation between layer
  activations and the error signal during back‑propagation training of a CNN on
  CIFAR–10.  The CNN contains three convolutional layers
   and two fully–connected layers  and is
  trained with the MSE criterion.  The correlation decreases over epochs on both
  the training and test sets, mirroring the behavior observed for the MLP architecture and supporting the generality of the correlation
  decay across architectures.}
  \label{fig:corr_cnn}
\end{figure}

\subsection{Correlation in the cross-entropy criterion-based training}
\label{app:corr_ce}
Although the stochastic orthogonality property is specifically associated with the MSE loss, we also explored the dynamics of cross-correlation between layer activations and output errors when cross-entropy is used as the training criterion. 

With the same experimental setup as described in Appendix~\ref{app:corr_mse}, but replacing the MSE loss with cross-entropy, we obtained the correlation evolution curves shown in Figure \ref{fig:corrplotcrossentropy} for CIFAR-10 and in Figure \ref{fig:corrplotcrossentropy2} for MNIST dataset. Notably, the correlation between layer activations and output errors still decreases over epochs, despite the change in the loss function.

\begin{figure}[ht]
    \centering
    \subfloat[]{%
        \includegraphics[trim={0cm 0cm 0.0cm 0.0cm}, clip, width=0.320\textwidth]{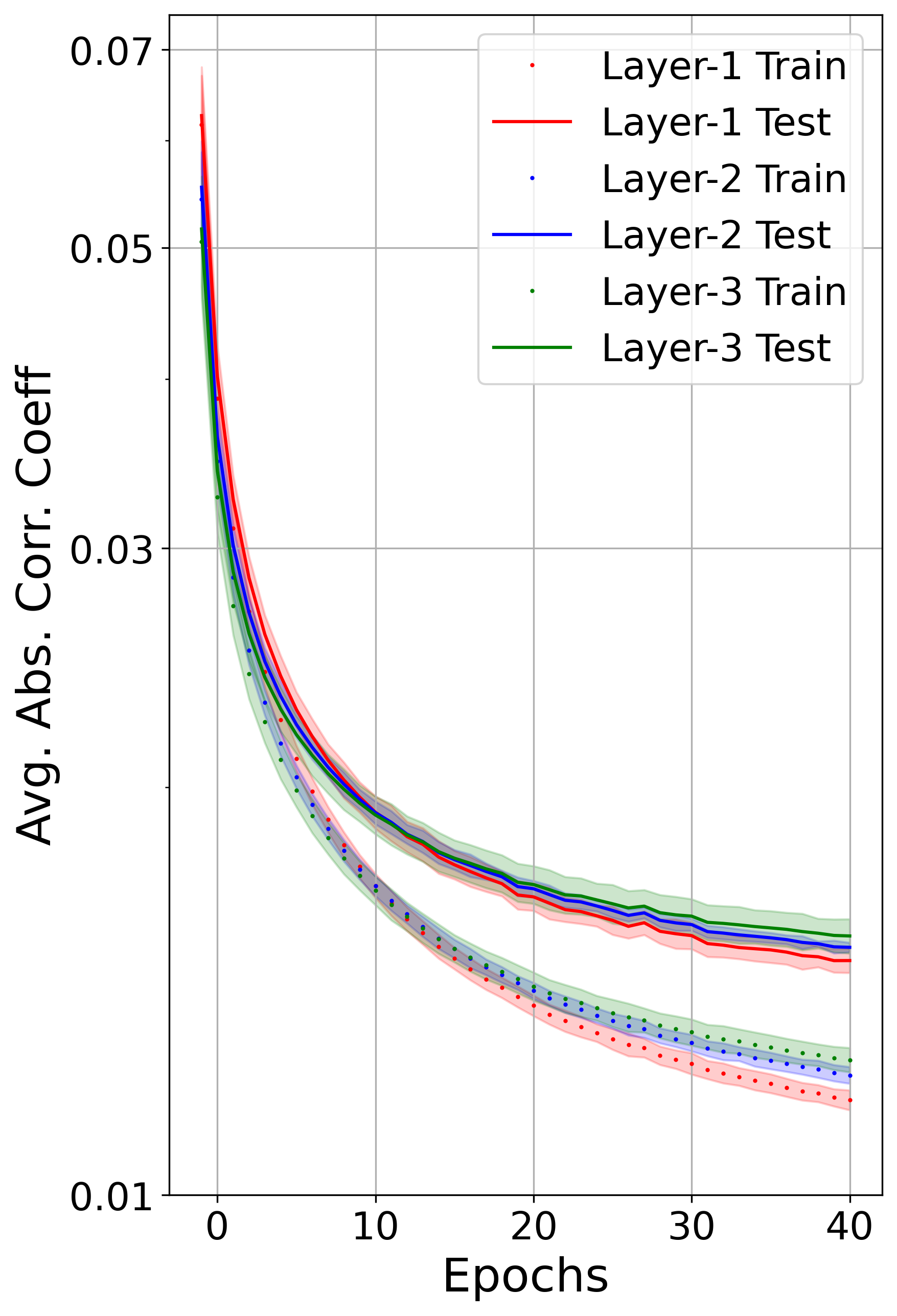}%
        \label{fig:corrplotcrossentropy}%
    }%
    \hspace{15pt}
    \subfloat[]{%
        \includegraphics[trim={0cm 0cm 0.0cm 0.0cm}, clip, width=0.320\textwidth]{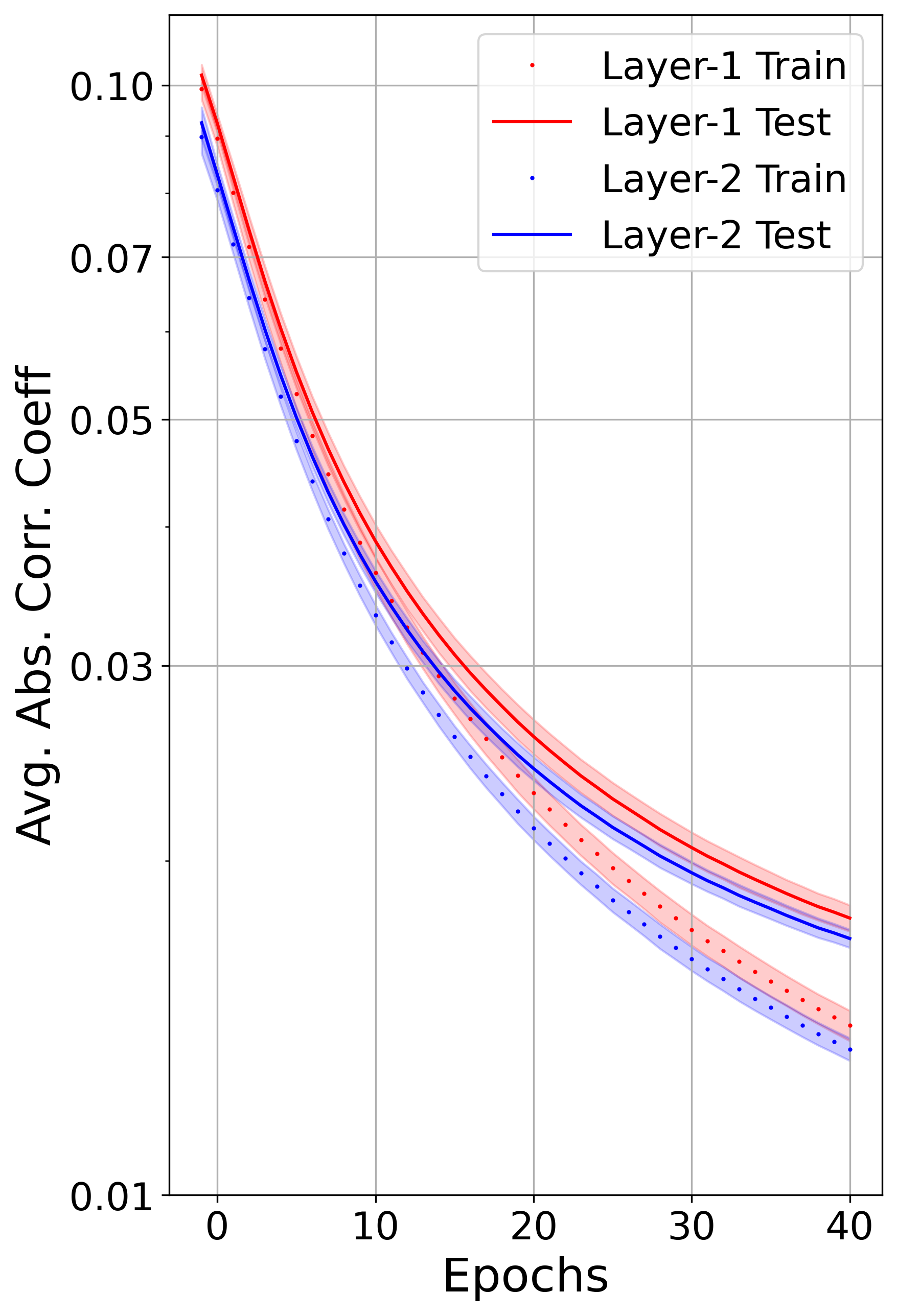}%
        \label{fig:corrplotcrossentropy2}%
    }
    \caption{Evolution of the average absolute correlation between layer activations and output errors during backpropagation training of an MLP with three hidden layers, trained using cross-entropy loss. (a) CIFAR-10 dataset and (b) MNIST dataset. Despite the use of cross-entropy, the correlation decreases similarly to the MSE criterion.}
    \label{fig:bothfigures2}
\end{figure}

\end{document}